\newtheorem{defn}{Definition}
\newtheorem{remark}[defn]{Remark}
\newtheorem{prop}[defn]{Proposition}
\newtheorem{lemma}[defn]{Lemma}
\newtheorem{thm}[defn]{Theorem}
\newtheorem{cor}[defn]{Corollary}
\newcommand{\myref}[2]{\ref{#1}~\ref{#1:#2}}
\newcommand{\lmat}{\begin{bmatrix}}
\newcommand{\rmat}{\end{bmatrix}}
\newcommand{\NN}{\mathbb{N}}
\newcommand{\RR}{\mathbb{R}}
\newcommand{\ZZ}{\mathbb{Z}}
\newcommand{\diag}{\operatorname{diag}}
\newcommand{\dom}{\operatorname{dom}}
\newcommand{\ran}{\operatorname{ran}}
\newcommand{\relu}{\operatorname{ReLU}}
\newcommand{\SNN}{\mathrm{SNN}}
\newcommand{\SNNirr}{\mathrm{SNN}_{\mathrm{irr}}}
\newcommand{\st}{\operatorname{st}}
\newcommand{\PZ}{\mathrm{PZ}}
\newcommand{\brak}[1]{\langle #1\rangle}
\newif\ifpreprint
\title{A Classification of \texorpdfstring{$G$}{G}-Invariant Shallow Neural Networks}
\author{%
Devanshu Agrawal \& James Ostrowski \\
Department of Industrial and Systems Engineering \\
University of Tennessee \\
Knoxville, TN 37996 \\
\texttt{dagrawa2@vols.utk.edu, jostrows@utk.edu} \\
}
\begin{document}

\maketitle

\begin{abstract}
When trying to fit a deep neural network (DNN) to a $G$-invariant target function with $G$ a group, it only makes sense to constrain the DNN to be $G$-invariant as well. 
However, there can be many different ways to do this, thus raising the problem of ``$G$-invariant neural architecture design'': 
What is the optimal $G$-invariant architecture for a given problem? 
Before we can consider the optimization problem itself, we must understand the search space, the architectures in it, and how they relate to one another. 
In this paper, we take a first step towards this goal; 
we prove a theorem that gives a classification of all $G$-invariant single-hidden-layer or ``shallow'' neural network ($G$-SNN) architectures with ReLU activation for any finite orthogonal group $G$, 
and we prove a second theorem that characterizes the inclusion maps or ``network morphisms'' between the architectures that can be leveraged during neural architecture search (NAS). 
The proof is based on a correspondence of every $G$-SNN to a signed permutation representation of $G$ acting on the hidden neurons; 
the classification is equivalently given in terms of the first cohomology classes of $G$, thus admitting a topological interpretation. 
The $G$-SNN architectures corresponding to nontrivial cohomology classes have, to our knowledge, never been explicitly identified in the literature previously. 
Using a code implementation, we enumerate the $G$-SNN architectures for some example groups $G$ and visualize their structure. 
Finally, we prove that architectures corresponding to inequivalent cohomology classes coincide in function space only when their weight matrices are zero, and we discuss the implications of this for NAS.
\end{abstract}

\section{Introduction}
\label{sec:intro}

When trying to fit a deep neural network (DNN) to a target function that is known to be $G$-invariant with respect to a group $G$, it is desirable to enforce $G$-invariance on the DNN as prior knowledge. 
This is a common scenario in many applications such as computer vision, where the class of an object in an image may be independent of its orientation~\citep{veeling2018rotation}, or point clouds that are permutation-invariant~\citep{qi2017pointnet}. 
Numerous $G$-invariant DNN architectures have been proposed over the years, including $G$-equivariant convolutional neural networks ($G$-CNNs)~\citep{cohen2016group}, $G$-equivariant graph neural networks~\citep{maron2019invariant}, and a DNN stacked on a $G$-invariant sum-product layer~\citep{kicki2020computationally}. 
However, it is unclear which of these architectures a practitioner should choose for a given problem, and even after one is selected, additional design choices must be made; 
for $G$-CNNs alone, the practitioner must select a sequence of representations of $G$ to determine the composition of layers, and it is unknown how best to do this. 
Moreover, despite a complete classification of $G$-CNNs~\citep{kondor2018generalization, cohen2019general}, it is unknown if every $G$-invariant DNN is a $G$-CNN, 
and hence the ``optimal'' $G$-invariant architecture may not even exist in the space of $G$-CNNs. 

For some architectures, universality theorems exist guaranteeing the approximation of any $G$-invariant function with arbitrarily small error~\citep{maron2019universality, ravanbakhsh2020universal, kicki2020computationally}, 
and it is thus tempting to conclude that these universal architectures are sufficient for all $G$-invariant problems. 
However, it is well-known that universality~\citep{cybenko1989approximations} alone is not a sufficient condition for a good DNN model and that the function subspaces that a network traverses as it grows to the universality limit is just as important as the limit itself. 
This suggests that the way in which a DNN is constrained to be $G$-invariant does matter, and different $G$-invariant architectures may be suitable for different problems. 
This raises the fundamental question: 
For a given problem, what is the ``best'' way to constrain the parameters of a DNN such that it is $G$-invariant?

This paper takes a first step towards answering the above question. 
Specifically, before we can consider the optimization problem for the best $G$-invariant architecture, we must understand the search space: 
What are all the possible ways to constrain the parameters of a DNN such that it is $G$-invariant, and how are these different $G$-invariant architectures related to one another? 

The above is a special case of the broader and more fundamental problem of neural architecture design. 
One of the most prominent approaches to this problem in the literature is neural architecture search (NAS), which at its core is trial-and-error~\citep{elsken2019neural}. 
While trial-and-error is---in principle---straightforward for determining, e.g., the optimal depth or hidden widths of a DNN, it is less clear for $G$-invariant architectures, where a practitioner does not even know all their options. 
More generally, NAS presupposes knowledge about which architectures are in the search space, which ones are not, which ones are equivalent or special cases of others, and how best one should move from one architecture to another. 
Thus, to apply even the simplest approach to $G$-invariant neural architecture design, we must first be able to enumerate all $G$-invariant architectures.

Our main result is Thm.~\ref{thm:main}, which gives a classification of all $G$-invariant single-hidden-layer or ``shallow'' neural network ($G$-SNN) architectures with rectified linear unit (reLU) activation, for any finite orthogonal group $G$ acting on the input space. 
More precisely, every $G$-SNN architecture can be decomposed into a sum of ``irreducible'' ones, and Thm.~\ref{thm:main} classifies these. 
The classification is based on a correspondence of each irreducible architecture to a representation of G in terms of its action on the hidden neurons via so-called ``signed permutations'', 
where the representation is required to satisfy an additional condition to eliminate degenerate (linear) architectures and redundant architectures equivalent to simpler ones. 
The classification then boils down to the classification of these representations. 
These representations, and hence the corresponding architectures as well, are classified in terms of the first cohomology classes of $G$ and thus admit a topological interpretation. 
We note that, while connections between neural networks and the group of signed permutations have been previously made in the literature~\citep{ojha2000enumeration, negrinho2014orbit, arjevani2020analytic}, to our knowledge, no such connection has yet been leveraged to begin a classification program of $G$-invariant architectures.

We also prove Thm.~\ref{thm:ai}, which characterizes the ``network morphisms'' linking irreducible $G$-SNN architectures in architecture space. 
In NAS, network morphisms furnish a topology on architecture space and describe how one should move from one architecture to another during the search~\citep{wei2016network}. 
Taken together, Thms.~\ref{thm:main}\&\ref{thm:ai} give a complete description of $G$-SNN architecture space.

This paper is perhaps most similar in spirit to the works of \citet{kondor2018generalization} and \citet{cohen2019general} and draws on similar mathematical machinery; 
like them, this paper's contribution is also primarily theoretical. 
\citet{kondor2018generalization} prove that every $G$-invariant DNN is a $G$-CNN under the assumption that every affine layer is $G$-equivariant; 
that this is true without the assumption is only conjectured. 
\citet{cohen2019general} generalize this to $G$-CNNs where hidden activations are vector fields and provide a classification of all $G$-CNNs, 
but the conjecture of \citet{kondor2018generalization} is left open. 
In contrast to these works, in our paper, we do not assume  the pre-activation affine transformation to be $G$-equivariant-- only that the whole network is $G$-invariant. 
Thus, a future extension of Thm.~\ref{thm:main} to deep architectures would either prove or refute the cited conjecture, at least for ReLU networks. 
Moreover, these works do not explicitly work out the group representations compatible with ReLU, 
and other works~\citep{cohen2016group, cohen2019gauge} consider only unsigned permutations with ReLU. 
In contrast, our classification reveals $G$-SNN architectures (namely, those corresponding to proper signed permutation representations or nontrivial cohomology classes) that, to our knowledge, have never been explicitly identified in the literature previously.

We also note the work of \citet{maron2019invariant}, who classify all $G$-equivariant linear layers for graph neural networks; 
however, they restrict their attention to unsigned permutation representations only, 
and there classification is again not guaranteed to contain all $G$-invariant ReLU networks.

The remainder of the paper is organized as follows: 
In Sec.~\ref{sec:rho}, we give a classification of the ``signed permutation representations'' of $G$ and relate these representations to the cohomology classes of $G$.%
\footnote{We assume some familiarity with group theory including semidirect products and quotient groups, conjugacy classes of subgroups, and group action~\citep[see][]{herstein2006topics}.} %
Then in Sec.~\ref{sec:gsnn}, we build towards and state our main classification theorem of $G$-SNN architectures. 
While Sec.~\ref{sec:rho} makes little reference to $G$-SNNs, presenting it upfront helps to streamline the exposition in Sec.~\ref{sec:gsnn}, with much of the notation and terminology established. 
In Sec.~\ref{sec:examples}, we visualize the $G$-SNN architectures for some example groups $G$, 
and in Sec.~\ref{sec:remarks}, we make a number of remarks including a theorem on the ``network morphisms'' between $G$-SNN architectures. 
Finally, in Sec.~\ref{sec:conclusion}, we end with conclusions and next steps towards the problem of $G$-invariant neural architecture design.

\section{Signed permutation representations}
\label{sec:rho}

\subsection{Preliminaries}

Throughout this paper, let $G$ be a finite group of $m\times m$ orthogonal matrices. 
Let $\mathcal{P}(n)$ be the group of all $n\times n$ permutation matrices 
and $\mathcal{Z}(n)$ the group of all $n\times n$ diagonal matrices with diagonal entries $\pm 1$. 
Let $\PZ(n) = \mathcal{P}(n)\ltimes\mathcal{Z}(n)$, which is the group of all \emph{signed permutations}-- i.e., 
the group of all permutations and reflections of the standard orthonormal basis $\{e_1,\ldots,e_n\}$. 
This group is also called the hyperoctahedral group in the literature~\citep{baake1984structure}.

A \emph{signed permutation representation} (signed perm-rep) of degree $n$ of $G$ is a homomorphism $\rho:G\mapsto\PZ(n)$. 
Whenever we say $\rho$ is a signed perm-rep, let it be understood that its degree is $n$ unless we say otherwise. 
A signed perm-rep $\rho$ is said to be \emph{irreducible} if for every $i,j=1,\ldots,n$, there exists $g\in G$ such that $\rho(g)e_i = \pm e_j$. 
As we will see in Sec.~\ref{sec:gsnn2rho}, every $G$-SNN can be written as a sum of ``irreducible'' $G$-SNNs, 
and every irreducible $G$-SNN corresponds to an irreducible signed perm-rep. 
It is therefore sufficient for our purposes to classify all irreducible signed perm-reps of $G$; 
moreover, this need only be done up to conjugacy as seen next.

\subsection{Classification up to conjugacy}

Two signed perm-reps $\rho,\rho^{\prime}$ are said to be \emph{conjugate} if there exists $A\in\PZ(n)$ such that $\rho^{\prime}(g) = A^{-1}\rho(g)A\forall g\in G$. 
We let $\rho^{\PZ}$ denote the conjugacy class of the signed perm-rep $\rho$. 
Note that conjugation preserves the (ir)reducibility of a signed perm-rep (Prop.~\ref{prop:irrconj} in Supp.~\ref{appendix:rho_clf_conj}), 
and it thus makes sense to speak of the irreducibility of an entire conjugacy class $\rho^{\PZ}$. 
The significance of the conjugacy relation is that conjugate signed perm-reps correspond to the same $G$-SNN (see Sec.~\ref{sec:gsnn2rho}); 
we are thus interested in the classification of irreducible signed perm-reps only up to conjugacy.

Our first theorem below gives the desired classification of signed perm-reps.%
\footnote{All proofs, as well as additional lemmas and useful propositions, can be found in the supplementary material.} %
For $H,K\leq G$, let $(H, K)^G$ denote the \emph{paired conjugacy class}
\[ (H, K)^G = \{(g^{-1}Hg, g^{-1}Kg): g\in G\}. \]
Define the following set of conjugacy classes of subgroup pairs:
\[ \mathcal{C}^G_{\leq 2} = \{(H, K)^G: K\leq H\leq G\mid |H:K|\leq 2\}. \]

For every $(H, K)^G\in \mathcal{C}^G_{\leq 2}$, we define a signed perm-rep $\rho_{HK}$ as follows: 
Let $(g_1,\ldots,g_n)$ be a transversal of $G/H$ with $g_1\in K$. 
For each $i=1,\ldots,n$, define $g_{-i} = g_i h$ for some $h\in H\setminus K$ if $|H:K|=2$ and $h=1$ if $|H:K|=1$. 
Then define the signed perm-rep $\rho_{HK}$ such that $\rho_{HK}(g)e_i = e_j$ if $gg_i K = g_j K$ for every $i,j=\pm 1,\ldots,\pm n$. 

\begin{thm} \label{thm:rho_clf_conj}
We have:
\begin{enumerate}[label={(\alph*)}]
\item \label{thm:rho_clf_conj:a} 
Every $\rho_{HK}$ is irreducible.
\item \label{thm:rho_clf_conj:b} 
For every irreducible signed perm-rep $\rho^{\prime}$, there exists a unique $(H, K)^G$ such that $\rho^{\prime}$ is conjugate to $\rho_{HK}$.
\end{enumerate}
\end{thm}

Theorem~\ref{thm:rho_clf_conj} equivalently states that the set
\[ \mathcal{R}^{\PZ}(G) = \{\rho_{HK}^{\PZ}: (H, K)^G\in\mathcal{C}^G_{\leq 2}\} \]
is a partition on the set of all irreducible signed perm-reps into conjugacy classes. 
We will say $\rho_{HK}$ has \emph{type} $|H:K|$-- 
i.e., type 1 if $H=K$ and type 2 otherwise. 
For the interested reader, we note that $\rho_{HK}$ is the rep of $G$ induced from the rep $\phi:H\mapsto \{-1, 1\}$ where $K = \ker(\phi)$.

\subsection{Group cohomology}
\label{sec:cohomology}

Group cohomology offers an alternative perspective on the classification of signed perm-reps 
and is the basis for the directed graph visualizations in Sec.~\ref{sec:examples:perm} and Supp.~\ref{appendix:examples:perm}.%
\footnote{These visualizations are based on a geometric perspective of cohomology on Cayley graphs~\citep[sec. 5.9]{dructu2018geometric}; see \citep{tao2012cayley} for intuition.}
Note, however, that a technical understanding of group cohomology is not required for most of this paper, 
and we give only a high-level overview here. 
Let $\rho$ be a signed perm-rep 
and $\pi:G\mapsto\mathcal{P}(n)$ and $\omega:G\mapsto\mathcal{Z}(n)$ the unique functions satisfying $\rho(g) = \omega(g)\pi(g)\forall g\in G$. 
The function $\omega$ is called a \emph{cocycle} and describes the sign flips associated to the action of $G$ through $\rho$. 
It can be depicted using colored directed graphs as in Fig.~\ref{fig:C6_perm} (and Figs.~\ref{fig:D6_perm:1}-\ref{fig:D6_perm:3}), 
where arcs of different colors represent the actions of diffrent generators of $G$ 
and dashed arcs represent sign flips. 
The cocycle $\omega$ thus encodes topological information about how a space ``twists'' as we move through it by the action of $G$. 

If $\rho^{\prime}$ is another signed perm-rep conjugate to $\rho$ by a diagonal matrix in $\mathcal{Z}(n)$, then its corresponding cocycle $\omega^{\prime}$ is said to be \emph{cohomologous} to $\omega$, 
and the set of all cocycles cohomologous to $\omega$ is said to form a \emph{cohomology class}. 
the reason for this equivalence is that the number of sign flips around a cycle is unique only up to an even number of sign flips; 
thus, the solid vs. dashed arcs of the directed graphs in Fig.~\ref{fig:C6_perm} are not unique. 
The set of all cohomology classes associated to a single $\pi$ forms a \emph{cohomology group}, 
and it turns out that the classification of these cohomology groups and their elements gives another way to look at the classification of the signed perm-reps of $G$; 
see Prop.~\ref{prop:cohomology} in Supp.~\ref{appendix:cohomology} for details. 
Type 1 signed perm-reps are then the ones corresponding to the identity elements of these cohomology groups---i.e., the cohomology classes with no sign flips---%
and type 2 signed perm-reps correspond to the elements describing nontrivial sign flip patterns.

Finally, as we are only interested in signed perm-reps up to conjugation, we regard two cohomology classes as equivalent if their corresponding signed perm-reps are related by conjugation with a permutation matrix in $\mathcal{P}(n)$.%
\footnote{For the interested reader, this equivalence amounts to quotienting the cohomology group $\mathcal{H}^1(G, M)$ by the automorphism group of the $G$-module $(M, \pi)$; 
see Prop.~\myref{prop:cohomology}{b} in Sup.~\ref{appendix:cohomology} for details.} %
Concretely, this ensures that colored directed graphs isomorphic to those shown in Fig.~\ref{fig:C6_perm} are in fact considered equivalent to them.

\section{Classification of \texorpdfstring{$G$}{G}-SNNs}
\label{sec:gsnn}

\subsection{Canonical parameterization}
\label{sec:canonical}

A \emph{shallow neural network} (SNN) is a function $f:\RR^m\mapsto\RR$ of the form
\begin{equation} \label{eq:snn}
f(x) = a^\top\relu(Wx+b)+d,
\end{equation}
where $W\in\RR^{n\times m}$ and $a,b\in\RR^n$ for some $n$, and $d\in\RR$. 
Here $\relu$ is the \emph{rectified linear unit} activation function defined as $\relu(x) = \operatorname{max}(0, x)$ elementwise. 
The parameterization of an SNN given in Eq.~\ref{eq:snn} contains redundancies in the sense that different parameter configurations can define the same function. 
For example, applying a permutation to the rows of $a$, $b$, and $W$ generally results in a different parameter configuration but always leaves $f$ invariant. 
Also, by the identity
\begin{equation} \label{eq:relulinear}
\relu(zx) = \relu(x) - H(-z)x, x\in\RR, z\in\{-1, 1\},
\end{equation}
(where $H$ is the Heaviside step function; see Prop.~\ref{prop:relulinear} in Supp.~\ref{appendix:gsnn}), 
the reflection of one or more rows of the augmented matrix $[W\mid b]$ in Eq.~\ref{eq:snn} together with the addition of a linear term---which can be represented as the sum of two hidden neurons---leaves $f$ invariant; 
e.g., 
\[ f(x) = a^\top\relu[-(Wx+b)] + \relu[a^\top(Wx+b)] - \relu[-a^\top(Wx+b)] + d. \]
Note that these permutation and reflection redundancies form the group $\PZ(n)$. 
The lemma below will help us define a ``canonical parameterization'' in which such redundancies are eliminated. 

\begin{lemma} \label{lemma:canonical}
Let $\Theta_n$ be the set of all augmented matrices $[W\mid b]\in\RR^{n\times (m+1)}$ such that the rows of $W$ have unit norms and no two rows of $[W\mid b]$ are parallel. 
Let $\Omega_n\subset\Theta_n$ be a fundamental domain%
\footnote{If a group $G$ acts on a set $\mathcal{X}$, then a fundamental domain is a set $\Omega\subseteq\mathcal{X}$ such that $\{g\Omega: g\in G\}$ is a partition of $\mathcal{X}$. 
An example fundamental domain in $\Theta_n$ under the action of $\PZ(n)$ is given in Prop.~\ref{prop:domain} (see Supp.~\ref{appendix:canonical}).} %
under the action of $\PZ(n)$. 
Let $f:\RR^m\mapsto\RR$ be an SNN of the form in Eq.~\ref{eq:snn}. 
Then there exist unique $n_*\in\NN$, $[W_*\mid b_*]\in\Omega_{n_*}$, $a_*\in\RR^{n_*}$ with nonzero elements, $c_*\in\RR^m$, and $d_*\in\RR$ such that:
\[ f(x) = a_*^\top\relu(W_*x+b_*) + c_*^\top x+d_*\forall x\in\RR^m. \]
\end{lemma}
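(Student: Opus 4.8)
The plan is to prove existence by reducing a general SNN to the stated form through a chain of $f$-preserving moves, and to prove uniqueness from a linear-independence property of ReLU ridge functions.

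For existence I would start from $f(x)=\sum_{i=1}^n a_i\relu(w_i^\top x+b_i)$ and clean it up in stages: delete every neuron with $a_i=0$; delete every neuron with $w_i=0$ after moving the constant $a_i\relu(b_i)$ into an accumulating constant; and rescale each remaining row so $\|w_i\|=1$, compensating by $a_i\mapsto a_i\|w_i\|$. At that point every row of $[W\mid b]$ is nonzero with unit linear part, so two rows are parallel iff one is $\pm$ the other. I would then group the rows into parallelism classes; for a class with common unit representative $[w_C\mid b_C]$, letting $s_C^+$ and $s_C^-$ be the sums of the $a_i$ over the rows equal to $+[w_C\mid b_C]$ and to $-[w_C\mid b_C]$ respectively, Eq.~\ref{eq:relulinear} (in the form $\relu(-t)=\relu(t)-t$) rewrites the class's contribution as $(s_C^++s_C^-)\relu(w_C^\top x+b_C)-s_C^-(w_C^\top x+b_C)$; I keep one neuron with coefficient $s_C^++s_C^-$ when this is nonzero, drop it otherwise, and push the affine remainder into the accumulating $c^\top x+d$. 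The surviving neurons form a matrix in $\Theta_{n'}$ with $a\in\RR^{n'}$ having nonzero entries. Finally, since $\Omega_{n'}$ is a fundamental domain for the $\PZ(n')$-action on $\Theta_{n'}$, a unique $A\in\PZ(n')$ carries $[W\mid b]$ into $\Omega_{n'}$; applying $A$ permutes and possibly reflects rows, each reflected row being rewritten once more by Eq.~\ref{eq:relulinear}, which updates $c$ and $d$ but leaves the ReLU coefficients (hence their nonvanishing) intact. This yields the tuple $(n_*,[W_*\mid b_*],a_*,c_*,d_*)$; the degenerate case $n_*=0$ (an affine $f$) is included.

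The crux of uniqueness is the following lemma, which I would prove first: if $\ell_1,\dots,\ell_k$ are affine functions on $\RR^m$ with nonzero linear parts and pairwise non-parallel augmented coefficient vectors, then $\relu\circ\ell_1,\dots,\relu\circ\ell_k$, the coordinates $x_1,\dots,x_m$, and the constant are linearly independent over $\RR$. Suppose $\sum_i\alpha_i\relu(\ell_i)+\gamma^\top x+\delta\equiv 0$. Fixing $i$, the hyperplanes $H_j=\ell_j^{-1}(0)$ are pairwise distinct, so $H_i$ is not covered by the lower-dimensional sets $H_i\cap H_j$ ($j\neq i$) and I can choose $x_0\in H_i$ off all other $H_j$; on a small ball $B$ around $x_0$ avoiding those $H_j$, each $\relu(\ell_j)$ with $j\neq i$ is affine, so the identity restricts on $B$ to $\alpha_i\relu(\ell_i)+(\text{affine})\equiv 0$, and subtracting its restriction to $B\cap\{\ell_i<0\}$ from its restriction to $B\cap\{\ell_i>0\}$ gives $\alpha_i\ell_i\equiv 0$ on a nonempty open set, forcing $\alpha_i=0$. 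Hence all $\alpha_i=0$, then $\gamma=0$ and $\delta=0$.

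For uniqueness itself, given two tuples as in the statement representing the same $f$, I would subtract to get $\sum_i a_i\relu(\ell_i)-\sum_j\tilde a_j\relu(\tilde\ell_j)+(\text{affine})\equiv 0$ with all $a_i,\tilde a_j$ nonzero. Each $\ell_i$ is parallel to at most one $\tilde\ell_j$ (two such $\tilde\ell_j$ would be parallel to each other), so parallelism gives a partial matching; for a matched pair $\ell_i=\pm\tilde\ell_j$, Eq.~\ref{eq:relulinear} rewrites $a_i\relu(\ell_i)-\tilde a_j\relu(\tilde\ell_j)$ as $(a_i-\tilde a_j)\relu(\ell_i)$ plus an affine term. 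After this collapse the surviving ReLU terms have pairwise non-parallel coefficients, so the lemma makes every surviving coefficient vanish; an unmatched $\ell_i$ would survive with coefficient $a_i\neq 0$, so the matching must be a bijection. Therefore $n_*=\tilde n$ and $[\tilde W\mid\tilde b]=A[W\mid b]$ for some $A\in\PZ(n_*)$; as both lie in the fundamental domain $\Omega_{n_*}$ they coincide, and applying the lemma to the difference of the two (now over identical ridge functions) forces $a_*=\tilde a$, $c_*=\tilde c$, $d_*=\tilde d$. I expect the main obstacle to be the linear-independence lemma together with this cross-family parallel-row bookkeeping; once they are in place the fundamental-domain property (and the fact that the $\PZ(n)$-action on $\Theta_n$ is free, a signed permutation fixing a matrix with pairwise non-parallel nonzero rows being the identity) closes everything.
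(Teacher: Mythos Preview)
Your proof is correct and complete, but it proceeds by a genuinely different route than the paper. The paper argues \emph{recovery from the function}: it identifies the non-differentiability locus of $f$ as a union of $n_*$ distinct hyperplanes, reads off $[W_*\mid b_*]$ from their equations (normalized and placed in $\Omega_{n_*}$), then determines each $a_{*i}$ from the gradient jump of $f$ across the $i$th hyperplane, and finally solves for $c_*,d_*$ from the gradient and value of $f$ at a single smooth point. Existence and uniqueness thus fall out of one geometric reconstruction. You instead separate the two: existence via an explicit reduction algorithm (delete, normalize, merge parallel classes, move into $\Omega$), and uniqueness via a clean linear-independence lemma for ReLU ridge functions with pairwise non-parallel augmented coefficients. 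What your approach buys is modularity---the independence lemma is a reusable standalone fact, and your existence argument makes explicit how an arbitrary SNN parameterization is transformed into canonical form. What the paper's approach buys is a direct interpretation of the canonical parameters as geometric invariants of $f$ (kink locus, gradient jumps), which foreshadows how $G$-invariance will constrain them in Lemma~\ref{lemma:gsnn2rho}. Both arguments ultimately hinge on the same phenomenon---that $f$ is smooth away from the hyperplanes $\{w_{*i}^\top x+b_{*i}=0\}$ and has a nontrivial gradient jump across each---but you package it as linear independence while the paper packages it as parameter recovery.
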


Here, since the rows of $[W_*\mid b_*]$ are pairwise nonparallel, then no two hidden neurons can form a linear term; 
all such hidden neurons are collected in the unique term $c_*^\top x$. 
As a result, $n_*\leq n$ as $n_*$ is the smallest number of hidden neurons possible. 
We refer to the unique parameterization of the SNN $f$ in terms of $(a_*, b_*, c_*, d_*, W_*)$ as its \emph{canonical parameterization}, 
and the SNN is then said to be in \emph{canonical form}. 
We call $b_*$, $W_*$, and the rows of $W_*$ the canonical \emph{bias}, \emph{weight matrix}, and \emph{weight vectors} of the $G$-SNN respectively. 
Note that the canonical parameterization is a function of the choice of fundamental domain $\Omega_{n_*}$.

\subsection{\texorpdfstring{$G$}{G}-SNNs and signed perm-reps}
\label{sec:gsnn2rho}

For $f$ to be a $G$-invariant SNN ($G$-SNN), the action of $g\in G$ on the domain of $f$ must be equivalent to one of the redundancies in the parameterization of SNNs. 
In the canonical parameterization, however, this means that the parameters of $f$ and of $f\circ g$ must be identical. 
This places constraints on the canonical parameters of a $G$-SNN, as made precise in the lemma below. 

\begin{lemma} \label{lemma:gsnn2rho}
Let $f:\RR^m\mapsto\RR$ be an SNN expressed in canonical form with respect to a fundamental domain $\Omega_{n_*}\subset\Theta_{n_*}$. 
Then $f$ is $G$-invariant if and only if there exists a unique signed perm-rep $\rho:G\mapsto\PZ(n_*)$ such that the canonical parameters of $f$ satisfy the following equations for all $g\in G$:
\begin{align}
\rho(g)W_* &= W_*g \label{eq:gsnn2rho:W} \\
\pi(g)a_* &= a_* \label{eq:gsnn2rho:a} \\
\rho(g)b_* &= b_* \label{eq:gsnn2rho:b} \\
gc_* &= c_* + \frac{1}{2}(I-g)W_*^\top a_*. \label{eq:gsnn2rho:c}
\end{align}
\end{lemma}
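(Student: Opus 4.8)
The plan is to exploit the uniqueness already packaged in the canonical parameterization (Lemma~\ref{lemma:canonical}): $f$ is $G$-invariant iff $f\circ g = f$ for all $g\in G$, and since $f\circ g$ is itself an SNN, it has its own canonical parameterization; $G$-invariance is then equivalent to the canonical parameters of $f\circ g$ agreeing with those of $f$ for every $g$. So the first step is to compute the canonical form of $f\circ g$ from that of $f$. Writing $f(x) = a_*^\top\relu(W_*x+b_*) + c_*^\top x + d_*$, we get $f(gx) = a_*^\top\relu(W_*gx+b_*) + c_*^\top gx + d_*$. The augmented matrix $[W_*g\mid b_*]$ still has unit-norm rows (since $g$ is orthogonal) and no two parallel rows (since $g$ is invertible and $[W_*\mid b_*]$ had that property, using that $g$ acts only on the $W$-block), so $[W_*g\mid b_*]\in\Theta_{n_*}$; but it need not lie in the fundamental domain $\Omega_{n_*}$. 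Pushing it back into $\Omega_{n_*}$ requires a unique element of $\PZ(n_*)$, and this is where the signed permutation enters.

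Concretely, for each $g$ there is a unique $A_g\in\PZ(n_*)$ with $A_g[W_*g\mid b_*]\in\Omega_{n_*}$; set $\rho(g) = A_g$. One checks $\rho$ is a homomorphism: $\rho(g_1 g_2)$ is characterized by $\rho(g_1 g_2)[W_* g_1 g_2\mid b_*]\in\Omega_{n_*}$, and $\rho(g_1)\rho(g_2)$ does the same job by two applications — here one uses that $A\mapsto A[W\mid b]$ for $A\in\PZ(n_*)$ is exactly the redundancy action (row permutations plus the identity~\eqref{eq:relulinear} for sign flips), so $\rho(g_2)$ maps $[W_*g_2\mid b_*]$ into $\Omega$ and then $\rho(g_1)$ maps $[W_*g_1g_2\mid b_*]$ there, and uniqueness of the fundamental-domain representative forces $\rho(g_1 g_2) = \rho(g_1)\rho(g_2)$ (modulo the bookkeeping that the sign-flip part contributes compensating affine terms). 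Now rewrite $f(gx)$ using $[W_*g\mid b_*] = \rho(g)^{-1}\bigl(\rho(g)[W_*g\mid b_*]\bigr)$, apply~\eqref{eq:relulinear} to move the $\mathcal Z(n_*)$-part of $\rho(g)^{-1}$ across the $\relu$ as an affine correction, and read off the canonical form of $f\circ g$: its weight/bias block is $\rho(g)[W_*g\mid b_*]$, its linear coefficient is $g^\top c_*$ plus the correction term generated by the sign flips, and its hidden-output vector is $\pi(g)a_*$ where $\pi$ is the permutation part of $\rho$. Wait — I should be careful with the direction of the action; since $g$ is orthogonal, $g^\top = g^{-1}$, and matching against $c_*^\top gx$ gives the coefficient $g^\top c_*$; the stated~\eqref{eq:gsnn2rho:c} is the rearranged form of this after collecting the $\relu$-to-linear correction, which accounts for the $\tfrac12(I-g)W_*^\top a_*$ term.

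Equating the canonical parameters of $f\circ g$ with those of $f$ for all $g$ then yields the four displayed equations directly: the weight/bias block gives $\rho(g)[W_*\mid b_*] = [W_*g\mid b_*]$, i.e.\ $\rho(g)W_* = W_*g$ and $\rho(g)b_* = b_*$, namely~\eqref{eq:gsnn2rho:W} and~\eqref{eq:gsnn2rho:b}; the hidden-output vector gives $\pi(g)a_* = a_*$, namely~\eqref{eq:gsnn2rho:a}; and the linear part gives $gc_* = c_* + \tfrac12(I-g)W_*^\top a_*$, namely~\eqref{eq:gsnn2rho:c}. Conversely, if such a $\rho$ exists with these four equations holding, then the computation above run backwards shows $f(gx) = f(x)$ for all $g,x$, so $f$ is $G$-invariant. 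Uniqueness of $\rho$ follows because $\rho(g)$ is forced: $\rho(g)W_* = W_*g$ together with the facts that $W_*$ has no zero rows and no two parallel rows (so $W_*$ determines the signed permutation of its rows unambiguously) pins down $\rho(g)\in\PZ(n_*)$; the bias equation~\eqref{eq:gsnn2rho:b} is then automatically consistent.

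The main obstacle is the careful handling of the $\mathcal Z(n_*)$ (sign-flip) component of $\rho$ when transporting $f\circ g$ back into the fundamental domain: each sign flip trades a hidden neuron for an affine term via~\eqref{eq:relulinear}, and one must verify that, after all these substitutions are collected, (i) the number of hidden neurons is still $n_*$ (no spurious neurons are created or destroyed once the dust settles, because the net effect of an element of $\PZ(n_*)$ on a canonical-form SNN is again canonical-form with the same $n_*$ — this is essentially the content of Lemma~\ref{lemma:canonical} applied to the $\PZ(n_*)$-orbit), and (ii) the accumulated affine correction is exactly $-H$-type terms summing to $\tfrac12(I-\diag(\zetaL(g)))$ acting appropriately, which after contraction with $W_*^\top a_*$ and using $\rho(g)W_* = W_*g$ produces the $\tfrac12(I-g)W_*^\top a_*$ in~\eqref{eq:gsnn2rho:c}. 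Getting the signs and the left-versus-right factorization $\rho = \zetaL\pi$ vs.\ $\rho = \pi\zeta$ consistent (cf.\ the footnote on $\zetaL$) is the fiddly part; everything else is a direct application of uniqueness in Lemma~\ref{lemma:canonical}.
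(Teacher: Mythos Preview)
Your strategy is exactly the paper's: exploit uniqueness of the canonical parameterization (Lemma~\ref{lemma:canonical}) to force the parameters of $f\circ g$ to equal those of $f$, define $\rho(g)$ as the unique element of $\PZ(n_*)$ relating $[W_*g\mid b_*]$ to the fundamental domain, and read off Eqs.~\ref{eq:gsnn2rho:W}--\ref{eq:gsnn2rho:c} by comparison. Two bookkeeping points deserve attention, both of which you half-anticipate. First, your convention $A_g[W_*g\mid b_*]\in\Omega_{n_*}$ is inverse to the paper's $[W_*g\mid b_*]\in\rho(g)\Omega_{n_*}$; with yours, matching canonical parameters yields $A_gW_*g = W_*$ rather than Eq.~\ref{eq:gsnn2rho:W}, and $g\mapsto A_g$ comes out an \emph{anti}-homomorphism, so you must set $\rho(g)=A_g^{-1}$. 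Second, the paper defers the homomorphism proof until \emph{after} the constraint $\rho(g)W_*=W_*g$ is in hand, at which point it is a one-liner: $\rho(g_1)\rho(g_2)[W_*\mid b_*] = \rho(g_1)[W_*g_2\mid b_*] = [W_*g_1g_2\mid b_*] = \rho(g_1g_2)[W_*\mid b_*]$, and injectivity of $A\mapsto A[W_*\mid b_*]$ on $\PZ(n_*)$ (no two rows of $[W_*\mid b_*]$ parallel) finishes it. Your attempt to establish the homomorphism before invoking $G$-invariance does not go through as written, since knowing only that $\rho(g_2)$ sends $[W_*g_2\mid b_*]$ into $\Omega_{n_*}$ says nothing about where $[W_*g_1g_2\mid b_*]$ lands.
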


We see that the constraints on the canonical parameters of a $G$-SNN are not necessarily unique, as they depend on a signed perm-rep $\rho$, whence the classification program of $G$-SNNs in this paper. 
Lemma~\ref{lemma:gsnn2rho} thus establishes the promised connection between $G$-SNNs and signed perm-reps. 
To formalize this correspondence, let $\SNN(G)$ be the set of all $G$-SNNs and $\mathcal{R}^{\PZ}(G)$ the set of all conjugacy classes of signed perm-reps of $G$. 
Define the map $F:\SNN(G)\mapsto\mathcal{R}^{\PZ}(G)$ such that if $f\in\SNN(G)$ and $\rho$ is the corresponding signed perm-rep appearing in Lemma~\ref{lemma:gsnn2rho}, then $F(f) = \rho^{\PZ}$. 
Then $F$ is a well-defined function in the sense that $F(f)$ does not depend on the choice of fundamental domain $\Omega_{n_*}$. 
Indeed, a change of fundamental domain $\Omega_{n_*}\rightarrow\Omega_{n_*}^{\prime}$ induces a transformation $[W_*\mid b_*]\rightarrow A[W_*\mid b_*]$ for a unique $A\in\PZ(n_*)$. 
By Eq.~\ref{eq:gsnn2rho:W}, this in turn induces the conjugation $\rho\rightarrow A\rho(\cdot)A^{-1}$, thereby leaving $F(f)=\rho^{\PZ}$ invariant.

We now define a \emph{$G$-SNN architecture} to be a subset $S\subseteq\SNN(G)$ such that $S=F^{-1}(\rho^{\PZ})$ for some $\rho^{\PZ}\in\ran(F)$; 
it consists of all $G$-SNNs that are constrained to respect the same representation of $G$. 
%We denote the set of all $G$-SNN classes by $\SNN(G)/F$. 
%Note that if we define $F$ on $\SNN(G)/F$ in the obvious way, then $F$ becomes an injection of $\SNN(G)/F$ into $\PZ(G)$. 
In this language, the purpose of this paper is to classify all $G$-SNN architectures.

A $G$-SNN $f$ is said to be \emph{irreducible} if $F(f)$ is a conjugacy class of irreducible signed perm-reps. 
Let $f^{\PZ}$ denote the $G$-SNN architecture containing the $G$-SNN $f$, 
and observe that if $f$ is irreducible, then so are all $G$-SNNs in $f^{\PZ}$; 
in this case, $f^{\PZ}$ is said to be an \emph{irreducible architecture}. 

It can be shown that every $G$-SNN admits a decomposition into a sum of irreducible $G$-SNNs (Prop.~\ref{prop:f_decomposition} in Supp.~\ref{appendix:gsnn2rho}). 
It follows that to classify all $G$-SNN architectures, it is enough to classify all irreducible $G$-SNN architectures. 
This amounts to two tasks: 
(1) Classify all irreducible signed perm-rep conjugacy classes in $\ran(F)$, and 
(2) for every irreducible $\rho^{\PZ}\in\ran(F)$, give a parameterization of all $G$-SNNs in the architecture $F^{-1}(\rho^{\PZ})$.

\subsection{The classification theorem}
\label{sec:thm}

We now state our main theorem, but first we introduce some notation. 
If $A$ is a linear operator (resp.~set of linear operators), then let $P_A$ be the orthogonal projection operator onto the vector subspace that is pointwise-invariant under the action of $A$ (resp.~all elements of $A$). 
Note that if $A$ is a finite orthogonal group, then~\citep[sec. 2.6]{serre1977linear}
\[ P_A = \frac{1}{|A|}\sum_{a\in A} a. \]
Let $\st_G(P_A)$ denote the stabilizer subgroup
\[ \st_G(P_A) = \{g\in G: gP_A = P_A\}. \]

\begin{thm} \label{thm:main}
Let $\rho_{HK}$ be an irreducible signed perm-rep of $G$, 
and let $\{g_1,\ldots,g_n\}$ be a transversal of $G/H$ such that $\rho_{HK}(g_i)e_1 = e_i$. 
Let $\tau = |H:K|-1$. 
Then:
\begin{enumerate}[label={(\alph*)}]
\item \label{thm:main:a} 
$\rho_{HK}^{\PZ}\in\ran(F)$ if and only if $\st_G(P_K-\tau P_H) = K$.
\item \label{thm:main:b} 
If $\rho_{HK}^{\PZ}\in\ran(F)$, then $f\in F^{-1}(\rho_{HK}^{\PZ})$ if and only if the canonical parameters%
\footnote{This is an important subtlety. 
By specifying ``canonical parameters'', we exclude from Eqs.~\ref{eq:main:W}-\ref{eq:main:c} parameter values that do not correspond to a canonical parameterization. 
For example, $w$ cannot lie in a proper subspace $\ran(P_{K^{\prime}}-\tau P_{H^{\prime}})\subset \ran(P_K-\tau P_H)$ as this would result in at least two rows of $[W_*\mid b_*]$ being parallel. 
For the same reason, for type 1 architectures, we must have $b\neq 0$ if any two rows of $W_*$ are antiparallel.%
}
of $f$ have the following forms:
\begin{align}
W_* &= \sum_{i=1}^n e_i (g_i w)^\top, w\in\ran(P_K-\tau P_H), \Vert w\Vert=1 \label{eq:main:W} \\
a_* &= a\vec{1}, a\neq 0 \label{eq:main:a} \\
b_* &= (1-\tau)b\vec{1}, b\in\RR \label{eq:main:b} \\
c_* &= -\frac{1}{2}\tau W_*^{\top}a_* + c, c\in\ran(P_G). \label{eq:main:c}
\end{align}
\end{enumerate}
\end{thm}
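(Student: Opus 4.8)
The plan is to combine Lemma~\ref{lemma:gsnn2rho}, which characterizes $G$-SNNs by the four equations~\eqref{eq:gsnn2rho:W}--\eqref{eq:gsnn2rho:c} for some signed perm-rep $\rho$, with Theorem~\ref{thm:rho_clf_conj}, which says every irreducible signed perm-rep is conjugate to exactly one $\rho_{HK}$. So the whole question reduces to: given $\rho = \rho_{HK}$, for which $(H,K)^G$ does the system~\eqref{eq:gsnn2rho:W}--\eqref{eq:gsnn2rho:c} admit a solution with $a_*$ having nonzero entries and $[W_*\mid b_*]$ lying in a fundamental domain (in particular with no two rows parallel, which forces $W_*\neq 0$)? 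I would solve the four equations one at a time, in the order $b_*$, then $a_*$, then $W_*$, then $c_*$, and extract the constraint on $(H,K)$ from the requirement that a \emph{nonzero, admissible} $W_*$ exists.

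First I would set up coordinates using the transversal $\{g_1,\dots,g_n\}$ with $\rho_{HK}(g_i)e_1=e_i$, and unpack the factorization $\rho_{HK}=\zetaL_{HK}\pi_H$ from Sec.~\ref{sec:cohomology}. Equation~\eqref{eq:gsnn2rho:b}, $\rho(g)b_*=b_*$, says $b_*$ is fixed by the whole signed perm-rep; since $\pi_H$ acts transitively on $\{e_1,\dots,e_n\}$ (irreducibility), all entries of $b_*$ are equal, and the sign-twisting $\zetaL_{HK}$ kills $b_*$ entirely unless the twist is trivial on the relevant elements — i.e. $b_*=\tau b\vec 1$ with $\tau = |H:K|-1 \in\{0,1\}$, matching~\eqref{eq:main:b}. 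Equation~\eqref{eq:gsnn2rho:a}, $\pi(g)a_*=a_*$, is invariance under the \emph{unsigned} permutation part only, so transitivity of $\pi_H$ gives $a_*=a\vec 1$ with $a\neq 0$ (nonzero entries), which is~\eqref{eq:main:a}. For~\eqref{eq:gsnn2rho:W}, writing $W_*=\sum_i e_i w_i^\top$, the relation $\rho(g)W_*=W_*g$ becomes a compatibility condition linking the rows $w_i$: applying $g_i$ one sees $w_i = g_i w_1$ up to a sign, and the $K$ vs. $H\setminus K$ dichotomy determines the sign, so $w_1$ must be fixed by $K$ but \emph{anti}-fixed by $H\setminus K$, i.e. $w_1\in\ran(P_K-\tau P_H)$ — precisely the space in~\eqref{eq:main:W}. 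Normalizing $\Vert w\Vert=1$ handles the unit-norm condition in the fundamental domain. Finally~\eqref{eq:gsnn2rho:c} is an inhomogeneous linear equation for $c_*$; its particular solution is the $-\tfrac12\tau W_*^\top a_*$ term and its homogeneous part is $\ker(I-g)$ for all $g$, i.e. $\ran(P_G)$, giving~\eqref{eq:main:c}. Along the way I should check solvability of~\eqref{eq:gsnn2rho:c}: the right side must lie in $\ran(I-g)$ for each $g$, which I expect to follow because $W_*^\top a_*\propto \sum_i g_i w$ is already close to $G$-invariant by construction.

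The crux — part~\ref{thm:main:a} — is then extracting exactly when the architecture is \emph{nonempty} and \emph{nonredundant}. Nonemptiness needs $\ran(P_K-\tau P_H)\neq 0$, which in the $\tau=1$ case is the statement that $K$ fixes strictly more than $H$ does, i.e. the $(-1)$-eigenspace is nontrivial; and nonredundancy (no two rows of $[W_*\mid b_*]$ parallel, so that the canonical $n_*$ really equals $n$ and not something smaller) is what turns into the stabilizer condition $\st_G(P_K - \tau P_H)=K$. I would argue: two rows $g_i w$ and $g_j w$ (with signs) are parallel exactly when $g_j^{-1}g_i$ stabilizes the line spanned by admissible $w$, equivalently stabilizes the projector onto $\ran(P_K-\tau P_H)$; generic $w$ in that range has stabilizer exactly $\st_G(P_K-\tau P_H)$, so the rows are pairwise non-parallel for generic $w$ iff this stabilizer is contained in (hence equals, since $K$ always stabilizes) $K$. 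The sign $\tau$ inside $P_K \tau P_H$ in the theorem statement versus $P_K - \tau P_H$ in~\eqref{eq:main:W} I would reconcile by noting $\st_G(P_K - \tau P_H) = \st_G(P_K)\cap\st_G(P_H)$-type reasoning collapses both to the same subgroup; I'd state this as a small lemma. I expect the main obstacle to be precisely this genericity/stabilizer argument in~\ref{thm:main:a}: one must show that the ``bad'' set of $w$ (those giving a smaller effective architecture, landing outside the fundamental domain $\Omega_{n}$ and instead in some $\Omega_{n'}$ with $n'<n$) is a proper subvariety, \emph{and} that it is empty exactly under the stated condition — equivalently that a $w$ whose row-orbit has full size $n$ exists iff $\st_G(P_K-\tau P_H)=K$, and that when the stabilizer is strictly larger the construction instead yields a signed perm-rep of smaller degree that is the one actually in $\ran(F)$. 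Tying this back to well-definedness of $F$ (independence of fundamental domain, already noted in Sec.~\ref{sec:gsnn2rho}) should close the argument.
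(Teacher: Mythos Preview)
Your approach is essentially the paper's: reduce via Lemma~\ref{lemma:gsnn2rho} to solving Eqs.~\eqref{eq:gsnn2rho:W}--\eqref{eq:gsnn2rho:c} for $\rho=\rho_{HK}$, handle $a_*,b_*$ by transitivity of $\pi_H$ (and of $\rho_{HK}$ on $\{\pm e_i\}$ in type~2), extract the first row $w$ of $W_*$ from $K$-invariance and $(H\setminus K)$-anti-invariance, and solve~\eqref{eq:gsnn2rho:c} as particular-plus-homogeneous. Two places where the paper is sharper and your sketch needs care. First, the identification $\{w:Kw=w,\ (H\setminus K)w=-w\}=\ran(P_K-P_H)$ is isolated as Lemma~\ref{lemma:2I+h}, computed as the character projection onto the sign representation of $H/K$; you should state this rather than leave it implicit. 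Second, and more substantively, for part~\ref{thm:main:a} the paper splits by type: in type~1 one may take $b\neq 0$, so $[W_*\mid b_*]\in\Theta_n$ only requires the rows $g_iw$ to be \emph{distinct}; in type~2 one has $b_*=0$ and needs \emph{pairwise nonparallel} rows, which via $-g_iw=g_ihw=g_{-i}w$ becomes distinctness of $\{g_{\pm i}w\}$ over the transversal of $G/K$ rather than $G/H$. Both cases then feed into a single lemma (Lemma~\ref{lemma:distinct}): $V$ contains a $w$ with $g_1w,\dots,g_nw$ distinct iff $\st_G(P_V)=J$, proved by the fact that a real vector space is not a finite union of proper subspaces---this is your ``proper subvariety'' heuristic made exact, and it bypasses the pointwise-vs.-setwise stabilizer conflation in your sketch. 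Finally, the discrepancy you flag between $P_K\tau P_H$ in~\ref{thm:main:a} and $P_K-\tau P_H$ in~\eqref{eq:main:W} is a typo in the statement (the proof uses $P_K-\tau P_H$ throughout); no reconciliation lemma is needed.
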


The condition in Thm.~\myref{thm:main}{a} helps to exclude architectures where Eq.~\ref{eq:main:W} yields redundant weight vectors. 
Combining this with the classification of irreducible signed perm-reps up to conjugacy (Thm.~\ref{thm:rho_clf_conj}), we obtain a complete classification of the irreducible $G$-SNN architectures as an immediate corollary. 
Theorems~\ref{thm:rho_clf_conj}\&\ref{thm:main} can be assembled into an algorithm that enumerates all irreducible $G$-SNN architectures for any given finite orthogonal group $G$. 
%further details along with pseudocode are provided in Supp.~\ref{appendix:pseudocode}.
%
%The key steps of the algorithm are the following:
%(1) Enumerate the signed perm-reps $\rho_{HK}$ up to conjugacy by enumerating the conjugacy classes $(H, K)^G\in\mathcal{C}^G_{\leq 2}$ (Thm.~\ref{thm:rho_clf_conj}); 
%(2) for each $\rho_{HK}$, enumerate the signed perm-reps $\rho_{HK;z}$ up to equivalence (Thm.~\ref{thm:J}); 
%(3) for each $\rho_{HK;z}$, check if it satisfies the condition in Thm.~\ref{thm:main:a}; 
%(4) if it does, then print the corresponding $G$-SNN (Thm.~\ref{thm:main:b}). 
%Further details along with pseudocode are provided in Supp.~\ref{appendix:pseudocode}.
%
We implemented the enumeration algorithm using a combination of GAP%
\footnote{GAP is a computer algebra system for computational discrete algebra with particular emphasis on computational group theory~\citep{anon2021gap}.} 
and Python; 
our implementation currently supports, in principle, all finite permutation groups $G<\mathcal{P}(m)$.%
\footnote{Code for our implementation and for reproducing all results in this paper %
%\ifpreprint
is available at: \url{https://github.com/dagrawa2/gsnn_classification_code}.
%\else
%was submitted along with supplementary material.
%\fi
} %
Using our code implementation, we enumerated all irreducible $G$-SNN architectures for one permutation representation of every group $G$, $|G|\leq 8$, up to isomorphism. 
We report the number of architectures, broken down by type, for each group in Table~\ref{table:numbers} (see Supp.~\ref{appendix:examples:count}; a discussion is included there as well). 
A key observation is that the number of type 2 architectures---which to our knowledge have never appeared in the literature previously---is significant; 
e.g., for the dihedral permutation group $G=D_4$ on four elements, there are five type 1 architectures compared to seven type 2 architectures.

Script execution time for each group was under $2$ seconds. 
Nevertheless, we remark that our code is not optimized for speed and scalability as our purpose was exploration and intuition. 
As future work, we will work out Thm.~\ref{thm:main} for specific families of groups to derive more direct and efficient implementations. 
We will also investigate how to build $G$-SNN architectures from smaller $G_1$-SNN and $G_2$-SNN architectures where $G$ is a (semi)direct product of $G_1$ and $G_2$. 
Finally, we note that for the application of NAS, we will probably never enumerate \emph{all} $G$-SNN architectures; 
instead, we will generate them on-the-fly as we move through the search space.

\section{Examples}
\label{sec:examples}

\subsection{The cyclic permutation group}
\label{sec:examples:perm}

\begin{figure}
\centering
\begin{minipage}{\textwidth}
\begin{minipage}{0.5\textwidth}
\centering
Architecture 0.0

\includegraphics[width=0.5\textwidth]{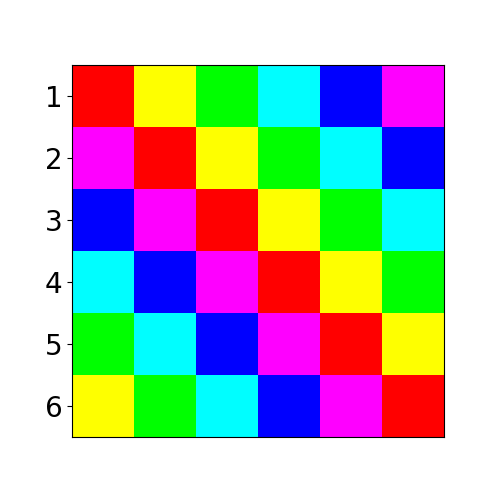}%
\includegraphics[width=0.5\textwidth]{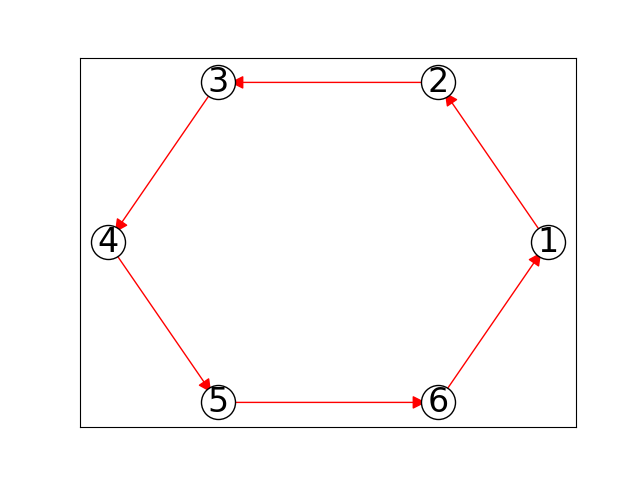}

Architecture 1.0

\includegraphics[width=0.5\textwidth]{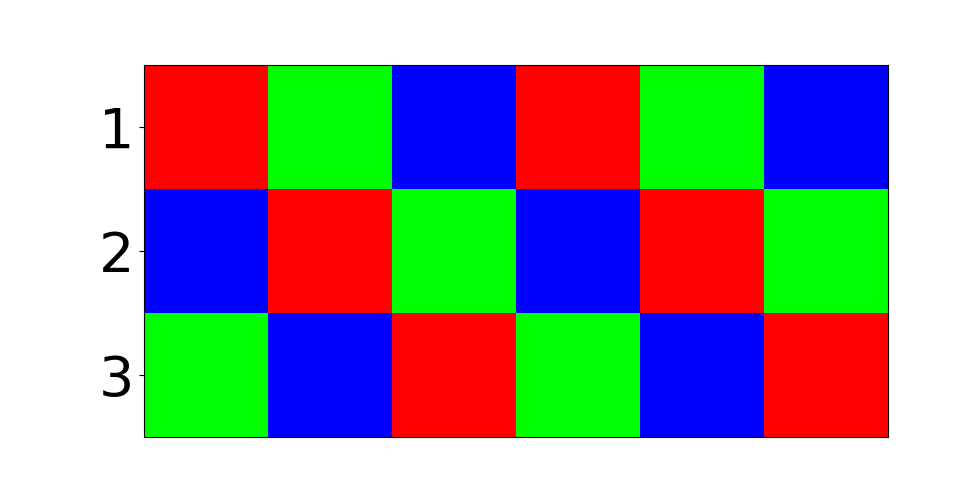}%
\includegraphics[width=0.5\textwidth]{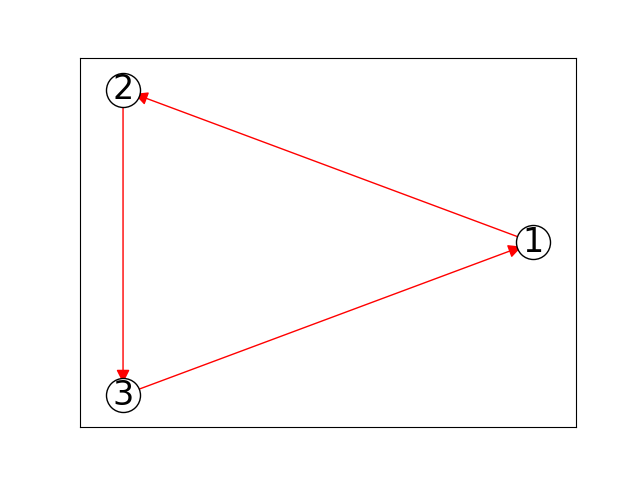}
\end{minipage}%
\begin{minipage}{0.5\textwidth}
\centering
Architecture 2.0

\includegraphics[width=0.5\textwidth]{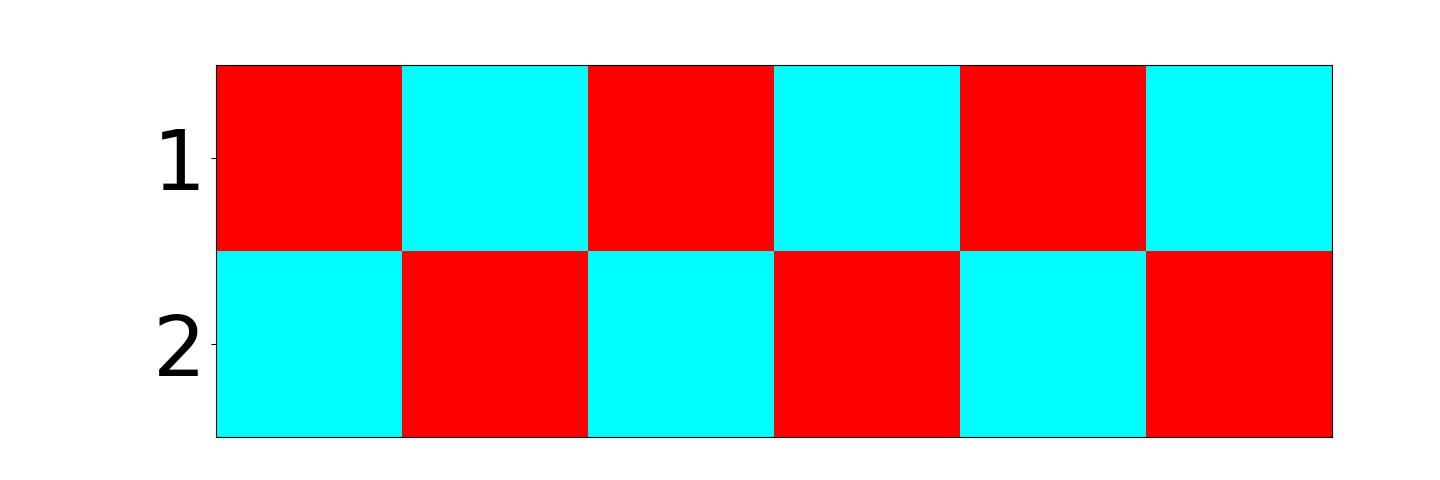}%
\includegraphics[width=0.5\textwidth]{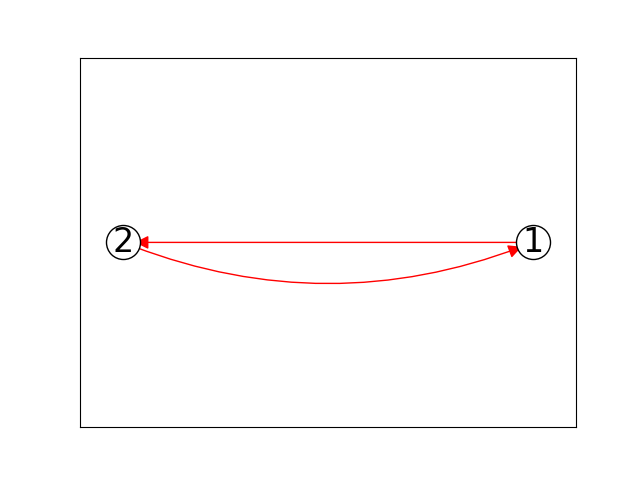}

Architecture 3.0

\includegraphics[width=0.5\textwidth]{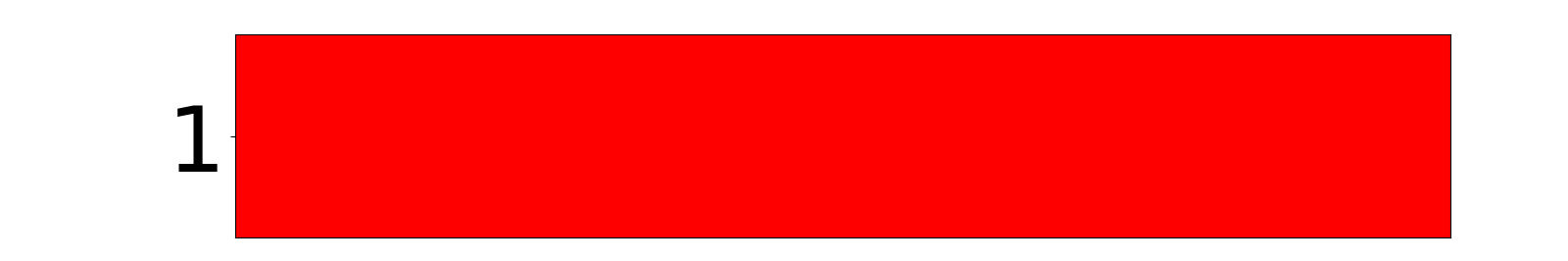}%
\includegraphics[width=0.5\textwidth]{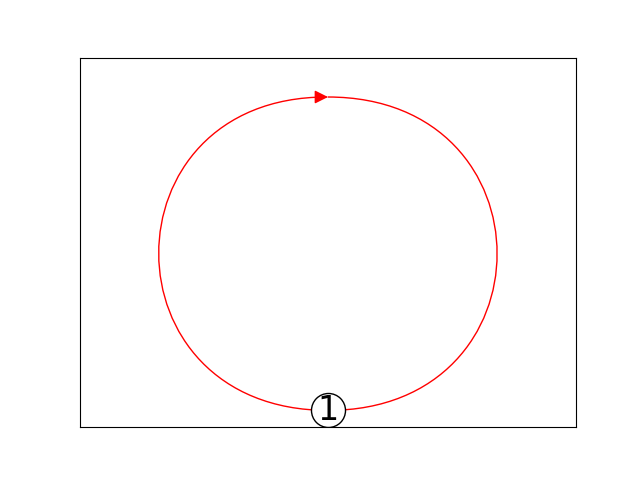}
\end{minipage}
\end{minipage}

\rule{0.9\textwidth}{0.4pt}

\begin{minipage}{\textwidth}
\begin{minipage}{0.5\textwidth}
\centering
Architecture 1.1

\includegraphics[width=0.5\textwidth]{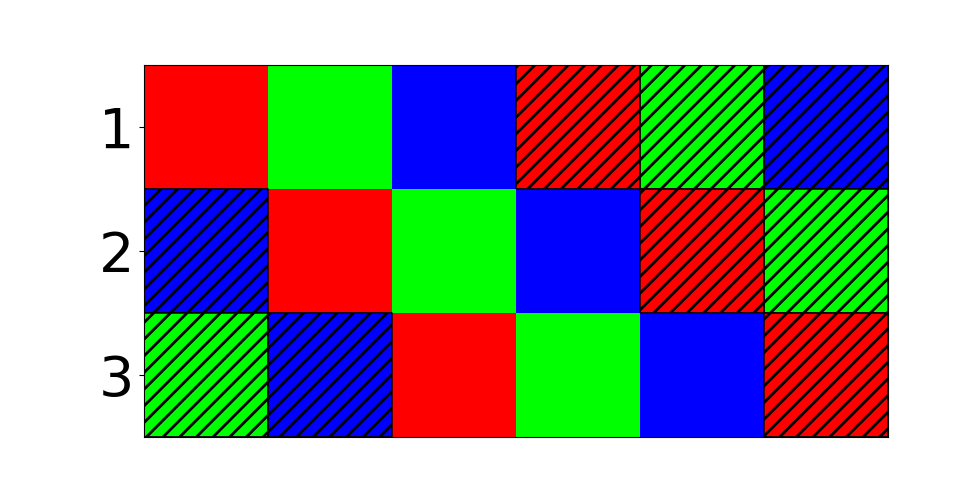}%
\includegraphics[width=0.5\textwidth]{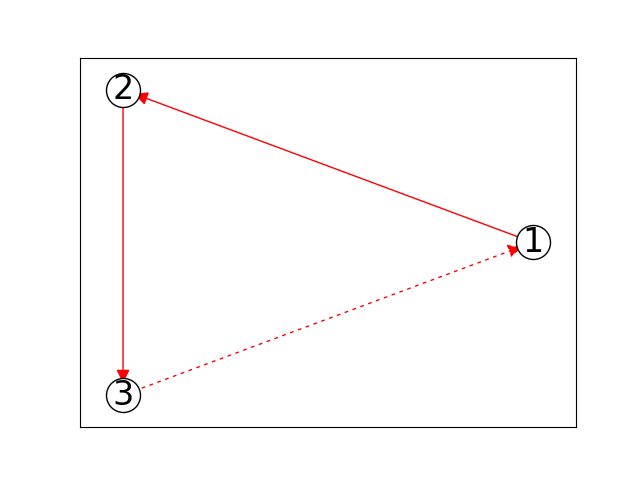}
\end{minipage}%
\begin{minipage}{0.5\textwidth}
\centering
Architecture 3.1

\includegraphics[width=0.5\textwidth]{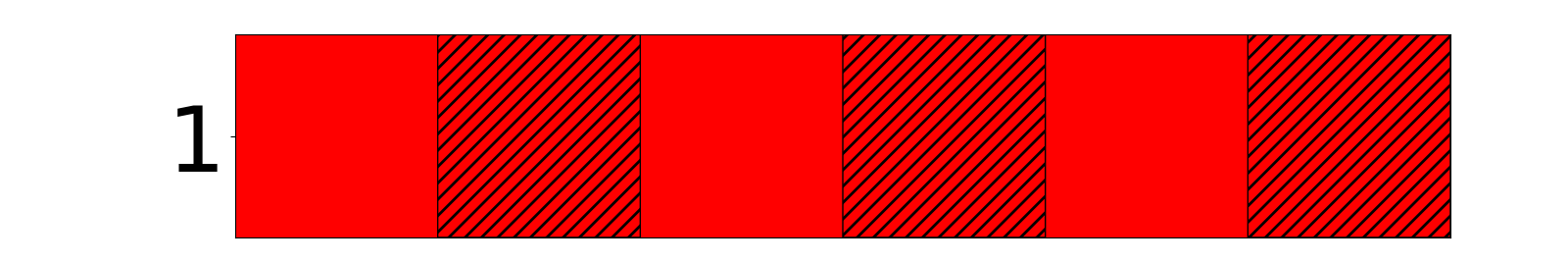}%
\includegraphics[width=0.5\textwidth]{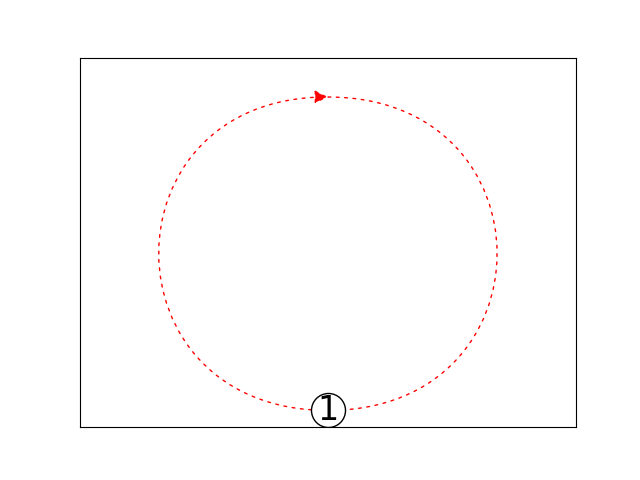}
\end{minipage}
\end{minipage}
\caption{\label{fig:C6_perm} %
Constraint pattern of the weight matrix and illustration of the cohomology class of each irreducible $G$-SNN architecture for the cyclic permutation group $G=C_6$. 
The number of rows (resp. columns) in each pattern is the number of hidden (resp. input) neurons in the architecture. 
In each pattern, weights of the same color and texture (solid vs. hatched) are constrained to be equal; 
weights of the same color but different texture are constrained to be opposites 
(colors should not be compared across different architectures). 
In each cohomology class illustration, the nodes represent the hidden neurons of the architecture, 
and the arcs represent the action of the generators of $G$ on the rows of the weight matrix 
(all arcs are the same color because $C_6$ has only one generator). 
Solid (resp. dashed) arcs preserve (resp. reverse) orientation. 
See Supp.~\ref{appendix:examples:perm} for a richer example-- the dihedral permutation group $D_6$.
}
\end{figure}

Consider the group $G=C_6$ of all cyclic permutations on the dimensions of the input space $\RR^6$.%
\footnote{See Supp.~\ref{appendix:examples} for richer examples that could not fit in the main paper.} %
There are six irreducible $G$-SNN architectures for $G=C_6$ (Fig.~\ref{fig:C6_perm}); 
``architecture i.j'' refers to $F^{-1}(\rho^{\PZ}_{H_iK_j})$ where $H_0,\ldots,H_3$ are isomorphic to $C_1$, $C_2$, $C_3$, and $C_6$ respectively and $K_j\leq H_i$ such that $|H_i:K_j|=j+1$. 
Architectures i.0 are thus exactly the type 1 ones, and two architectures i.j and i.k for distinct j and k correspond to inequivalent cohomology classes in the same cohomology group. 
Note that the architectures with $n$ hidden neurons correspond to $H_i \cong C_{\frac{6}{n}}$.

The type 1 architectures i.0 correspond to ordinary unsigned perm reps of $G$. 
These are the ``obvious'' architectures that practitioners probably could have intuited. 
From Fig.~\ref{fig:C6_perm}, we see that the weight matrices of these architectures are constrained to have a circulant structure; 
cycling the input neurons is thus equivalent to cycling the hidden neurons, leaving the output invariant as all weights in the second layer (not depicted) are constrained to be equal. 
This circulant structure is also apparent in the cohomology class illustrations. 
%Note that $H_i$ is the stabilizer subgroup of each row of the weight matrices.

Architectures i.1 are type 2 and are perhaps less obvious. 
Cycling the input neurons is equivalent to cycling the hidden neurons only up to sign; 
if we cycle a weight vector around all the hidden neurons, then we do not return to the original weight vector but instead to its opposite. 
If we think of the dashed arcs in the cohomology class illustrations as ``half-twists'' in a cylindrical band, then Architectures i.1 correspond to a M\"{o}bius band, 
thereby distinguishing them topologically from architectures i.0. 
Alternatively, in terms of graph colorings, if the nodes incident to a solid (resp. dashed) arc are constrained to have the same (resp. different) color(s), then architectures i.1 are the only ones not $2$-colorable.

Observe that the top weight vector of architecture i.1 is constrained to be orthogonal to that of architecture i.0; 
this is made precise in Prop.~\ref{prop:orthogonal} in Sec.~\ref{sec:topological}. 
%this is analogous to the constraint that every continuous global section of a M\"{o}bius band must vanish at some point. 
The upshot is that architectures i.0 and i.1 can coincide in function space if and only if their weight matrices vanish-- i.e., the architectures degenerate into linear functions. 
Since neural networks are trained with local optimization, then we think it is unlikely that a $G$-SNN being fit to a nonlinear dataset will degenerate to a linear function at any point in its training; 
assuming this is true, architectures i.0 and i.1 are effectively confined from one another due to their inequivalent topologies. 
We discuss this phenomenon in more detail in Sec.~\ref{sec:topological}.

\subsection{The cyclic rotation group}
\label{sec:examples:rot}

\begin{figure}
\centering
\begin{tabular}{cc}
\includegraphics[width=0.5\textwidth]{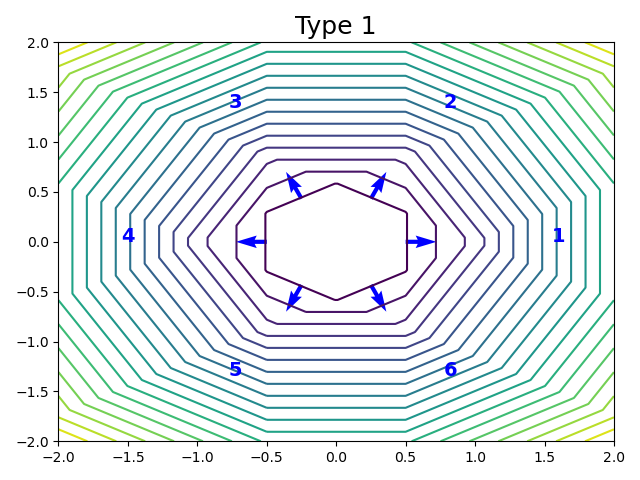} & %
\includegraphics[width=0.5\textwidth]{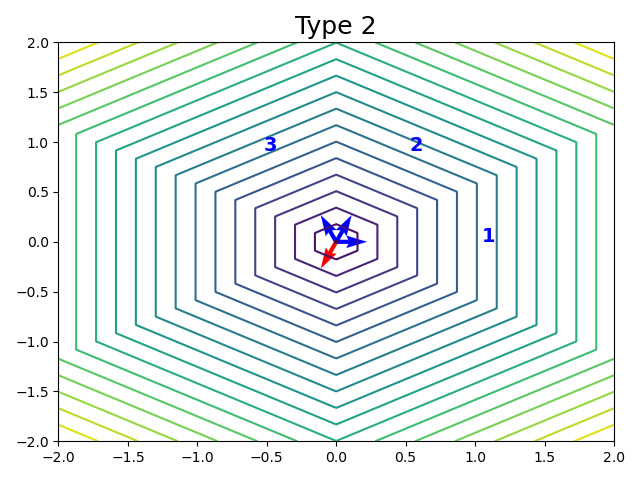}
\end{tabular}
\caption{\label{fig:C6_rot} %
Contour plots of the two irreducible $G$-SNN architectures for the 2D orthogonal representation of $G=C_6$. 
The blue vectors are the weight vectors-- i.e., rows of the weight matrix $W_*$, 
and their offsets from the origin in the type 1 architecture indicate the bias $b_*$. 
The red vector in the type 2 architecture is the canonical parameter $c_*$ of the $G$-SNN.
See Supp.~\ref{appendix:examples:rot} for a richer example-- the dihedral rotation group $D_6$.
}
\end{figure}

Consider again the group $G=C_6$, but this time a 2D orthogonal representation where each group element acts as a rotation by a multiple of $60^\circ$ on the 2D plane. 
There are only two irreducible $G$-SNN architectures-- one of each type. 
To visualize these architectures, we set $w = [1, 0]^{\top}$, $a=1$, $b=0.5$, $c=0$, and $d_*=0$ in Thm.~\myref{thm:main}{b}. 
Based on their contour plots (Fig.~\ref{fig:C6_rot}), 
we find that the level curves of the type 1 (resp. type 2) architecture are concentric regular dodecagons (resp. hexagons); 
both architectures are thus clearly invariant to $60^\circ$-rotations.

In the type 2 architecture, the hexagonal level curves increase linearly with radial distance. 
Since the bias is required to be zero (Eq.~\ref{eq:main:b}), a sharp minimum forms at the origin. 
The architecture has three weight vectors and thus three hidden neurons, 
and it additionally has a linear term (whose gradient is shown in red in Fig.~\ref{fig:C6_rot}), which---when combined with weight vector 2 using Eq.~\ref{eq:relulinear}---results in three weight vectors with $C_3$ symmetry. 
Observe that if we cycle the three hidden neurons of the type 2 architecture , so that each weight vector is rotated three times by $60^\circ$, then we obtain the three weight vectors with reversed orientation; 
this is a manifestation of the nontrivial topology of the type 2 architecture.

The type 1 architecture has six weight vectors and thus six hidden neurons. 
Observe that for each weight vector, there is another that is its opposite. 
Thus, for $[W_*\mid b_*]$ to have pairwise nonparallel rows (see Lemma~\ref{lemma:canonical}), the bias $b_* = b\vec{1}$ must be nonzero, 
whence the dodecahedral region in the example $G$-SNN (Fig.~\ref{fig:C6_rot}) where its value plateaus to zero. 
However, in the asymptotic limit $b\rightarrow 0$, the type 1 architecture degenerates to the type 2 architecture but with twice the number of hidden neurons. 
Thus, even though the two architectures are topologically distinct, the type 1 architecture can get arbitrarily close to the type 2 architecture in function space 
(see Supp.~\ref{appendix:examples:rot} for a richer example---the dihedral rotation group $D_6$---which has irreducible architectures that cannot easily access one another). 
This has important consequences, which we discuss more in the next section.

\section{Remarks}
\label{sec:remarks}

\subsection{Numbers of hidden neurons}
\label{sec:numbers}

The type 1 and type 2 irreducible architectures for the $G=C_6$ rotation group (Fig.~\ref{fig:C6_rot}) have six and three hidden neurons respectively. 
In addition, the linear term $c_*x$ in the canonical form of a $G$-SNN---if not zero---can be interpreted as two additional hidden neurons. 
It follows that a general $G$-SNN that is a sum of copies of the two irreducible architectures cannot have $3k+1$ hidden neurons for any integer $k$. 
Thus, if we fit a traditional fully-connected SNN with $3k+1$ hidden neurons to a dataset invariant under $60^\circ$-rotations, 
then the fit SNN can be a $G$-SNN if and only if one or more of its hidden neurons are redundant-- 
e.g., one hidden neuron is zeroed out, or four hidden neurons sum to form a linear term, leaving $(3k+1)-4 = 3(k-1)$ hidden neurons corresponding to ``proper'' weight vectors. 
Although this is a rather simple example, it suggests the possibility of more severe or complicated restrictions on numbers of hidden neurons for larger and richer groups $G$. 
In these cases, the redundant hidden neurons could perhaps make it more difficult for the SNN to discover the symmetries in the dataset and thus weaken the model, all at the cost of additional computation. 
We thus conjecture that one factor that determines the optimal number of hidden neurons in traditional SNNs is whether the number admits a $G$-SNN architecture, 
or---going further---how many different $G$-SNN architectures the number admits.

\subsection{Network morphisms}
\label{sec:ai}

Let $f_i^{\PZ}=F^{-1}(\rho^{\PZ}_i)$ for $i=1,2$ be two $G$-SNN architectures. 
If for every $f\in f_2^{\PZ}$ there exists a sequence $\{f_n\in f_1^{\PZ}\}_{n=1}^{\infty}$ that converges to $f$ in the topology of uniform convergence on compact sets,%
\footnote{In this topology, a sequence $\{f_n\}_{n=1}^{\infty}$ of functions is said to converge to $f$ iff it converges uniformly to $f$ on every compact set in the domain.}
then we say $f_2^{\PZ}$ is \emph{asymptotically included} in $f_1^{\PZ}$ and write $f_2^{\PZ}\hookrightarrow f_1^{\PZ}$. 
In the cyclic rotation example in Sec.~\ref{sec:examples:rot}, as already discussed there, the type 2 architecture is asymptoticly included in the type 1 architecture. 
In the cyclic permutation example in Sec.~\ref{sec:examples:perm}, the asymptotic inclusions%
\footnote{These inclusions are ``asymptotic'' because no two rows of the canonical parameter $[W_*\mid b_*]$ can be exactly antiparallel, preventing the degeneration of one architecture into another.}%
 furnish a $3$-partite topology on the space of irreducible architectures (Fig.~\ref{fig:C6_morphisms}); 
here every directed path is an asymptotic inclusion, 
and the individual arcs could be called ``irreducible asymptotic inclusions''. 
This topology provides the necessary structure to perform neural architecture search (NAS), where the irreducible inclusions serve as the \emph{network morphisms}~\citep{wei2016network}. 
In NAS, network morphisms are used to map underfitting architectures to larger ones, after which training resumes; 
the upshot is that the larger architecture need not be re-initialized, thereby significantly cutting computation time. 
In future work, we will run NAS on the space of irreducible $G$-SNN architectures to learn an optimal $G$-SNN in a greedy manner.

\begin{wrapfigure}{R}{0.5\textwidth}
\centering
\includegraphics[trim={0 0 0 1.5cm}, clip, width=0.4\textwidth]{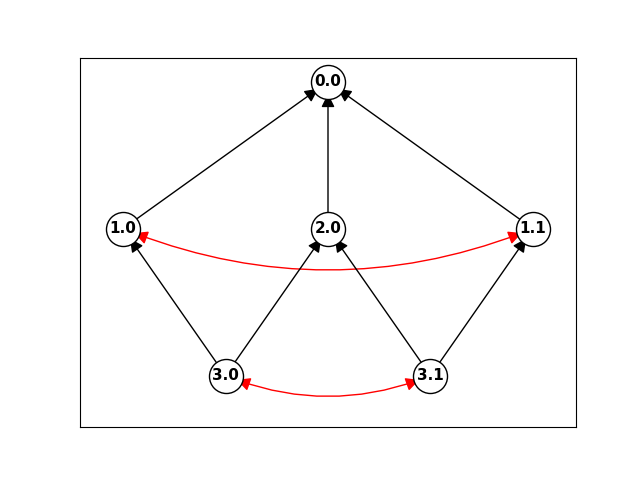}
\caption{\label{fig:C6_morphisms} %
Network morphisms between irreducible $G$-SNN architectures for the cyclic permutation group $G=C_6$. 
Every directed path in black represents an asymptotic inclusion. 
Red doubled-arrowed arcs represent the feasibility of topological tunneling. 
See Supp.~\ref{appendix:examples:perm} for a richer example-- the dihedral permutation group $D_6$.
}
\end{wrapfigure}

Although the above definition of asymptotic inclusion is functional-analytic, 
the following theorem gives a group-theoretic characterization that is more amenable to computation.

\begin{thm} \label{thm:ai}
Let $f^{\PZ}_i = F^{-1}(\rho^{\PZ}_{H_iK_i})$ for $i=1,2$ be two irreducible $G$-SNN architectures. 
Then $f^{\PZ}_2\hookrightarrow f^{\PZ}_1$ iff there exists $(H, K)\in (H_1, K_1)^G$ such that 
$H\leq H_2$, $K\leq K_2$, and $H\cap K_2 = K$.
\end{thm}

The proof (see Supp.~\ref{appendix:ai}) relies on a non-canonical parameterization of $G$-SNNs; a lemma (Lemma~\ref{lemma:ai}) that invokes the Arzel\`{a}-Ascoli Theorem; and Thm.~\ref{thm:main:a}. 
Theorem~\ref{thm:ai} can thus be used to generate network morphisms such as those in Fig.~\ref{fig:C6_morphisms} algorithmicly. 
Observe that as a corollary, since subgroup lattices are connected and subgroup inclusion is transitive, then there are no ``isolated'' $G$-SNNs; 
every $G$-SNN architecture is connected to another by some network morphism.

\subsection{Topological tunneling}
\label{sec:topological}

Recall the discussion in the final paragraph of Sec.~\ref{sec:examples:perm}, where we said architectures i.0 and i.1 coincide in function space only when their weight matrices are zero. 
We see this phenomenon again in the dihedral rotation group example (see Supp.~\ref{appendix:examples:rot}). 
Both of these are instances of the following proposition, which states that architectures corresponding to distinct cohomology classes are in a sense orthogonal.

\begin{prop} \label{prop:orthogonal}
Let $w_1$ and $w_2$ be the first rows of the canonical weight matrices of two irreducible $G$-SNN architectures $F^{-1}(\rho^{\PZ}_{HK_1})$ and $F^{-1}(\rho^{\PZ}_{HK_2})$ where $K_1\neq K_2$. 
Then $w_1^{\top}w_2 = 0$.
\end{prop}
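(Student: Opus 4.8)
The plan is to show that not merely the particular vectors $w_1, w_2$ but the entire subspaces from which they may be drawn are mutually orthogonal, so that the conclusion holds for every $G$-SNN in each architecture, not just a lucky representative.

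First I would translate the statement into one about projections. By Theorem~\ref{thm:main}\ref{thm:main:b} (Eq.~\ref{eq:main:W}), we may choose the transversal with $g_1 = 1$, so that the first row of $W_*$ is exactly $w^\top$; hence $w_i \in \ran(Q_i)$ where $Q_i := P_{K_i} - \tau_i P_H$ and $\tau_i = |H:K_i| - 1 \in \{0,1\}$. Since $K_i \le H$ gives $\ran(P_H) \subseteq \ran(P_{K_i})$, we have $P_H P_{K_i} = P_{K_i} P_H = P_H$, and a short computation then shows each $Q_i$ is itself an orthogonal projection. Being self-adjoint and fixing $w_i$, it lets us write $w_1^\top w_2 = w_1^\top Q_1 Q_2 w_2$, so the goal reduces to proving $Q_1 Q_2 = 0$.

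Next I would expand $Q_1 Q_2$ using the identities above to get $Q_1 Q_2 = P_{K_1} P_{K_2} - (\tau_1 + \tau_2 - \tau_1\tau_2)\, P_H$ and dispose of the easy cases. The case $\tau_1 = \tau_2 = 0$ cannot occur, as it would force $K_1 = H = K_2$, contradicting $K_1 \ne K_2$; and if exactly one $\tau_i$ vanishes, say $\tau_1 = 0$, then $P_{K_1} = P_H$ collapses the expression to $P_H - P_H = 0$. The one remaining case is $\tau_1 = \tau_2 = 1$, where $Q_1 Q_2 = P_{K_1} P_{K_2} - P_H$, so everything hinges on the identity $P_{K_1} P_{K_2} = P_H$.

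For this I would establish the general lemma $P_A P_B = P_{AB}$ whenever $A, B \le G$ and $AB$ is a subgroup: writing $P_A P_B$ as an average of group elements over $AB$ and counting that each element of $AB$ has exactly $|A \cap B|$ factorizations $ab$, the lemma follows from $|AB| = |A||B|/|A\cap B|$. Applying it with $A = K_1$, $B = K_2$: since $K_1$ has index $2$ in $H$ it is normal in $H$, so $K_1 K_2$ is a subgroup of $H$ properly containing $K_1$ (because $K_2 \not\subseteq K_1$, both having index $2$), hence $K_1 K_2 = H$ and $P_{K_1} P_{K_2} = P_H$. The main obstacle is exactly this both-type-$2$ case; the rest is routine bookkeeping with commuting self-adjoint projections. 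An alternative for that case is to decompose $\ran(P_{K_1 \cap K_2})$ into isotypic components under $H/(K_1 \cap K_2) \cong \ZZ_2 \times \ZZ_2$ and observe that $w_1$ and $w_2$ live in distinct sign components; this viewpoint also makes the link to the cohomology classes of Prop.~\ref{prop:cohomology} explicit, the orthogonality reflecting the independence of the corresponding cocycle classes in $\mathcal{H}^1(G, M_H)$.
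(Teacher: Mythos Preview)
Your proof is correct and follows the same case split as the paper (one $K_i = H$ versus both proper), but your treatment of the both-type-$2$ case is genuinely different and cleaner. The paper expands $P_{K_1}^\top P_{K_2}$ explicitly over coset representatives of $K_1\cap K_2$ in $H$ and then evaluates $w_1^\top(\cdots)w_2$ term by term, using that any $k_1 \in K_1\setminus K_2$ fixes $w_1$ while $k_1^\top$ negates $w_2$; in fact that single observation already forces $w_1^\top w_2 = -w_1^\top w_2 = 0$. You instead prove the operator identity $P_{K_1}P_{K_2} = P_H$ via the general lemma $P_A P_B = P_{AB}$ (valid whenever $AB$ is a subgroup) together with $K_1 K_2 = H$, which yields the stronger conclusion $Q_1 Q_2 = 0$: the entire subspaces $\ran(P_{K_i}-\tau_i P_H)$ are orthogonal, not just the particular $w_i$'s---exactly the strengthening you announce at the outset. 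Your lemma is a clean reusable fact; the paper's hands-on argument makes the underlying mechanism (an element fixing one vector and negating the other) explicit at the level of individual group elements, which is essentially the ``sign-component under $H/(K_1\cap K_2)$'' picture you sketch as an alternative at the end.
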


We call the resulting phenomenon ``topological confinement'', and it implies that if, for example, Alice generates a nontrivial dataset using architecture 1.1 for $G=C_6$ (Fig.~\ref{fig:C6_perm}) 
but Bob constructs architecture 1.0 to enforce $G$-invariance (as it is one of the more intuitive architectures), 
then Bob's network will fail to fit to Alice's dataset, even though Bob's network has the right ``size''; 
this suggests that the way we enforce $G$-invariance in a network is important. 
Rather than randomly selecting between architectures 1.0 and 1.1, or resorting to a larger architecture such as 0.0, we propose to allow ``topological tunneling'', 
where the cohomology class of an architecture is transformed by applying the appropriate orthogonal transformation to the top weight vector. 
This allows us to transform one weight-sharing pattern into another in a way analogous to cutting and regluing a M\"{o}bius band to remove the twist. 
Topological tunneling thus introduces ``shortcuts'' between certain points in architecture space, hopefully facilitating NAS (Fig.~\ref{fig:C6_morphisms}). 
We plan to test this in practice in future work.

\section{Conclusion}
\label{sec:conclusion}

We proved Thm.~\ref{thm:main}, which gives a classification of all (irreducible) $G$-SNN architectures with ReLU activation for any finite orthogonal group $G$ acting on the input space. 
The proof is based on a correspondence of every $G$-SNN to a signed perm-rep of $G$ acting on the hidden neurons. 
We also proved Thm.~\ref{thm:ai}, which characterizes the network morphisms between irreducible $G$-SNN architectures and thus---together with Thm.~\ref{thm:main}---completely describes $G$-SNN architecture space. 
A key implication of our theory is the existence of the type 2 $G$-SNN architectures, which to our knowledge have never been explicitly identified in the literature previously.

Various next steps can be taken towards the ultimate goal of $G$-invariant neural architecture design. 
On one hand, we could try to extend Thm.~\ref{thm:main} to deep architectures, which would require us to understand the redundancies of a deep network. 
We could then investigate the behavior and utility of type 2 symmetry constraints in the context of real deep learning benchmark tasks. 
on the other hand, we could first go ahead and investigate NAS on $G$-SNNs. 
For a scalable NAS implementation, we could work out Thm.~\ref{thm:main} for specific families of groups to derive more efficient implementations, 
and we could try to develop an ``algebra'' of $G$-SNNs where $G$ is a (semi)direct product of smaller groups. 
%Alternatively, we could strengthen our intuition about the condition in Thm.~\myref{thm:main}{a} used to eliminate degenerate (i.e., linear) or redundant architectures; 
%indeed, due to this condition, and as exemplified by the dihedral rotation group (Supp.~\ref{appendix:examples:rot}), the cohomology classes for which the condition is satisfied need not even form a subgroup, making it challenging to intuit the structure of $G$-SNN architecture space. 
Finally, we could consider what are ``good'' combinations of irreducible $G$-SNN architectures; 
e.g., which sequences of irreducible architectures converge in sum to universal $G$-invariant approximators fastest? 
Perhaps answers to these questions could aid in transforming $G$-invariant neural architecture design from an art to a science.

\begin{ack}
D.A. and J.O. were supported by DOE grant DE-SC0018175. 
D.A. was additionally supported by NSF award No.~2202990.
\end{ack}

%\section*{References}

\bibliographystyle{plainnat}
\bibliography{references}

\ifpreprint
\else
%%%%%%%%%%%%%%%%%%%%%%%%%%%%%%%%%%%%%%%%%%%%%%%%%%%%%%%%%%%%
\section*{Checklist}

%%% BEGIN INSTRUCTIONS %%%
%The checklist follows the references.  Please
%read the checklist guidelines carefully for information on how to answer these
%questions.  For each question, change the default \answerTODO{} to \answerYes{},
%\answerNo{}, or \answerNA{}.  You are strongly encouraged to include a {\bf
%justification to your answer}, either by referencing the appropriate section of
%your paper or providing a brief inline description.  For example:
%\begin{itemize}
%  \item Did you include the license to the code and datasets? \answerYes{See Section [number].}
%  \item Did you include the license to the code and datasets? \answerNo{The code and the data are proprietary.}
%  \item Did you include the license to the code and datasets? \answerNA{}
%\end{itemize}
%Please do not modify the questions and only use the provided macros for your
%answers.  Note that the Checklist section does not count towards the page
%limit.  In your paper, please delete this instructions block and only keep the
%Checklist section heading above along with the questions/answers below.
%%% END INSTRUCTIONS %%%

\begin{enumerate}

\item For all authors...
\begin{enumerate}
  \item Do the main claims made in the abstract and introduction accurately reflect the paper's contributions and scope?
    \answerYes{}
  \item Did you describe the limitations of your work?
    \answerYes{See Sec.~\ref{sec:intro}.}
  \item Did you discuss any potential negative societal impacts of your work?
    \answerNA{}
  \item Have you read the ethics review guidelines and ensured that your paper conforms to them?
    \answerYes{}
\end{enumerate}

\item If you are including theoretical results...
\begin{enumerate}
  \item Did you state the full set of assumptions of all theoretical results?
    \answerYes{}
        \item Did you include complete proofs of all theoretical results?
    \answerYes{See supplementary material.}
\end{enumerate}

\item If you ran experiments...
\begin{enumerate}
  \item Did you include the code, data, and instructions needed to reproduce the main experimental results (either in the supplemental material or as a URL)?
    \answerYes{}
  \item Did you specify all the training details (e.g., data splits, hyperparameters, how they were chosen)?
    \answerNA{}
        \item Did you report error bars (e.g., with respect to the random seed after running experiments multiple times)?
    \answerNA{}
        \item Did you include the total amount of compute and the type of resources used (e.g., type of GPUs, internal cluster, or cloud provider)?
    \answerNA{}
\end{enumerate}

\item If you are using existing assets (e.g., code, data, models) or curating/releasing new assets...
\begin{enumerate}
  \item If your work uses existing assets, did you cite the creators?
    \answerNA{}
  \item Did you mention the license of the assets?
    \answerNA{}
  \item Did you include any new assets either in the supplemental material or as a URL?
    \answerNA{}
  \item Did you discuss whether and how consent was obtained from people whose data you're using/curating?
    \answerNA{}
  \item Did you discuss whether the data you are using/curating contains personally identifiable information or offensive content?
    \answerNA{}
\end{enumerate}

\item If you used crowdsourcing or conducted research with human subjects...
\begin{enumerate}
  \item Did you include the full text of instructions given to participants and screenshots, if applicable?
    \answerNA{}
  \item Did you describe any potential participant risks, with links to Institutional Review Board (IRB) approvals, if applicable?
    \answerNA{}
  \item Did you include the estimated hourly wage paid to participants and the total amount spent on participant compensation?
    \answerNA{}
\end{enumerate}

\end{enumerate}

%%%%%%%%%%%%%%%%%%%%%%%%%%%%%%%%%%%%%%%%%%%%%%%%%%%%%%%%%%%%
\fi

\ifpreprint
\else
\newpage
\setcounter{page}{1}
\fi
\appendix
\part*{Supplementary Material}

\section{Signed permutation representations}
\label{appendix:rho}

\subsection{Classification up to conjugacy}
\label{appendix:rho_clf_conj}

Let $\{e_1,\ldots,e_n\}$ be the standard orthonormal basis set on $\RR^n$. 
For each $i=1,\ldots,n$, define $e_{-i} = -e_i$.

For every $B\in\mathcal{PZ}(n)$, let $\psi_B:\mathcal{PZ}(n)\mapsto\mathcal{PZ}(n)$ be the inner automorphism defined by $\psi_B(A) = B^{-1}AB$. 
Using this notation, two signed perm-reps $\rho,\rho^{\prime}$ are conjugate if there exists $A\in\mathcal{PZ}(n)$ such that $\rho^{\prime} = \psi_A\circ\rho$.

The following proposition states that the property of irreducibility is invariant under conjugation, 
and it thus makes sense to speak of the irreducibility of an entire conjugacy class $\rho^{\PZ}$.

\begin{prop} \label{prop:irrconj}
Let $\rho$ be an irreducible signed perm-rep. 
Then every signed perm-rep $\rho^{\prime}$ conjugate to $\rho$ is also irreducible.
\end{prop}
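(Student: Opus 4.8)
The plan is to reduce everything to bookkeeping of signs under the action of the conjugating signed permutation. Write $\rho^{\prime} = \psi_A\circ\rho$ for some $A\in\PZ(n)$, so that $\rho^{\prime}(g) = A^{-1}\rho(g)A$ for all $g\in G$. The key structural observation is that $A$, being a signed permutation, carries each basis vector to a signed basis vector: there is a permutation $\sigma$ of $\{1,\ldots,n\}$ (the underlying ordinary permutation of $A$) and signs $\eps_i\in\{-1,1\}$ such that $Ae_i = \eps_i e_{\sigma(i)}$ for each $i$. Since $\PZ(n)$ is a group, $A^{-1}$ is also a signed permutation, with $A^{-1}e_{\sigma(i)} = \eps_i e_i$.

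First I would fix arbitrary indices $i,j\in\{1,\ldots,n\}$ and aim to produce $g\in G$ with $\rho^{\prime}(g)e_i = \pm e_j$. Applying irreducibility of $\rho$ to the pair of indices $\sigma(i),\sigma(j)$, there exists $g\in G$ with $\rho(g)e_{\sigma(i)} = \pm e_{\sigma(j)}$. Then I would simply chase this through the conjugation:
\[
\rho^{\prime}(g)e_i = A^{-1}\rho(g)Ae_i = \eps_i\, A^{-1}\rho(g)e_{\sigma(i)} = \pm\, A^{-1}e_{\sigma(j)} = \pm\, e_j,
\]
where at each step membership in the set $\{\pm e_k\}$ of signed basis vectors is all that is used and all that is preserved. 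Since $i,j$ were arbitrary, this shows $\rho^{\prime}$ is irreducible.

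There is no real obstacle here; the only thing to be careful about is tracking the $\pm$ signs correctly through the three applications of $A$, $\rho(g)$, and $A^{-1}$, and noting that the definition of irreducibility is insensitive to those signs precisely because it only asks for $\rho(g)e_i\in\{\pm e_j\}$. I would state the sign-tracking at the level of ``$Xe_i \in \{\pm e_k\}$'' facts rather than carrying explicit sign variables, to keep the argument clean. If desired, one can also phrase the whole proof orbit-theoretically: conjugation by $A$ is an isomorphism of the $G$-action permuting the $2n$ signed basis vectors $\{\pm e_1,\ldots,\pm e_n\}$, and irreducibility is exactly the statement that the induced $G$-action on the $n$ unordered pairs $\{e_i,-e_i\}$ is transitive, a property obviously preserved under such an isomorphism.
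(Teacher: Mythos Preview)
Your proposal is correct and follows essentially the same approach as the paper's proof: both use the fact that $A\in\PZ(n)$ sends each basis vector to a signed basis vector, then apply the irreducibility of $\rho$ to the images $Ae_i, Ae_j$ and conjugate back. The only difference is cosmetic---you explicitly name the underlying permutation $\sigma$ and signs $\eps_i$, whereas the paper works directly with the observation $Ae_i\in\{e_{\pm 1},\ldots,e_{\pm n}\}$.
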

\begin{proof}
Let $A\in\mathcal{PZ}(n)$ such that $\rho^{\prime}(g) = A^{-1}\rho(g)A\forall g\in G$. 
Note that for each $i=1,\ldots,n$, $Ae_i\in\{e_{\pm 1},\ldots,e_{\pm n}\}$. 
Thus, for every $i,j=1,\ldots,n$, there exists $g\in G$ such that
\begin{align*}
\rho(g)Ae_i &= \pm Ae_j \\
A^{-1}\rho(g)Ae_i &= \pm e_j \\
\rho^{\prime}(g)e_i &= \pm e_j.
\end{align*}
\end{proof}

We next prove a fundamental lemma that establishes a correspondence between irreducible signed perm-reps and the action of $G$ on certain coset spaces. 
This is a generalization of the correspondence between ordinary unsigned permutation representations and the action of $G$ on its coset spaces, which is often formalized in terms of the so-called ``Burnside ring''~\citep{burnside1911theory, bouc2000burnside}. 
This lemma is also the basis for the type 1 vs. type 2 dichotomy of irreducible signed perm-reps mentioned in Sec.~\ref{sec:cohomology}. 

We require two new definitions first. 
An \emph{unsigned permutation representation} (unsigned perm-rep) is a signed perm-rep $\rho$ such that $\rho(g)\in\mathcal{P}(n)\forall g\in G$.
A signed perm-rep $\rho$ is said to be \emph{transitive} on a set $S\subseteq\RR^n$ if for every $v,w\in S$, there exists $g\in G$ such that $\rho(g)v=w$. 

\begin{lemma} \label{lemma:HKU}
Let $\rho$ be an irreducible signed perm-rep. 
Define $K\leq H\leq G$ and $U\in\mathcal{Z}(n)$, $U=\diag(u_1,\ldots,u_n)$, by
\begin{align*}
H &= \{g\in G: \rho(g)e_1 = \pm e_1\} \\
K &= \{g\in G: \rho(g)e_1 = e_1\} \\
u_i &= \begin{cases}
1, & \mbox{ if }\exists g\in G\mid \rho(g)e_1 = e_i \\
-1, & \mbox{ otherwise.}
\end{cases} \\
\end{align*}
Let $\{g_1,\ldots, g_n\}$ be a transversal of $G/H$ such that $U\rho(g_i)U e_1 = e_i$. 
For each $i=1,\ldots,n$, define $g_{-i} = g_i h$ for some $h\in H\setminus K$ if $|H:K|=2$ and $h=1$ if $|H:K|=1$. 
Then:
\begin{enumerate}[label={(\alph*)}]
\item \label{lemma:HKU:a} 
$|H:K|\leq 2$.
\item \label{lemma:HKU:b} 
If $|H:K|=1$, then $U\rho(g)U e_i = e_j$ iff $gg_iK = g_jK$. 
Moreover, $g\rightarrow U\rho(g)U$ is an unsigned perm-rep and is transitive on $\{e_1,\ldots,e_n\}$.
\item \label{lemma:HKU:c} 
If $|H:K|=2$, then $\rho(g)e_i = e_j$ iff $gg_iK = g_jK$. 
Moreover, $U=I_n$ and $\rho$ is transitive on $\{\pm e_1,\ldots, \pm e_n\}$.
\end{enumerate}
\end{lemma}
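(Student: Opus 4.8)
The plan is to study the action of $G$ on the $2n$-element set $\{\pm e_1,\ldots,\pm e_n\}$ induced by $\rho$ and reduce the whole lemma to an orbit--stabilizer correspondence, with the dichotomy in (a) deciding which of two correspondences applies. For (a), note that $g\in H$ exactly when $\rho(g)e_1\in\{e_1,-e_1\}$, so by linearity of $\rho$ the map sending $g\in H$ to the sign $s(g)$ with $\rho(g)e_1=s(g)e_1$ is a homomorphism $H\to\{\pm 1\}$ with kernel $K$; hence $K\trianglelefteq H$ and $|H:K|\le 2$. The same computation shows $|H:K|=2$ precisely when $-e_1$ lies in the $\rho$-orbit of $e_1$ (a witness $g$ gives $g\in H\setminus K$, and conversely any $h\in H\setminus K$ must satisfy $\rho(h)e_1=-e_1$), while $|H:K|=1$ exactly when it does not. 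Irreducibility of $\rho$ guarantees that for each $i$ at least one of $e_i,-e_i$ lies in that orbit, so the two cases are genuinely exhaustive.

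In case (b), $|H:K|=1$, I would first check that for each $i$ \emph{exactly} one of $e_i,-e_i$ is reachable from $e_1$: if both were, say $\rho(g)e_1=e_i$ and $\rho(g')e_1=-e_i$, then $\rho(g'^{-1}g)e_1=-e_1$, contradicting $|H:K|=1$. This makes $U$ well defined with $u_1=1$, so $Ue_1=e_1$, and for any $g$, writing $\rho(g)e_1=\pm e_i$ and unwinding the definition of $u_i$ gives $U\rho(g)Ue_1=U\rho(g)e_1=e_i\in\{e_1,\ldots,e_n\}$. Set $\rho'=U\rho(\cdot)U$; it is conjugate to $\rho$, hence irreducible by Prop.~\ref{prop:irrconj}, so its orbit of $e_1$ is all of $\{e_1,\ldots,e_n\}$ (each $\rho'(g)e_1$ being positive). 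Choosing $g_i$ with $\rho'(g_i)e_1=e_i$ and using $\rho'(g)e_j=\rho'(gg_j)e_1\in\{e_1,\ldots,e_n\}$ shows $\rho'(g)\in\mathcal{P}(n)$ for every $g$, i.e.\ $\rho'$ is an ordinary perm-rep transitive on $\{e_1,\ldots,e_n\}$, with stabilizer of $e_1$ equal to $K$ because $Ue_1=e_1$. Then $\{g_i\}$ is a transversal of $G/K=G/H$, and the standard orbit--stabilizer bijection $g_iK\leftrightarrow e_i$ yields $U\rho(g)Ue_i=\rho'(g)e_i=e_j\iff gg_iK=g_jK$.

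In case (c), $|H:K|=2$, fix $h_0\in H\setminus K$, so $\rho(h_0)e_1=-e_1$. Since $\rho(gh_0)e_1=-\rho(g)e_1$, whenever $-e_i$ is reachable from $e_1$ so is $e_i$; by irreducibility every $e_i$ is then reachable, forcing $u_i=1$ for all $i$, i.e.\ $U=I_n$. As both $e_i$ and $-e_i$ are reachable, $\rho$ is transitive on the full $2n$-element set $\{\pm e_1,\ldots,\pm e_n\}$, with stabilizer $K$ of $e_1$ (index $2n=|G:K|$, as required). The extended transversal satisfies $\rho(g_i)e_1=e_i$ for $i=1,\ldots,n$ and $\rho(g_{-i})e_1=\rho(g_i)\rho(h_0)e_1=-e_i$, so $\{g_i:i=\pm1,\ldots,\pm n\}$ is a transversal of $G/K$, and $\rho(g)e_i=\rho(gg_i)e_1=e_j\iff gg_iK=g_jK$ by the same orbit--stabilizer bijection.

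The step I expect to be the real obstacle is the one in case (b) that promotes ``$U\rho(g)U$ treats $e_1$ correctly'' to ``$U\rho(g)U$ is an honest permutation matrix for every $g$''. The delicacy is that $U$ is defined purely through reachability from $e_1$, so one cannot directly control $\rho(g)e_j$ for $j\ne 1$; the fix is to transport $e_j$ back to $e_1$ using transitivity of the conjugated (irreducible) representation before completing the sign bookkeeping, and it is precisely here, together with the well-definedness of $U$, that the hypothesis $|H:K|=1$ gets used in an essential way.
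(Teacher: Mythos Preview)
Your proof is correct and follows essentially the same route as the paper's: both reduce the lemma to the orbit--stabilizer correspondence for the action of $G$ on $G/K$, with the transversal $\{g_i\}$ (resp.\ $\{g_{\pm i}\}$) mediating the bijection to $\{e_1,\ldots,e_n\}$ (resp.\ $\{\pm e_1,\ldots,\pm e_n\}$). The differences are organizational rather than substantive. For (a), you invoke the sign homomorphism $H\to\{\pm1\}$ with kernel $K$, whereas the paper computes $H=K\cup hK$ directly; your version is a line shorter and makes $K\trianglelefteq H$ immediate. For (b), you reverse the paper's order: you first establish that $U\rho(\cdot)U$ is an ordinary perm-rep transitive on $\{e_1,\ldots,e_n\}$ (by showing exactly one of $\pm e_i$ is reachable from $e_1$ and then transporting via $g_j$), and then read off the iff from orbit--stabilizer; the paper instead proves the iff by a direct chain of equivalences and deduces transitivity and positivity from it. Your explicit ``exactly one of $\pm e_i$ is reachable'' step fills in a point the paper leaves implicit, and your identification of the delicate step---promoting control at $e_1$ to all $e_j$ via transitivity of the conjugated rep---is exactly where the argument hinges.
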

\begin{proof}
\textbf{(a) } %
If there is no $h\in G$ such that $\rho(h)e_1=-e_1$, then $H=K$, and hence $|H:K|=1$. 
On the other hand, suppose there exists $h\in G$ such that $\rho(h)e_1=-e_1$. 
Then we have
\begin{align*}
H 
&= \{g\in G: \rho(g)e_1 = \pm e_1\} \\
&= \{g\in G: \rho(g)e_1 = e_1\}\cup \{g\in G: \rho(g)e_1 = -e_1\} \\
&= K\cup \{g\in G: \rho(h^{-1})\rho(g)e_1 = -\rho(h^{-1})e_1\} \\
&= K\cup \{g\in G: \rho(h^{-1}g)e_1 = e_1\} \\
&= K\cup hK,
\end{align*}
and hence $|H:K|=2$.

\textbf{(b) } %
Suppose $|H:K|=1$. 
Let $i,j\in\{\pm 1,\ldots, \pm n\}$, 
and suppose there exists $g\in G$ such that $U\rho(g)Ue_i = e_j$. 
We have
\begin{align*}
U\rho(g)U e_i &= e_j \\
U\rho(g)U U\rho(g_i)U e_1 &= U\rho(g_j)U e_1 \\
\rho(g)\rho(g_i)u_1 e_1 &= \rho(g_j)u_1 e_1 \\
\rho(g_j^{-1}gg_i) e_1 &= e_1 \\
g_j^{-1}gg_i &\in K \\
gg_iK &= g_jK.
\end{align*}
This sequence of inferences holds in reverse as well, thus establishing the first part of the claim. 

Since $|H:K|=1$, then $g_{-i}=g_i\forall i\in\{1,\ldots, n\}$, 
and hence the above states that $g\rightarrow U\rho(g)U$ is equivalent to the action of $G$ on $\{g_1K,\ldots, g_nK\}$, 
which is exactly the coset space $G/K$ since $K=H$. 
By the established equivalence, $g\rightarrow U\rho(g)U$ acts transitively on $\{e_1,\ldots, e_n\}$. 
That $g\rightarrow U\rho(g)U$ is an unsigned perm-rep immediately follows from this transitivity.

\textbf{(c) } %
Suppose $|H:K|=2$. 
By the same reasoning as in (b), we can establish that $U\rho(g)U e_i = e_j$ iff $gg_iK = g_jK$. 
Since $|H:K|=2$, then clearly $G/K = \{g_{\pm 1}K,\ldots, g_{\pm n}K\}$. 
We thus have that $g\rightarrow U\rho(g)U$ is equivalent to the action of $G$ on $G/K$. 
By this equivalence, $g\rightarrow U\rho(g)U$ acts transitively on $\{\pm e_1,\ldots, \pm e_n\}$. 
That $U=I_n$ immediately follows from this transitivity, 
and this in turn implies that $\rho(g)e_i = e_j$ iff $gg_iK = g_jK$ and that $\rho$ acts transitively on $\{\pm e_1,\ldots, \pm e_n\}$.
\end{proof}

\begin{remark}
In Lemma~\ref{lemma:HKU}, the irreducibility of the signed perm-rep $\rho$ is necessary to guarantee the existence of $g_i\in G$ such that $U\rho(g_i)U e_1 = e_i$ for each $i=1,\ldots,n$.
\end{remark}

\begin{remark} \label{remark:type}
In Lemma~\ref{lemma:HKU}, the signed perm-rep $\rho$ is said to be of \emph{type 1} (resp. \emph{type 2}) if $|H:K|=1$ (resp. $|H:K|=2$).
\end{remark}

We now prove Thm.~\ref{thm:rho_clf_conj}.

\begin{proof}[Proof of Thm.~\ref{thm:rho_clf_conj}] 
\textbf{(a) } %
Recall by definition of $\mathcal{C}^G_{\leq 2}$, either $|H:K|=1$ or $|H:K|=2$. 
Let $i,j\in\{1,\ldots,n\}$. 
Since $G$ acts transitively on $G/H$, then there exists $g\in G$ such that $gg_iH = g_jH$. 
If $|H:K|=1$, then this is equivalently $gg_iK = g_jK$ so that $\rho_{HK}(g)e_i = e_j$; 
the rep $\rho_{HK}$ is thus irreducible. 
If instead $|H:K|=2$, then we have either $gg_iK = g_jK$ or $gg_iK = g_jhK = g_{-j}K$, 
so that $\rho_{HK}(g)e_i = e_{\pm j} = \pm e_j$; 
the rep $\rho_{HK}$ is still irreducible.

\textbf{(b) } %
Let $\rho^{\prime}$ be an irreducible signed perm-rep. 
We handle type 1 and type 2 as separate cases.

\textbf{(Case 1) } %
Suppose $\rho^{\prime}$ is type 1. 
Then by Lemma~\myref{lemma:HKU}{b}, $\rho^{\prime}$ is conjugate to an unsigned perm-rep. 
We can therefore assume, without loss of generality, that $\rho^{\prime}$ is an unsigned perm-rep and thus corresponds to the action of $G$ on $G/H^{\prime}$ for some $H^{\prime}\leq G$. 
Let $(H, H)^G\in\mathcal{C}^G_{\leq 2}$ be the unique conjugacy class such that $H$ is conjugate to $H^{\prime}$. 
Note that $\rho_{HH}$ is also an unsigned perm-rep and is clearly conjugate to $\rho^{\prime}$, 
thus completing the proof for the type 1 case.

\textbf{(Case 2) } %
Suppose $\rho^{\prime}$ is type 2, and define
\begin{align*}
H^{\prime} &= \{g\in G: \rho^{\prime}(g)e_1 = \pm e_1\} \\
K^{\prime} &= \{g\in G: \rho^{\prime}(g)e_1 = e_1\} \\
g^{\prime}_iK^{\prime} &= \{g\in G: \rho^{\prime}(g)e_1 = e_i\}\forall i=1,\ldots,n.
\end{align*}
Then there exists a unique $(H, K)^G\in\mathcal{C}^G_{\leq 2}$ and $g_*\in G$ such that
\begin{align*}
H^{\prime} &= g_*Hg_*^{-1} \\
K^{\prime} &= g_*Kg_*^{-1}.
\end{align*}
Note that since $\rho^{\prime}$ is type 2, then $|H:K|=|H^{\prime}:K^{\prime}|=2$; 
thus, $G/K = \{g_{\pm 1}K,\ldots, g_{\pm n}K\}$. 
Define $\sigma:G\mapsto \{g_{\pm 1},\ldots,g_{\pm n}\}$ such that $g\in \sigma(g)K$, 
and define a permutation $\pi$ on $\{\pm 1,\ldots,\pm n\}$ such that
\[ g_{\pi(i)} = \sigma(g^{\prime}_i g_*). \]
Let $A\in\mathcal{PZ}(n)$ such that $Ae_i = e_{\pi(i)}$ for each $i$. 
Then we claim $\rho^{\prime} = \psi_A\circ\rho_{HK}$. 
For any $g\in G$ and $i\in\{\pm 1,\ldots,\pm n\}$, 
let $j\in\{\pm 1,\ldots,\pm n\}$ such that $\rho^{\prime}(g)e_i = e_j$. 
Using Lemma~\myref{lemma:HKU}{c}, we have
\begin{align*}
\rho^{\prime}(g)e_i &= e_j \\
gg^{\prime}_iK^{\prime} &= g^{\prime}_jK^{\prime} \\
gg^{\prime}_i g_*Kg_*^{-1} &= g^{\prime}_j g_*Kg_*^{-1} \\
gg^{\prime}_i g_*K &= g^{\prime}_j g_*K \\
g\sigma(g^{\prime}_i g_*)K &= \sigma(g^{\prime}_j g_*)K \\
gg_{\pi(i)}K &= g_{\pi(j)}K \\
\rho_{HK}(g)e_{\pi(i)} &= e_{\pi(j)} \\
\rho_{HK}(g)Ae_i &= Ae_j \\
A^{-1}\rho_{HK}(g)Ae_i &= e_j \\
(\psi_A\circ\rho_{HK})(g)e_i &= e_j.
\end{align*}
This sequence of inferences holds in the reverse direction as well, and hence $\rho^{\prime} = \psi_A\circ\rho_{HK}$ as claimed.
\end{proof}

\begin{remark} \label{remark:transversal}
The transversal $\{g_1,\ldots,g_n\}$ of $G/H$ used in the definition of $\rho_{HK}$ can be recovered from the latter up to $K$. 
Let $\{g^{\prime}_1,\ldots, g^{\prime}_n\}$ be another transversal of $G/H$ such that $\rho_{HK}(g^{\prime}_i)e_1 = e_i$. 
By definition of $\rho_{HK}$, $g^{\prime}_ig_1K = g_iK$. 
Since $g_1\in K$, then $g^{\prime}_iK = g_iK$.
\end{remark}

\subsection{Some useful properties}
\label{appendix:useful}

For every $z\in\{-1, 1\}^n$, define the signed perm-rep
\[ \rho_{HK;z}(g) = \diag(z)\rho_{HK}(g)\diag(z)\forall g\in G. \]
The following proposition and subsequent corollary list some useful properties of the $\rho_{HK;z}$. 
Note that $\rho_{HK} = \rho_{HK;z}$ with $z=\vec{1}$, and hence the statements below hold in particular for the $\rho_{HK}$ as well.

\begin{prop} \label{prop:rho_HKz}
Let $\rho = \rho_{HK;z}$ be an irreducible signed perm-rep. 
Let $Z = \diag(z)$. 
Then the following statements are true:
\begin{enumerate}[label={(\alph*)}] 
\item \label{prop:rho_HKz:a} 
The subgroups $H$ and $K$ satisfy
\begin{align*}
H &= \{g\in G: \rho(g)e_1 = \pm e_1\} \\
K &= \{g\in G: \rho(g)e_1 = e_1\},
\end{align*}
and $\rho$ is of type $|H:K|$.
\item \label{prop:rho_HKz:b} 
If $\rho$ is type 1 ($|H:K|=1$), then $\rho_{HK}(g) = Z\rho(g)Z$ is an unsigned perm-rep that acts transitively on $\{e_1,\ldots, e_n\}$.
\item \label{prop:rho_HKz:c} 
If $\rho$ is type 2 ($|H:K|=2$), then $\rho$ acts transitively on $\{\pm e_1,\ldots, \pm e_n\}$.
\end{enumerate}
\end{prop}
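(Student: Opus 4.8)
The plan is to deduce the whole proposition from facts about $\rho_{HK}$ already established in Thm.~\ref{thm:rho_clf_conj} and Lemma~\ref{lemma:HKU}, exploiting that $\rho = \rho_{HK;z}$ is conjugate to $\rho_{HK}$ by the matrix $Z = \diag(z)$, which is an involution ($Z^2 = I_n$) and permutes the set $\{\pm e_1,\ldots,\pm e_n\}$. First I would note that $\rho$ is irreducible: $\rho_{HK}$ is irreducible by Thm.~\myref{thm:rho_clf_conj}{a}, and conjugation preserves irreducibility by Prop.~\ref{prop:irrconj}.

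For part~\ref{prop:rho_HKz:a}, the idea is to compute the stabilizers of $e_1$ directly. Since $Ze_1 = z_1 e_1$ and $z_1^2 = 1$, one has $\rho(g)e_1 = z_1\, Z\rho_{HK}(g)e_1$, from which $\rho(g)e_1 = \pm e_1 \iff \rho_{HK}(g)e_1 = \pm e_1$ and $\rho(g)e_1 = e_1 \iff \rho_{HK}(g)e_1 = e_1$. It therefore suffices to identify these two sets for $\rho_{HK}$ itself, and a short computation with its defining coset rule — using the hypothesis $g_1 \in K$ (so that $g_1 K = K$), the convention $e_{-1} = -e_1$, and the fact that $g_{-1}$ represents the nontrivial coset of $K$ in $H$ when $|H:K| = 2$ — gives $\{g : \rho_{HK}(g)e_1 = e_1\} = K$ and $\{g : \rho_{HK}(g)e_1 = \pm e_1\} = H$. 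Since $|H:K| \leq 2$ because $(H,K)^G \in \mathcal{C}^G_{\leq 2}$, the type of $\rho$, read off from these stabilizer subgroups as in Lemma~\ref{lemma:HKU} and Remark~\ref{remark:type}, is $|H:K|$.

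For part~\ref{prop:rho_HKz:b}, when $|H:K| = 1$ we have $H = K$, so the defining rule of $\rho_{HK}$ restricted to positive indices reads $\rho_{HK}(g)e_i = e_j \iff gg_iH = g_jH$; this is just the ordinary permutation representation of $G$ on $G/H$ via the transversal $\{g_1,\ldots,g_n\}$, hence an ordinary perm-rep that is transitive on $\{e_1,\ldots,e_n\}$ by transitivity of the $G$-action on $G/H$. Then $Z\rho(g)Z = Z(Z\rho_{HK}(g)Z)Z = \rho_{HK}(g)$ using $Z^2 = I_n$, which is exactly the assertion. For part~\ref{prop:rho_HKz:c}, when $|H:K| = 2$ the same rule identifies $\rho_{HK}$ with the (transitive) $G$-action on the $2n$-element coset space $G/K = \{g_{\pm 1}K,\ldots,g_{\pm n}K\}$ acting on $\{\pm e_1,\ldots,\pm e_n\}$; conjugation by $Z$ then preserves transitivity on this set because $Z$ permutes it — given $v,w$ in the set, choose $g$ with $\rho_{HK}(g)(Zv) = Zw$ and conclude $\rho(g)v = Z\rho_{HK}(g)Zv = w$. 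Alternatively, since part~\ref{prop:rho_HKz:a} shows that the subgroups Lemma~\ref{lemma:HKU} attaches to $\rho$ are $H$ and $K$, one may apply Lemma~\myref{lemma:HKU}{c} to $\rho$ directly.

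I do not expect a serious obstacle here: the proposition is essentially a repackaging of Thm.~\ref{thm:rho_clf_conj} and Lemma~\ref{lemma:HKU} through the involution $Z$. The one point requiring care is part~\ref{prop:rho_HKz:a} — correctly unwinding the coset definition of $\rho_{HK}$ to pin the stabilizers of $e_1$ down to exactly $K$ and $H$ (rather than only up to conjugacy), which is where the normalization $g_1 \in K$ is used, while keeping the $e_{-i} = -e_i$ convention straight in the type~2 case.
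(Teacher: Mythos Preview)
Your proposal is correct and follows essentially the same route as the paper: reduce the stabilizer computation for $\rho$ to that for $\rho_{HK}$ via the involution $Z$ (using $Ze_1=z_1e_1$), read off $H$ and $K$ from the coset definition of $\rho_{HK}$ using $g_1\in K$, and then for parts~(b) and~(c) identify $\rho_{HK}$ with the $G$-action on $G/H$ (resp.\ $G/K$) and invoke Lemma~\ref{lemma:HKU}. The paper's proof of~(c) is exactly your second alternative---a direct appeal to Lemma~\myref{lemma:HKU}{c} once part~(a) pins down $H$ and $K$---so you may drop the first.
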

\begin{proof}
\textbf{(a) } %
Define the subgroups
\begin{align*}
H^{\prime} &= \{g\in G: \rho(g)e_1 = \pm e_1\} \\
K^{\prime} &= \{g\in G: \rho(g)e_1 = e_1\}.
\end{align*}
We then have
\begin{align*}
H^{\prime} 
&= \{g\in G: Z\rho_{HK}(g)Ze_1 = \pm e_1\} \\
&= \{g\in G: \rho_{HK}(g)Ze_1 = \pm Ze_1\} \\
&= \{g\in G: \rho_{HK}(g)z_1e_1 = \pm z_1e_1\} \\
&= \{g\in G: \rho_{HK}(g)e_1 = \pm e_1\}.
\end{align*}
By definition of $\rho_{HK}$ in Thm.~\ref{thm:rho_clf_conj}, we have
\begin{align*}
H^{\prime} 
&= \{g\in G: gg_1K = g_{\pm 1}K\} \\
&= \{g\in G: gK = K \mbox{ or } gK = hK\} \\
&= K\cup hK \\
&= H.
\end{align*}
We can similarly show that $K^{\prime}=K$. 
By definition of type in Remark~\ref{remark:type}, $\rho$ is of type $|H^{\prime}:K^{\prime}| = |H:K|$.

\textbf{(b) } %
Suppose $\rho$ is type 1 so that $|H:K|=1$ and hence $H=K$. 
Then $\rho_{HK}(g)e_i = e_j$ iff $gg_iK = g_jK$, 
where $\{g_1,\ldots,g_n\}$ is the transversal of $G/H$ used in the definition of $\rho_{HK}$ in Thm.~\ref{thm:rho_clf_conj}. 
Since $H=K$, however, $\{g_1,\ldots,g_n\}$ is equivalently a transversal of $G/K$, 
and hence we see that the action of $\rho_{HK}$ is equivalent to the action of $G$ on $G/K$. 
As in the proof of Lemma~\myref{lemma:HKU}{b}, this implies the claim.

\textbf{(c) } %
Suppose $\rho$ is type 2. 
Then the claim immediately follows by Lemma~\myref{lemma:HKU}{c}.
\end{proof}

The following corollary results from the combination of Lemma~\ref{lemma:HKU} and Prop.~\ref{prop:rho_HKz}.

\begin{cor} \label{cor:HK}
Let $\rho$ be an irreducible signed perm-rep. 
Define $K\leq H\leq G$ and $z\in\{-1, 1\}^n$ by
\begin{align*}
H &= \{g\in G: \rho(g)e_1 = \pm e_1\} \\
K &= \{g\in G: \rho(g)e_1 = e_1\} \\
z_i &= \begin{cases}
1, & \mbox{ if }\exists g\in G\mid \rho(g)e_1 = e_i \\
-1, & \mbox{ otherwise.}
\end{cases} \\
\end{align*}
Then $\rho = \rho_{HK;z}$. 
Moreover, if $\rho$ is type 2 ($|H:K|=2$), then $z=\vec{1}$ so that $\rho = \rho_{HK}$.
\end{cor}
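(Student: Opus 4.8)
The plan is to apply Lemma~\ref{lemma:HKU} directly to the given irreducible signed perm-rep $\rho$ and then identify the resulting description of $\rho$ with the definition of $\rho_{HK}$ in Thm.~\ref{thm:rho_clf_conj}; Prop.~\ref{prop:rho_HKz} enters only as a final consistency check, as the statement's attribution ``combination of Lemma~\ref{lemma:HKU} and Prop.~\ref{prop:rho_HKz}'' suggests. First I would note that the diagonal matrix $U=\diag(u_1,\ldots,u_n)$ produced by Lemma~\ref{lemma:HKU} is exactly $\diag(z)$, since the condition defining $u_i$ there is verbatim the condition defining $z_i$ here; in particular $z_1=1$ because $\rho(1)e_1=e_1$. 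Lemma~\myref{lemma:HKU}{a} gives $|H:K|\leq 2$, so $(H,K)^G\in\mathcal{C}^G_{\leq 2}$ and $\rho_{HK}$---hence $\rho_{HK;z}$---is a well-defined signed perm-rep of degree $n$, and the type of $\rho$ in the sense of Remark~\ref{remark:type} is $|H:K|$.

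Next I would fix the transversal $\{g_1,\ldots,g_n\}$ of $G/H$ that Lemma~\ref{lemma:HKU} supplies, characterized by $U\rho(g_i)Ue_1=e_i$, and verify it is admissible in the definition of $\rho_{HK}$, namely that $g_1\in K$: in the type~2 case $U=I_n$ by Lemma~\myref{lemma:HKU}{c}, so $\rho(g_1)e_1=e_1$; in the type~1 case $Ue_1=z_1e_1=e_1$, so $U\rho(g_1)Ue_1=e_1$ forces $\rho(g_1)e_1=e_1$. By Remark~\ref{remark:transversal} the rep $\rho_{HK}$ is independent of the choice of admissible transversal, so we may use this one; moreover, in the type~2 case the auxiliary element $h\in H\setminus K$ used to form $g_{-i}=g_ih$ is immaterial, since $hK$ is the unique nontrivial coset of $K$ in $H$.

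With the transversal pinned down, the argument splits by type and reduces to comparing coset characterizations. In the type~1 case, Lemma~\myref{lemma:HKU}{b} says $U\rho(g)Ue_i=e_j$ iff $gg_iK=g_jK$, while the definition of $\rho_{HK}$ (with $g_{-i}=g_i$ since $h=1$) says $\rho_{HK}(g)e_i=e_j$ iff $gg_iK=g_jK$; since a signed permutation is determined by its action on $e_1,\ldots,e_n$, it follows that $\rho_{HK}(g)=U\rho(g)U$ for all $g\in G$, and as $U=U^{-1}=\diag(z)$ this rearranges to $\rho(g)=\diag(z)\rho_{HK}(g)\diag(z)=\rho_{HK;z}(g)$. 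In the type~2 case, Lemma~\myref{lemma:HKU}{c} yields both $U=I_n$---so $z=\vec{1}$ and $\rho_{HK;z}=\rho_{HK}$---and $\rho(g)e_i=e_j$ iff $gg_iK=g_jK$, which is exactly the defining relation of $\rho_{HK}$ on the transversal $\{g_{\pm 1},\ldots,g_{\pm n}\}$; hence $\rho=\rho_{HK}=\rho_{HK;z}$, which in particular gives the ``moreover'' clause. As a last check, Prop.~\myref{prop:rho_HKz}{a} confirms that applying the stabilizer formulas to $\rho=\rho_{HK;z}$ recovers the same subgroups $H$ and $K$.

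I expect the step needing the most care to be the transversal bookkeeping in the second paragraph: confirming that Lemma~\ref{lemma:HKU}'s transversal is one of those permitted in the definition of $\rho_{HK}$ (this is the role of checking $g_1\in K$), that $\rho_{HK}$ is genuinely transversal-independent (Remark~\ref{remark:transversal}), and that $hK$ does not depend on the choice of $h\in H\setminus K$. Everything after those points is a mechanical identification of two coset actions, as carried out above.
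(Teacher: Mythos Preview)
Your proposal is correct and follows essentially the same approach as the paper's proof: identify Lemma~\ref{lemma:HKU}'s $U$ with $\diag(z)$, use the coset characterizations in Lemma~\myref{lemma:HKU}{b}--\ref{lemma:HKU:c} to recognize $\diag(z)\rho(\cdot)\diag(z)$ as $\rho_{HK}$, and then unwind to $\rho=\rho_{HK;z}$. You simply supply more of the bookkeeping (verifying $g_1\in K$, invoking Remark~\ref{remark:transversal}, noting the irrelevance of the choice of $h$) that the paper leaves implicit.
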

\begin{proof}
If $\rho$ is type 2, then by Lemma~\myref{lemma:HKU}{c}, $\rho$ is transitive on $\{\pm e_1,\ldots,\pm e_n\}$ so that $z_i=1$ for each $i=1,\ldots,n$. 
Now let $\rho_z(g) = \diag(z)\rho(g)\diag(z)\forall g\in G$. 
Then again by Lemma~\ref{lemma:HKU}, $\rho_z(g)e_i = e_j$ iff $gg_iK = g_jK$; 
however, recalling Thm.~\ref{thm:rho_clf_conj}, this is identical to the definition of $\rho_{HK}$. 
Hence, $\rho_z = \rho_{HK}$, or equivalently $\rho = \rho_{HK;z}$.
\end{proof}

\subsection{Group cohomology}
\label{appendix:cohomology}

For every signed perm-rep $\rho_{HK}$, let $\pi_H:G\mapsto\mathcal{P}(n)$ and $\omega_{HK}:G\mapsto\mathcal{Z}(n)$ be the unique functions satisfying $\rho_{HK}(g) = \omega_{HK}(g)\pi_H(g)\forall g\in G$.%
\footnote{By uniqueness of factorization in a semidirect product, there exist unique functions $\pi:G\mapsto\mathcal{P}(n)$ and $\zeta,\omega:G\mapsto\mathcal{Z}(n)$ such that $\rho(g) = \pi(g)\zeta(g) = \omega(g)\pi(g)\forall g\in G$.}
%The superscript ``L'' in $\omega$ indicates that it is the left factor, in contrast to $\zeta$.}
The following proposition justifies the notation $\pi_H$; i.e., $\pi_H$ does not depend on the choice of $K$ and $z$.

\begin{prop} \label{prop:pi_H}
Let $\rho_{HK}$ be an irreducible signed perm-rep of $G$, 
and let $\pi_{HK}:G\mapsto\mathcal{P}(n)$ and $\zeta_{HK}:G\mapsto\mathcal{Z}(n)$ be the unique functions satisfying $\rho_{HK}(g) = \pi_{HK}(g)\zeta_{HK}(g)\forall g\in G$. 
Then $\pi_{HK}$ is independent of $K$.
\end{prop}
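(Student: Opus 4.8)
The plan is to compute the permutation factor $\pi_{HK}$ of $\rho_{HK}$ directly from the definition given in Thm.~\ref{thm:rho_clf_conj} and to recognize it as the coset action of $G$ on $G/H$ — a description in which $K$ plays no role. Fix once and for all a transversal $(g_1,\ldots,g_n)$ of $G/H$ with $g_1=1$; this is admissible for every $K\leq H$ because $1\in K$ always. For each such $K$ with $|H:K|\leq 2$, extend the transversal to indices $\pm 1,\ldots,\pm n$ by $g_{-i}=g_i h$, where $h\in H\setminus K$ if $|H:K|=2$ and $h=1$ otherwise; note that when $|H:K|=2$ the set $H\setminus K$ is a single left coset of $K$, so the cosets $g_{-i}K$, and hence $\rho_{HK}$ itself, do not depend on the particular choice of $h$. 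Recall that $\rho_{HK}(g)e_i=e_j$ iff $gg_iK=g_jK$, for $i,j\in\{\pm 1,\ldots,\pm n\}$.

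First I would extract $\pi_{HK}$ column by column. Given $g\in G$ and $i\in\{1,\ldots,n\}$, let $j=j(g,i)\in\{\pm 1,\ldots,\pm n\}$ be the unique index with $gg_iK=g_jK$, so that $\rho_{HK}(g)e_i=e_j=\sgn(j)\,e_{|j|}$. By uniqueness of the factorization in $\PZ(n)=\mathcal{P}(n)\ltimes\mathcal{Z}(n)$, the identity $\rho_{HK}(g)=\pi_{HK}(g)\zeta_{HK}(g)$ forces $\zeta_{HK}(g)e_i=\sgn(j)\,e_i$ and $\pi_{HK}(g)e_i=e_{|j|}$. The crucial point is then that passing from $K$-cosets to $H$-cosets erases the sign: since $h\in H$, we have $g_{-k}H=g_khH=g_kH$, hence $g_jH=g_{|j|}H$ for every index $j$. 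Therefore $gg_iK=g_jK$ implies $gg_iH=g_jH=g_{|j|}H$, and because $(g_1,\ldots,g_n)$ is a transversal of $G/H$ this pins down $|j|$ uniquely as the label of the coset $gg_iH$. In other words,
\[ \pi_{HK}(g)e_i=e_{|j|}\quad\text{where}\quad gg_iH=g_{|j|}H, \]
so $\pi_{HK}$ is exactly the permutation representation of $G$ on $G/H$ carried by the fixed transversal. This formula refers only to $H$ and the transversal — not to $K$ and not to $h$ — so $\pi_{HK}$ is one and the same $\mathcal{P}(n)$-valued function for every $K\leq H$ with $|H:K|\leq 2$, which justifies the notation $\pi_H$.

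I do not expect a genuine obstacle here; the argument is essentially bookkeeping. The two points requiring a little care are the translation between the index set $\{\pm 1,\ldots,\pm n\}$ on which $\rho_{HK}$ acts and the index set $\{1,\ldots,n\}$ on which $\pi_{HK}$ and $\zeta_{HK}$ are defined, and checking that both cases of the type dichotomy of Lemma~\ref{lemma:HKU} are covered. In the type 1 case $h=1$, $g_{-i}=g_i$, and $\rho_{HK}$ is already an ordinary perm-rep equal to the coset action on $G/H=G/K$; in the type 2 case $h\notin K$, and the identity $g_{-k}H=g_kH$ is precisely what collapses the extra sign data. Neither case is problematic, so the proof stays short.
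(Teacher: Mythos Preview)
Your proposal is correct and follows essentially the same route as the paper's proof: both fix a transversal of $G/H$ with $g_1=1$, read off $\pi_{HK}(g)e_i=e_{|j|}$ from $\rho_{HK}(g)e_i=e_j$ via the unique $\mathcal{P}(n)\ltimes\mathcal{Z}(n)$ factorization, and then observe that $gg_iK=g_{\pm j}K$ collapses to $gg_iH=g_{|j|}H$, so that $\pi_{HK}$ is precisely the coset action of $G$ on $G/H$. Your treatment is slightly more explicit about why the same transversal serves every $K$ and about the type 1/type 2 split, but there is no substantive difference.
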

\begin{proof}
As in Remark~\ref{remark:transversal}, let $\{g_1,\ldots,g_n\}$ be a transversal of $G/H$ such that $\rho_{HK}(g_i)e_1 = e_i$ for $i=1,\ldots,n$. 
In general, $g_1\in K$; however, without loss of generality, assume $g_1=1$ so that it is independent of $K$. 
If $g\in G$ and $i,j\in\{\pm 1,\ldots,\pm n\}$ such that $\rho_{HK}(g)e_i = e_j$, 
then define $\pi_{HK}$ and $\zeta_{HK}$ such that
\begin{align*}
\pi_{HK}(g)e_i &= e_{|j|} \\
\zeta_{HK}(g)e_i &= \operatorname{sign}(j).
\end{align*}
It is then easy to verify that $\rho_{HK}(g) = \pi_{HK}(g)\zeta_{HK}(g)\forall g\in G$; 
hence by uniqueness, these are the correct definitions of $\pi_{HK}$ and $\zeta_{HK}$. 
By these definitions, for $g\in G$ and $i,j\in\{1,\ldots,n\}$, $\pi_{HK}(g)e_i = e_j$ iff $gg_iK = g_{\pm j}K$, 
which in turn holds iff $gg_iH = g_jH$. 
This reveals that $\pi_{HK}$ does not depend on $K$ but only $H$.
\end{proof}

The following proposition relates the structure of irreducible signed perm-reps of $G$ to its cohomology.

\begin{prop} \label{prop:cohomology}
For every conjugacy class $\rho_{HK}^{\PZ}$ of irreducible signed perm-reps, 
define the $G$-module $M_H = (\{0, 1\}^n, \pi_H)$ under addition modulo $2$, 
where $n = |G|/|H|$. 
Define $\hat{\omega}_{HK}:G\mapsto M_H$ such that $\hat{\omega}_{HK}(g) = \frac{1}{2}[I-\diag(\omega_{HK}(g))]$. 
Then:
\begin{enumerate}[label={(\alph*)}]
\item \label{prop:cohomology:a} 
The first cohomology group of $G$ with coefficients in $M_H$ is given by%
\footnote{We use the notation $\{\ldots\}_{\neq}$ to emphasize that, during the construction of the set, the enumerated elements are distinct.}
\[ \mathcal{H}^1(G, M_H) = \{[hat{\omega}_{HK}]: K\leq H\mid |H:K|\leq 2\}_{\neq}, \]
where $[\hat{\omega}_{HK}]$ is the set of all cocycles cohomologous to $\omega_{HK}$, 
and where the addition operation satisfies
\[ [\hat{\omega}_{HK}] = [\hat{\omega}_{HK_1}] + [\hat{\omega}_{HK_2}] 
\Leftrightarrow 
K = K_1\cap K_2\cup ((H\setminus K_1)\cap (H\setminus K_2)). \]
\item \label{prop:cohomology:b} 
The partition of the first cohomology group into orbits under the action of the $G$-module automorphism group $\operatorname{aut}(M_H)$ is given by
\[ H^1(G, M_H)/\operatorname{aut}(M_H) = \{\{[\hat{\omega}_{HK^{\prime}}]: (H, K^{\prime})\in (H, K)^G\}: (H, K)^G\in \mathcal{C}^G_{\leq 2}\}. \]
\item \label{prop:cohomology:c} 
$\rho_{HK}$ is type 1 if and only if $\hat{\omega}_{HK}$ is in the zero cohomology class.
\end{enumerate}
\end{prop}
\begin{proof}[Proof of Prop.~\ref{prop:cohomology}] 
\textbf{(a) } %
We first show that every $\hat{\omega}_{HK}$ is a $1$-cocycle by verifying the cocycle condition. 
For $g_1,g_2\in G$, we have
\begin{align*}
\omega_{HK}(g_1g_2)\pi_H(g_1g_2) 
&= \rho_{HK}(g_1g_2) \\
&= \rho_{HK}(g_1)\rho_{HK}(g_2) \\
&= \omega_{HK}(g_1)\pi_H(g_1) \omega_{HK}(g_2)\pi_H(g_2) \\
&= \omega_{HK}(g_1)\pi_H(g_1) \omega_{HK}(g_2) \pi_H(g_1)^{\top} \pi_H(g_1)\pi_H(g_2).
\end{align*}
Equating the factors contained in $\mathcal{Z}(n)$, we have
\[ \omega_{HK}(g_1g_2) = \omega_{HK}(g_1)\pi_H(g_1)\omega_{HK}(g_2)\pi_H(g_1)^{\top}. \]
Writing this in terms of vectors in $M_H$, we obtain the $1$-cocycle condition:
\[ \hat{\omega}_{HK}(g_1g_2) = \hat{\omega}_{HK}(g_1) + \pi_H(g_1)\hat{\omega}_{HK}(g_2), \]
and hence $\hat{\omega}_{HK}$ is a $1$-cocycle.

Next, before proving the main claim, we characterize all cocycles cohomologous to $\hat{\omega}_{HK}$. 
For every $z\in\{-1, 1\}^n$, define the signed perm-rep $\rho_{HK;z}(g) = \diag(z)\rho_{HK}(g)\diag(z)\forall g\in G$, 
and let $\pi_{H;z}:G\mapsto\mathcal{P}(n)$ and $\omega_{HK;z}:G\mapsto\mathcal{Z}(n)$ be the unique functions satisfying $\rho_{HK;z}(g) = \omega_{HK;z}(g)\pi_{H;z}(g)\forall g\in G$. 
We have for all $g\in G$,
\begin{align*}
\rho_{HK;z}(g) &= \diag(z)\rho_{HK}(g)\diag(z) \\
\omega_{HK;z}(g)\pi_{H;z}(g) &= \diag(z)\omega_{HK}(g)\pi_H(g)\diag(z) \\
&= \diag(z)\omega_{HK}(g)\pi_H(g)\diag(z)\pi_H(g)^{\top}\pi_H(g).
\end{align*}
Equating factors in $\mathcal{P}(n)$ and equating factors in $\mathcal{Z}(n)$, we obtain
\begin{align*}
\pi_{H;z}(g) &= \pi_H(g) \\
\omega_{HK;z}(g) &= \diag(z)\omega_{HK}(g)\pi_H(g)\diag(z)\pi_H(g)^{\top}.
\end{align*}
The first of these equations tells us that $\pi_{H;z}$ is independent of $z$, 
and we will thus omit the subscript $z$ in $\pi_{H;z}$ henceforth. 
Writing the second of these equations in terms of vectors in $M_H$, we have
\begin{align*}
\hat{\omega}_{HK;z}(g) &= \hat{z} + \hat{\omega}_{HK}(g) + \pi_H(g)\hat{z} \\
&= (\pi_H(g)\hat{z}-\hat{z}) + \hat{\omega}_{HK}(g),
\end{align*}
where we used the fact that $\hat{z}=-\hat{z} \pmod{2}$. 
Since $g\rightarrow \pi(g)\hat{z}-\hat{z}$ is a coboundary, then $\hat{\omega}_{HK;z}$ is cohomologous to $\hat{\omega}_{HK}$; 
from the above, the converse is also easily verified. 

We thus have
\[ [\hat{\omega}_{HK}] = \{\hat{\omega}_{HK;z}: z\in\{-1, 1\}^n\}, \]
where distinct $z$ do not necessarily imply distinct $\hat{\omega}_{HK;z}$.

We now prove the main claim. 
We first prove that the cohomology classes $[\hat{\omega}_{HK}]$ enumerated over all $K\leq H\mid |H:K|\leq 2$ are distinct. 
Suppose $[\hat{\omega}_{HK_1}] = [\hat{\omega}_{HK_2}]$; 
i.e., $\hat{\omega}_{HK_1}$ and $\hat{\omega}_{HK_2}$ are cohomologous. 
We will show $K_1=K_2$. 
By the above, there exists $z\in\{-1, 1\}^n$ such that $\hat{\omega}_{HK_2} = \hat{\omega}_{HK_1;z}$; 
Converting this back in terms of diagonal matrices and multiplying the resulting equation from the right by $\pi_H$, we obtain $\rho_{HK_2} = \rho_{HK_1;z}$. 
By definition of $\rho_{HK_2}$, we have
\[ \{g\in G: \rho_{HK_2}(g)e_1 = e_1\} = K_2. \]
On the other hand,
\begin{align*}
\{g\in G: \rho_{HK_2}(g)e_1 = e_1\}
&= \{g\in G: \rho_{HK_1;z}(g)e_1 = e_1\} \\
&= \{g\in G: \diag(z)\rho_{HK_1}(g)\diag(z)e_1 = e_1\} \\
&= \{g\in G: \rho_{HK_1}(g)\diag(z)e_1 = \diag(z)e_1\} \\
&= \{g\in G: \rho_{HK_1}(g)z_1e_1 = z_1e_1\} \\
&= \{g\in G: \rho_{HK_1}(g)e_1 = e_1\} \\
&= K_1.
\end{align*}
Ergo, $K_1=K_2$.

We next prove that every $1$-cocycle is contained in one of the cohomology classes $[\hat{\omega}_{HK}]$. 
Let $\hat{\omega}:G\mapsto M_H$ be a $1$-cocycle. 
Then $\rho(g) = \omega(g)\pi_H(g)\forall g\in G$ defines an irreducible signed perm rep. 
It is easy to verify that
\[ H = \{g\in G: \rho(g)e_1 = \pm e_1\}, \]
and define
\[ K = \{g\in G: \rho(g)e_1 = e_1\}. \]
Then by Cor.~\ref{cor:HK}, $\rho = \rho_{HK;z}$ for some $z\in\{-1, 1\}^n$, 
and hence $\hat{\omega} = \hat{\omega}_{HK;z}$ so that $\hat{\omega}\in [\hat{\omega}_{HK}$.

All that is left for (a) is to prove the claimed identity for the addition operation. 
First, however, given a cocycle $\hat{\omega}_{HK;z}$, note that by Prop.~\myref{prop:rho_HKz}{a}, we have
\begin{align*}
K 
&= \{g\in G: \rho_{HK;z}(g)e_1 = e_1\} \\
&= \{g\in G: \omega_{HK;z}(g)\pi_H(g)e_1 = e_1\} \\
&= \{g\in H: \omega_{HK;z}(g)e_1 = e_1\} \\
&= \{g\in H: \omega_{HK;z}(g)_{11} = 1\} \\
&= \{g\in H: \hat{\omega}_{HK;z}(g)_1 = 0\}.
\end{align*}
Now consider the sum of two cohomology classes $[\hat{\omega}_{HK_1}]$ and $[\hat{\omega}_{HK_2}]$. 
Since we have established all elements of the cohomology group, then we know that there exists $K\leq H\mid |H:K|\leq 2$ such that
\[ [\hat{\omega}_{HK}] = [\hat{\omega}_{HK_1}] + [\hat{\omega}_{HK_2}]. \]
Thus, there exists $z\in\{-1, 1\}^n$ such that 
\[ \hat{\omega}_{HK;z} = \hat{\omega}_{HK_1} + \hat{\omega}_{HK_2}. \]
Now by the above, we have
\begin{align*}
K 
&= \{g\in H: \hat{\omega}_{HK;z}(g)_1 = 0\} \\
&= \{g\in H: \hat{\omega}_{HK_1}(g)_1+\hat{\omega}_{HK_2}(g)_1 = 0\} \\
&= \{g\in H: \hat{\omega}_{HK_1}(g)_1 = \hat{\omega}_{HK_2}(g)_1 = 0\} 
\cup \{g\in H: \hat{\omega}_{HK_1}(g)_1 = \hat{\omega}_{HK_2}(g)_1 = 1\} \\
&= K_1\cap K_2 \cup ((H\setminus K_1)\cap (H\setminus K_2)),
\end{align*}
thereby establishing the claim.

\textbf{(b) } %
Let $[\hat{\omega}_{HK_1}]$ and $[\hat{\omega}_{HK_2}]$ be two cohomology classes. 
We must show $(H, K_1)$ is conjugate to $(H, K_2)$ if and only if there exists $P\in\mathcal{P}(n)$ such that $[P, \pi(g)] = 0\forall g\in G$ and $[P\hat{\omega}_{HK_1}] = [\hat{\omega}_{HK_2}]$. 
Suppose $(H, K_1)$ and $(H, K_2)$ are conjugate. 
Then by Thm.~\ref{thm:rho_clf_conj}, $\rho_{HK_1}$ and $\rho_{HK_2}$ are conjugate, 
so that there exist $P\in\mathcal{P}(n)$ and $Z\in\mathcal{Z}(n)$, $Z=\diag(z)$, such that for all $g\in G$,
\begin{align*}
\rho_{HK_2}(g) &= ZP\rho_{HK_1}(g)(ZP)^{-1} \\
\rho_{HK_2}(g) &= ZP\rho_{HK_1}(g)P^{\top}Z \\
\rho_{HK_2;z}(g) &= P\rho_{HK_1}(g)P^{\top} \\
\omega_{HK_2;z}(g)\pi_H(g) &= P\omega_{HK_1}(g)\pi_H(g)P^{\top} \\
\omega_{HK_2;z}(g)\pi_H(g) &= P\omega_{HK_1}(g)P^{\top} P\pi_H(g)P^{\top}.
\end{align*}
Equating the factors in $\mathcal{P}(n)$ and the factors in $\mathcal{Z}(n)$, we obtain
\begin{align*}
\pi_H(g) &= P\pi_H(g)P^{\top} \\
\omega_{HK_2;z}(g) &= P\omega_{HK_1}(g)P^{\top}.
\end{align*}
The first of these equations establishes the commutation $[P, \pi(g)]=0$. 
The second equation implies
\begin{align*}
\hat{\omega}_{HK_2;z}(g) &= P\hat{\omega}_{HK_1}(g) \\
[\hat{\omega}_{HK_2}] &= [P\hat{\omega}_{HK_1}].
\end{align*}
The above steps can be reversed to prove the converse.

\textbf{(c) } %
For every $K\leq H\mid |H:K|\leq 2$, observe that
\[ K\cap H \cup ((H\setminus K)\cap (H\setminus H)) = K. \]
By (a), $[\hat{\omega}_{HK}]$, $H=K$, is thus the zero cohomology class. 
Therefore, $\rho_{HK;z}$ is type 1 ($|H:K|=1$, or $H=K$) if and only if $[\hat{\omega}_{HK}]$ is the zero cohomology class.
\end{proof}

The type 1 vs. type 2 dichotomy is thus rooted in whether a signed perm-rep ``twists'' over $G/H$. 
Proposition~\ref{prop:cohomology} also lets us interpret the notation $\rho_{HK}$: 
The subgroup $H$ determines the coefficient module $M_H$ and hence the cohomology ring, 
and the subgroup $K$ determines the cohomology class in $\mathcal{H}^1(G, M_H)$.

\section{Classification of \texorpdfstring{$G$}{G}-SNNs}
\label{appendix:gsnn}

\subsection{Canonical parameterization}
\label{appendix:canonical}

Let $f:\RR^m\mapsto\RR$ be a continuous piecewise-affine function. 
An \emph{affine region} $X\subseteq\RR^m$ of $f$ is a maximal polytope over which $f$ is affine.

Let $f:\RR^m\mapsto\RR$ be an SNN of the form in Eq.~\ref{eq:snn}, 
and note that $f$ is a continuous piecewise-affine function. 
Then the \emph{signature} $r$ of an affine region $X\subseteq\RR^m$ is the binary vector $r = H(Wx+b)$, 
for any arbitrary choice of $x$ in the interior of $X$ 
and where $H$ is the Heaviside step function (where we set $H(0) = 0$).

The following small proposition establishes the identity given in Eq.~\ref{eq:relulinear}.

\begin{prop} \label{prop:relulinear}
For all $x\in\RR$ and $z\in\{-1, 1\}$,
\[ \relu(x) - \relu(zx) = H(-z)x. \]
\end{prop}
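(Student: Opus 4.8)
The plan is to prove the identity by a direct case analysis on the two possible values of $z$. First I would dispose of $z=1$: here $zx=x$, so the left-hand side is $\relu(x)-\relu(x)=0$, while $H(-z)=H(-1)=0$ makes the right-hand side $0$ as well. Next I would handle $z=-1$: the right-hand side becomes $H(1)x=x$, so it suffices to verify that $\relu(x)-\relu(-x)=x$ for every $x\in\RR$. This I would check by splitting on the sign of $x$: if $x\ge 0$ then $\relu(x)=x$ and $\relu(-x)=0$, giving $x$; if $x<0$ then $\relu(x)=0$ and $\relu(-x)=-x$, again giving $x$. Equivalently, one can simply invoke the standard decomposition of the identity map into its positive and negative parts, $x=\relu(x)-\relu(-x)$, which is exactly the $z=-1$ instance of the claim.

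There is no real obstacle here: the statement is elementary, and the two-way case split on $z$ together with the two-way case split on $\sgn(x)$ is exhaustive. The only point worth a remark is that the convention $H(0)=0$ used in Supp.~\ref{appendix:canonical} is irrelevant to this proposition, since at $x=0$ both sides vanish for either value of $z$; likewise the identity is stated pointwise in $x\in\RR$, so its elementwise extension to vector arguments (as invoked in Eq.~\ref{eq:relulinear}) is immediate.
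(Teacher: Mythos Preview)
Your proposal is correct and follows essentially the same approach as the paper: a two-case split on $z\in\{-1,1\}$, together with the identity $\relu(x)-\relu(-x)=x$, which the paper simply asserts as ``easy to verify'' while you make it explicit via a sign split on $x$.
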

\begin{proof}
It is easy to verify that $\relu(x)-\relu(-x)=x$ for all $x$. 
Now we have two cases:

\textbf{Case 1 ($z=-1$) } %
We have
\begin{align*}
\relu(x) - \relu(zx) 
&= \relu(x) - \relu(-x) \\
&= x \\
&= H[-(-1)]x \\
&= H(-z)x.
\end{align*}

\textbf{Case 2 ($z=1$) } %
We have
\begin{align*}
\relu(x) - \relu(zx) \\
\relu(x) - \relu(x) \\
&= 0 \\
&= H(-1)x \\
&= H(-z)x.
\end{align*}
\end{proof}

We now prove Lemma~\ref{lemma:canonical}.

\begin{proof}[Proof of Lemma~\ref{lemma:canonical}] 
Given access to the data $D = \{(x, f(x)): x\in\RR^m\}$, 
we will show that we can in principle determine $[W_*\mid b_*], a_*, c_*, d_*$ uniquely. 
Since $f$ admits the form in Eq.~\ref{eq:snn}, the set of points at which $f$ is not differentiable is a union of $n_*$ distinct affine spaces each of dimension $m-1$, for a unique $n_*\leq n$. 
From the data $D$, we can in principle determine the equation of each affine space; 
let $w_{*i}^\top x+b_{*i}=0$ be the equation defining the $i$th affine space, 
where $\Vert w_{*i}\Vert=1$. 
Let $W_*\in\RR^{n_*\times m}$ with $i$th row $w_{*i}^\top$ and $b_*\in\RR^{n_*}$ with elements $b_{*i}$. 
Note that no two rows of $[W_*\mid b_*]$ are parallel, as parallel rows would correspond to the same affine space. 
Thus, $[W_*\mid b_*]\in\Theta_{n_*}$. 
Note that the action of any element in $\PZ(n_*)$ on $[W_*\mid b_*]$ leaves the corresponding set of affine spaces invariant; 
we thus assume, without loss of generality, that $[W_*\mid b_*]\in\Omega_{n_*}$, thereby establishing the uniqueness of $[W_*\mid b_*]$. 
The function $f$ now admits the form
\[ f(x) = a_*^\top\relu[Z(W_*x+b_*)] + \tilde{f}(x), \]
for some $a_*\in\RR^{n_*}$, $Z\in\mathcal{Z}(n_*)$, and some differentiable piecewise affine function $\tilde{f}(x):\RR^m\mapsto\RR$. 
Note that $a_{*i}\neq 0$ for each $i=1,\ldots,n_*$; 
otherwise, we could simply delete the $i$th row of $[W_*\mid b_*]$. 
Since $\tilde{f}$ is both piecewise-affine and differentiable, then it is necessarily affine; 
hence there exist $\tilde{c}_*\in\RR^m$ and $\tilde{d}_*\in\RR$ such that $\tilde{f}(x) = \tilde{c}_*^\top x+\tilde{d}_*$ and thus 
\[ f(x) = a_*^\top\relu[Z(W_*x+b_*)] + \tilde{c}_*^\top x+\tilde{d}_*. \]
Now applying Prop.~\ref{prop:relulinear}, we have
\begin{align*}
f(x) 
&= a_*^\top\relu(W_*x+b_*) - a_*^\top H(-Z)(W_*x+b_*) + \tilde{c}_*x+\tilde{d}_* \\
&= a_*^\top\relu(W_*x+b_*) + c_*x+d_*,
\end{align*}
where we define
\begin{align*}
c_* &= \tilde{c}_* - a_*^\top H(-Z)W_* \\
d_* &= \tilde{d}_* - a_*^\top H(-Z)b_*.
\end{align*}

All that remains is to show $a_*$, $c_*$, and $d_*$ are unique. 
We start by showing $a_*$ is unique. 
Consider two adjacent affine regions $X,X^\prime\subset\RR^m$ of $f$, 
where the shared boundary is defined by the $i$th affine space. 
Let $r$ and $r^\prime$ be the signatures of $X$ and $X^\prime$. 
Letting $x$ and $x^\prime$ be two arbitrary points from the interiors of $X$ and $X^\prime$ respectively, 
we have the following difference of gradients with respect to $x$:
\begin{align*}
\nabla f(x^\prime) - \nabla f(x) 
&= W_*^\top\diag(r^\prime)a_* - W_*^\top\diag(r)a_* \\
&= W_*^\top\diag(r^\prime-r)a_*.
\end{align*}
Since $X$ and $X^\prime$ differ only across the $i$th affine space, then all entries of $r^\prime-r$ are zero except the $i$th entry. 
We therefore have
\[ \nabla f(x^\prime) - \nabla f(x) = a_{*i}(r^\prime_i-r_i) w_{*i}. \]
Since $r^\prime-r$ and $w_{*i}$ are nonzero and unique, then we can in principle solve this equation to determine a unique value for $a_{*i}$.

To show $c_*$ is unique, we recall the gradient of $f$ evaluated at the point of differentiability $x\in X$: 
\[ \nabla f(x) = W_*^\top\diag(r)a_* + c_*. \]
Since $a_*$ and $W_*$ have been determined, then we can in principle solve this equation to determine a unique value for $c_*$. 
Once this is done, we can then evaluate $f$ at $x$ and solve for the only remaining unknown $d_*$, thereby determining a unique value for $d_*$ as well.
\end{proof}

\begin{remark}
Suppose $f:\RR^m\mapsto\RR$ admits the form in Eq.~\ref{eq:snn}. 
Then it is possible for there to exist $i$ and $j$ such that $a_i=-a_j$, $w_i=-w_j$, and $b_i=-b_j$. 
In this case, we have
\begin{align*}
a_i\relu(w_i^\top x+b_i) + a_j\relu(w_j^\top x+b_j) 
&= a_i\relu(w_i^\top x+b_i) - a_i\relu[-(w_i^\top x+b_i)] \\
&= a_i(w_i^\top x+b_i),
\end{align*}
which follows from Prop.~\ref{prop:relulinear}. 
Such affine and differentiable terms can thus arise, which is why we include the $c_*x+d_*$ term in Lemma~\ref{lemma:canonical}. 
Moreover, observe that because $[W_*\mid b_*]\in\Theta_{n_*}$ in Lemma~\ref{lemma:canonical}, no two rows of $[W_*\mid b_*]$ are equal or opposites of one another, 
and thus no two hidden neurons can be combined to yield an affine term; 
all affine terms are thus collected in the $c_*x+d_*$ term, which helps to make the canonical form of $f$ unique.
\end{remark}

In general, given a group action on a set, the existence of a fundamental domain is not guaranteed. 
The next proposition guarantees the existence of a fundamental domain in $\Theta_n$ under the action of $\PZ(n)$ by way of a constructive example.

\begin{prop} \label{prop:domain}
Let $\leq_*$ be a total order on $\RR^{m+1}$. 
Let $\Omega$ be the set of all $[W\mid b]\in\Theta_n$ such that the first nonzero entry of each row of $W$ is positive and the rows of $[W\mid b]$ are sorted in ascending order under $\leq_*$. 
Then $\Omega$ is a fundamental domain.
\end{prop}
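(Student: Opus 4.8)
The plan is to verify directly that $\{A\Omega : A \in \PZ(n)\}$ partitions $\Theta_n$, which requires showing (i) the translates cover $\Theta_n$, and (ii) distinct translates are disjoint. I would set up notation so that for a given $[W\mid b]\in\Theta_n$, the $n$ rows $r_1,\dots,r_n\in\RR^{m+1}$ are distinct as unsigned lines but not necessarily as vectors; applying an element $A = P\cdot\diag(z)\in\PZ(n)$ permutes the rows and flips the signs of some of them. The key observation is that on $\Theta_n$ the action of $\PZ(n)$ really does decompose into an independent sign choice per row (choosing the ``positive orientation'' representative) followed by a permutation, and $\Omega$ pins down both choices canonically.

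First I would handle coverage. Given $[W\mid b]\in\Theta_n$, for each row $i$ let $z_i\in\{-1,1\}$ be the sign that makes the first nonzero entry of $z_i w_i$ positive; this is well-defined since $w_i\neq 0$ (its rows have unit norm by definition of $\Theta_n$). Replacing each row $r_i$ by $z_i r_i$ gives a matrix all of whose rows have positive-leading-entry first coordinate block, and since no two rows of $[W\mid b]$ were parallel, the $n$ transformed rows are now pairwise distinct as vectors in $\RR^{m+1}$; hence there is a unique permutation $P$ sorting them into strictly ascending $\leq_*$-order. The resulting matrix lies in $\Omega$, so $[W\mid b]\in A^{-1}\Omega$ for the corresponding $A\in\PZ(n)$, giving $\Theta_n = \bigcup_{A}A\Omega$.

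Next I would show the translates are disjoint, equivalently that if $M\in\Omega$ and $AM\in\Omega$ for some $A\in\PZ(n)$ then $A=I$ (so that $A\Omega\cap\Omega\neq\emptyset \Rightarrow A\Omega=\Omega$, and distinct cosets are disjoint). Write $A = P\diag(z)$. Since $M\in\Omega$, every row of $M$ already has positive leading entry; a sign flip $z_i=-1$ would send that row to one with negative leading entry, which no permutation can repair within $\Omega$. Hence $z=\vec 1$, so $A=P$ is a pure permutation. But $M$ has its rows in strictly ascending $\leq_*$-order and so does $PM$; since the rows are distinct (again using the nonparallel condition), the only permutation preserving the sorted order is the identity, so $P=I$ and $A=I$. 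This establishes that $\{A\Omega\}$ is a partition, i.e.\ $\Omega$ is a fundamental domain.

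The main obstacle is the bookkeeping around the ``first nonzero entry'' rule: I need to be careful that the sign-normalization is applied to $W$ (not the augmented row), that a unit-norm row of $W$ is genuinely nonzero so the rule is unambiguous, and that after normalization the nonparallel hypothesis of $\Theta_n$ upgrades to genuine distinctness of the augmented rows as vectors — otherwise the sorting step would not yield a unique permutation. A secondary subtlety is that $\leq_*$ is an arbitrary total order on $\RR^{m+1}$ (not assumed to interact with the linear structure), so I should only use that it is a total order when invoking ``unique sorting permutation of a finite set of distinct elements,'' and nothing more.
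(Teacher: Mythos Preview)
Your proposal is correct and follows essentially the same approach as the paper's proof: both verify that $\{A\Omega : A\in\PZ(n)\}$ partitions $\Theta_n$ by checking coverage and disjointness separately, using the unit-norm condition to make sign normalization well-defined and the nonparallel condition to guarantee that the normalized rows are genuinely distinct (so that the sorting permutation is unique). Your use of the factorization $A = P\diag(z)$ to argue sequentially that first $z=\vec 1$ and then $P=I$ is slightly crisper than the paper's phrasing (which treats the $\mathcal{Z}(n)$ and $\mathcal{P}(n)$ parts separately and then remarks that ``the actions of $P$ and $Z$ cannot cancel one another''), but the underlying argument is identical.
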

\begin{proof}
We will show that $\{A\Omega: A\in\PZ(n)\}$ is a partition of $\Theta_n$. 
First, however, let $[W\mid b]\in\Omega$. 
Since the rows of $[W\mid b]$ are nonzero (since the rows of $W$ have unit norm), 
then the action of any non-identity $Z\in\mathcal{Z}(n)$ sends $[W\mid b]$ out of $\Omega$. 
Similarly, since the rows of $[W\mid b]$ are pairwise nonparallel and in particular distinct, then any non-identity $P\in\mathcal{P}(n)$ breaks the ascending order of the rows of $[W\mid b]$ and sends it out of $\Omega$. 
Finally, since no two rows of $[W\mid b]$ are opposites, then the actions of $P$ and $Z$ cannot cancel one another. 
It thus follows that every $A\in\PZ(n)$ sends $[W\mid b]$ out of $\Omega$.

We now proceed to show the elements in the claimed partition are disjoint. 
Let $A,B\in\PZ(n)$, and suppose $A\Omega\cap B\Omega\neq \emptyset$. 
So, let $[W\mid b]\in A\Omega\cap B\Omega$. 
Thus, $A^{-1}[W\mid b]$ and $B^{-1}[W\mid b]$ are both in $\Omega$. 
We also note $(B^{-1}A)A^{-1}[W\mid b] = B^{-1}[W\mid b]$. 
If $B^{-1}A$ is not the identity, then by the above, it sends $A^{-1}[W\mid b]$ out of $\Omega$, so that $B^{-1}[W\mid b]\notin\Omega$. 
Since, however, $B^{-1}[W\mid b]\in\Omega$, then $B^{-1}A = I$ so that $A=B$.

We next show that every $[W\mid b]\in\Theta_n$ belongs to some element of the claimed partition. 
Clearly, there exists $A\in\PZ(n)$ such that $A[W\mid b]\in\Omega$, so that $[W\mid b]\in A^{-1}\Omega$.
\end{proof}

\subsection{\texorpdfstring{$G$}{G}-SNNs and signed perm-reps}
\label{appendix:gsnn2rho}

We prove Lemma~\ref{lemma:gsnn2rho}.

\begin{proof}[Proof of Lemma~\ref{lemma:gsnn2rho}] 
We only prove the forward implication; the converse is then straightforward to verify. 
We write $f$ in its canonical form:
\[ f(x) = a_*^\top\relu(W_*x+b_*) + c_*^\top x+d_*. \]
Let $g\in G$. 
Since $g$ is orthogonal and each row of $W_*$ has unit norm, then so does each row of $W_*g$. 
Moreover, since the transformation $[W_*\mid b_*]\rightarrow [W_*g\mid b_*]$ is invertible and no two rows of $[W_*\mid b_*]$ are parallel, 
then the same is true for the rows of $[W_*g\mid b_*]$. 
Thus, $[W_*g\mid b_*]\in\Theta_{n_*}$, 
and hence there exists a unique matrix $\rho(g)\in\PZ(n_*)$ such that $[W_*g\mid b_*]\in \rho(g)\Omega_{n_*}$. 
Let $\pi(g)\in\mathcal{P}(n_*)$ and $\zeta(g)\in\mathcal{Z}(n_*)$ such that $\rho(g) = \pi(g)\zeta(g)$. 
We have
\begin{align*}
f(gx) 
&= a_*^\top\relu(W_*gx+b_*) + c_*^\top gx+d_* \\
&= a_*^\top\relu[\rho(g)(\rho(g)^{-1}W_*gx+\rho(g)^{-1}b_*)] + c_*^\top gx+d_* \\
&= a_*^\top\relu[\pi(g)\zeta(g)(\rho(g)^{-1}W_*gx+\rho(g)^{-1}b_*)] + c_*^\top gx+d_* \\
&= a_*^\top\pi(g)\relu[\zeta(g)(\rho(g)^{-1}W_*gx+\rho(g)^{-1}b_*)] + c_*^\top gx+d_*.
\end{align*}
Using Prop.~\ref{prop:relulinear}, this is
\begin{align*}
f(gx) 
&= a_*^\top\pi(g)\relu(\rho(g)^{-1}W_*gx+\rho(g)^{-1}b_*) - a_*^\top H(-\zeta(g))(\rho(g)^{-1}W_*x+\rho(g)^{-1}b_*) + c_*^\top gx+d_* \\
&= a_*^\top\pi(g)\relu(\rho(g)^{-1}W_*gx+\rho(g)^{-1}b_*) + [c_*^\top g - a_*^\top H(-\zeta(g))\rho(g)^{-1}W_*]x + [d_* - a_*^\top H(-\zeta(g))\rho(g)^{-1}b_*].
\end{align*}
Note that $\rho(g)^{-1} [W_*\mid b_*]\in\Omega_{n_*}$. 
Since $f$ is $G$-invariant, then $f(gx)=f(x)\forall x\in\RR^m$. 
By uniqueness of canonical parameters with respect to the fundamental domain $\Omega_{n_*}$ (Lemma~\ref{lemma:canonical}), 
the canonical parameters of the SNNs $f$ and $f\circ g$ must be equal. 
We thus obtain the constraints
\begin{align*}
W_* &= \rho(g)^{-1}W_* g \\
a_*^\top &= a_*^\top \pi(g) \\
b_* &= \rho(g)^{-1}b_* \\
c_*^\top &= c_*^\top g - a_*^\top H(-\zeta(g))\rho(g)^{-1}W_*g \\
d_* &= d_* - a_*^\top H(-\zeta(g))\rho(g)^{-1}b_*.
\end{align*}
The first three constraints are clearly equivalent to the ones on $W_*$, $a_*$, and $b_*$ claimed in the lemma statement; 
we thus take these as established. 
By the established $W_*$ and $b_*$ constraints, the $c_*$ and $d_*$ constraints simplify to
\begin{align*}
c_*^\top &= c_*^\top g - a_*^\top H(-\zeta(g))W_* \\
d_* &= d_* - a_*^\top H(-\zeta(g))b_*.
\end{align*}
Now since $\zeta(g)$ is a diagonal matrix with $\pm 1$ along its diagonal, then we have
\[ H(-\zeta(g)) = \frac{1}{2}(I-\zeta(g)). \]
Using the established $a_*$ constraint, we have
\begin{align*}
a_*^\top H(-\zeta(g))
&= \frac{1}{2}a_*^\top (I-\zeta(g)) \\
&= \frac{1}{2} a_*^\top (I - \pi(g)\zeta(g)) \\
&= \frac{1}{2}a_*^\top (I - \rho(g)).
\end{align*}
By the established $b_*$ constraint, we have $(I-\rho(g))b_* = b_*-b_* = 0$; 
we thus see that the above constraint on $d_*$ is trivially satisfied. 
By the established $W_*$ constraint, the constraint on $c_*$ becomes
\begin{align*}
c_*^\top 
&= c_*^\top g - \frac{1}{2}a_*^\top (I-\rho(g))W_* \\
&= c_*^\top g - \frac{1}{2}a_*^\top W_* (I-g) \\
g^\top c_* &= c_* + \frac{1}{2}(I-g^\top)W_*^\top a_*.
\end{align*}
Since this holds for all $g\in G$, then we may substitute $g^\top$ with $g$ to establish the claimed constraint on $c_*$.

Finally, we prove that $\rho:G\mapsto\PZ(n_*)$ is a homomorphism. 
Let $g_1,g_2\in G$. 
By the established constraint on $W_*$, we have
\begin{align*}
\rho(g_1)\rho(g_2) [W_*\mid b_*] 
&= \rho(g_1) [W_*g_2\mid b_*] \\
&= [W_*g_1g_2\mid b_*].
\end{align*}
On the other hand, by definition of $\rho$, we have $\rho(g_1g_2) [W_*\mid b_*] = [W_*g_1g_2\mid b_*]$. 
Thus, $[W_*g_1g_2\mid b_*]$ is thus an element of both $\rho(g_1g_2)^{-1}\Omega_{n*}$ and of $[\rho(g_1)\rho(g_2)]^{-1}\Omega_{n*}$, 
which in turn implies $\rho(g_1g_2) = \rho(g_1)\rho(g_2)$.
\end{proof}

The next proposition states that every $G$-SNN can be written as a sum of irreducible $G$-SNNs, 
thereby simplifying the classification problem of $G$-SNNs to that only of irreducible $G$-SNNs. 
Recall the notation introduced in Sec.~\ref{sec:gsnn2rho}.

\begin{prop} \label{prop:f_decomposition}
Every $G$-SNN admits a decomposition into a sum of irreducible $G$-SNNs.
\end{prop}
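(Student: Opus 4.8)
The plan is to take an arbitrary $G$-SNN $f$, write it in canonical form as provided by Lemma~\ref{lemma:canonical}, and then use the corresponding signed perm-rep $\rho$ from Lemma~\ref{lemma:gsnn2rho} to partition the hidden neurons into orbits under the action of $\rho(G)$ on $\{\pm e_1,\ldots,\pm e_{n_*}\}$. Each such orbit should give rise to an irreducible sub-representation, and the summand of $f$ built from the neurons in that orbit should be an irreducible $G$-SNN; summing over orbits recovers $f$.

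More concretely, first I would write $f(x) = a_*^\top\relu(W_*x+b_*) + c_*^\top x + d_*$ in canonical form, with $\rho$ the associated signed perm-rep satisfying Eqs.~\eqref{eq:gsnn2rho:W}--\eqref{eq:gsnn2rho:c}. The group $\rho(G)$ acts on the index set $\{\pm 1,\ldots,\pm n_*\}$ (via $\rho(g)e_i = \pm e_j$); since $\rho(g)(-e_i) = -\rho(g)e_i$, the orbits come in $\pm$-symmetric pairs, and collapsing this sign symmetry partitions $\{1,\ldots,n_*\}$ into blocks $B_1,\ldots,B_r$. For each block $B_\ell$, let $\pi_\ell$ be the restriction of $\rho$ to the coordinate subspace spanned by $\{e_i : i\in B_\ell\}$; this restriction is well-defined because $\rho(G)$ maps each block to itself (up to signs), it lands in $\PZ(|B_\ell|)$, and it is irreducible by construction (the block is a single orbit). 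Then define $f_\ell(x) = a_\ell^\top\relu(W_\ell x + b_\ell) + c_\ell^\top x + d_\ell$ using the rows of $a_*, W_*, b_*$ indexed by $B_\ell$, with the linear part $c_\ell$ chosen to satisfy Eq.~\eqref{eq:gsnn2rho:c} for $\pi_\ell$ (take $c_\ell \in \ran(P_G)$ appropriately, distributing $c_*$ and $d_*$ across blocks, e.g. all of $c_* - \sum$ of forced terms into $f_1$). One checks that each $f_\ell$ satisfies Eqs.~\eqref{eq:gsnn2rho:W}--\eqref{eq:gsnn2rho:c} with $\rho$ replaced by $\pi_\ell$, hence is a $G$-SNN by the converse direction of Lemma~\ref{lemma:gsnn2rho}, and it is irreducible because $\pi_\ell$ is; finally $\sum_\ell f_\ell = f$ since the blocks partition the rows and the linear/constant terms were distributed to sum correctly.

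**The main obstacle** I anticipate is the bookkeeping around the linear term $c_*^\top x + d_*$ and the canonical form itself: when I split $W_*$ into row-blocks, each sub-matrix $W_\ell$ together with $b_\ell$ may fail to lie in a fundamental domain $\Omega_{|B_\ell|}$, and two blocks could in principle share the zero row or produce rows that, within a block, are parallel after restriction — though the hypothesis that no two rows of $[W_*\mid b_*]$ are parallel prevents the latter. The constraint Eq.~\eqref{eq:gsnn2rho:c} couples $c_*$ to $W_*^\top a_*$, so I must verify that $\frac{1}{2}\tau W_*^\top a_* = \sum_\ell \frac{1}{2}\tau_\ell W_\ell^\top a_\ell$ decomposes compatibly across blocks and that the leftover $c_* - (\text{forced terms}) \in \ran(P_G)$ can be absorbed into a single block. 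This is essentially routine once the orbit decomposition of $\rho$ is set up, but it requires care to state cleanly; I would likely record it as: express $f$ in canonical form, decompose $\rho = \bigoplus_\ell \pi_\ell$ into irreducible signed perm-reps via orbits, assemble $f_\ell$ from the corresponding data, and invoke Lemma~\ref{lemma:gsnn2rho} to conclude each $f_\ell \in \SNN(G)$ is irreducible with $\sum_\ell f_\ell = f$.
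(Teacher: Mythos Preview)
Your proposal is correct and follows essentially the same approach as the paper: write $f$ in canonical form, partition the hidden neurons into orbits under $\rho(G)$ (using the $\pm$ relation), and build one irreducible $G$-SNN per orbit with the affine term distributed appropriately. The paper's proof is in fact sketchier than yours---it simply dumps the entire affine term $c_*^\top x + d_*$ into the last summand $f_k$ and asserts without further verification that each $f_i$ is $G$-invariant and irreducible; your anticipated bookkeeping around Eq.~\eqref{eq:gsnn2rho:c} and the fundamental domain is more careful than what the paper actually provides.
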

\begin{proof}
Let $f\in\SNN(G)$, and let $\rho\in F(f)$. 
Let $n_*$ be the degree of $\rho$; 
i.e., the number of rows of the canonical weight matrix of $f$. 
Then partition $\{e_1,\ldots,e_{n*}\}$ into orbits such that $e_i$ and $e_j$ belong to the same orbit if and only if there exists $g\in G$ such that $\rho(g)e_i = \pm e_j$. 
Without loss of generality, select $\rho\in F(f)$ such that each orbit consists of consecutive elements 
(this is done by an appropriate conjugation of $\rho$); 
i.e., each orbit has the form $\{e_i,e_{i+1},\ldots,e_{i+j}\}$. 
Now write $f$ in canonical form such that the corresponding signed perm-rep by Lemma~\ref{lemma:gsnn2rho} is $\rho$:
\[ f(x) = a_*^\top\relu(W_*x+b_*) + c_*^\top x+d_*. \]
For each $i=1,\ldots,k$, where we have $k$ orbits, define the $G$-SNN $f_i$ by taking only the elements $a_{*j}$ of $a_*$, rows $w_{*j}^\top$ of $W_*$, and elements $b_{*j}$ of $b_*$ such that $e_j$ belongs to the $i$th orbit; 
include the affine term $c_*^\top x+d_*$ only in $f_k$. 
Then clearly $f = f_1+\ldots f_k$, where each $f_i$ is $G$-invariant and irreducible.
\end{proof}

\subsection{The classification theorem}
\label{appendix:thm}

This section gives a proof for Thm.~\ref{thm:main}. 
Recall the following notation introduced in Sec.~\ref{sec:thm}: 
If $A$ is a linear operator (resp.~set of linear operators), then let $P_A$ be the orthogonal projection operator onto the vector subspace that is pointwise-invariant under the action of $A$ (resp.~all elements of $A$). 
Note that if $A$ is a finite orthogonal group, then~\citep[sec. 2.6]{serre1977linear}
\[ P_A = \frac{1}{|A|}\sum_{a\in A} a. \]
In addition, if $P_1,P_2$ are two orthogonal projection operators, then let $P_1\cap P_2$ be the orthogonal projection operator onto $\ran(P_1)\cap\ran(P_2)$.

Before proving Thm.~\ref{thm:main}, we need to state and prove two lemmas. 
The first of these appears next.

\begin{lemma} \label{lemma:2I+h}
Let $K\leq H<\mathcal{O}(m)$ be two finite orthogonal groups such that $|H:K|=2$. 
Let $h\in H\setminus K$. 
Then $P_K\cap P_{2I+h} = P_K-P_H$.
\end{lemma}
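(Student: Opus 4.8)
The plan is to analyze the operator $2I + h$ directly using the fact that $\{I, h\}$ is a group of order $2$ (since $h \in H \setminus K$ with $|H:K| = 2$ forces $h^2 \in K$; more carefully, $h$ need not have order $2$, so the first thing I would do is reduce to the case where it does — see below). The key observation is that $P_{2I+h}$, the projection onto the pointwise-fixed subspace of the single operator $2I+h$, equals the projection onto $\{v : (2I+h)v = v\} = \{v : hv = -v\}$, i.e. the $(-1)$-eigenspace of $h$. So the claim $P_K \cap P_{2I+h} = P_K - P_H$ is really the statement that $\ran(P_K) \cap \ker(h + I) = \ran(P_K) \ominus \ran(P_H)$, where $\ran(P_H) = \ran(P_K) \cap \ker(h - I)$.

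First I would address the order-of-$h$ subtlety. Since $|H:K| = 2$, $K$ is normal in $H$ and $H = K \sqcup hK$; the operator relevant to us is genuinely $2I + h$ as written, and $\ker((2I+h) - I) = \ker(h + I)$ is the $(-1)$-eigenspace of $h$ regardless of the order of $h$, so no reduction is actually needed — I would just work with $E_- := \ker(h+I)$ and $E_+ := \ker(h - I)$, which are orthogonal (as $h$ is orthogonal, hence diagonalizable with these among its real eigenspaces, and these two are mutually orthogonal). Next I would identify $\ran(P_H)$: a vector $v$ is fixed by all of $H$ iff it is fixed by all of $K$ and by $h$ (since these generate $H$), i.e. $\ran(P_H) = \ran(P_K) \cap E_+$. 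Then I would show $\ran(P_K) \cap E_-$ and $\ran(P_K) \cap E_+$ together span $\ran(P_K)$: because $h$ normalizes $K$, it maps $\ran(P_K)$ to itself (if $Kv = v$ pointwise then $K(hv) = h(h^{-1}Kh)v = hKv = hv$), so $h$ restricts to an orthogonal involution-like operator on the finite-dimensional space $\ran(P_K)$ — more precisely $h|_{\ran(P_K)}$ satisfies the same algebra, and since $h$ is orthogonal and acts on $\ran(P_K)$, we can decompose $\ran(P_K) = (\ran(P_K) \cap E_-) \oplus (\ran(P_K) \cap E_+) \oplus R$ where $R$ collects the remaining (complex / non-real-eigenvalue) part. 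The crux is ruling out $R$: I would use that $h^2 \in K$, so $h^2$ acts as the identity on $\ran(P_K)$, hence $h|_{\ran(P_K)}$ is an involution and has only eigenvalues $\pm 1$, forcing $R = 0$. That gives the orthogonal direct sum $\ran(P_K) = (\ran(P_K) \cap E_-) \oplus \ran(P_H)$, which is exactly $\ran(P_K \cap P_{2I+h}) \oplus \ran(P_H)$, and translating back to projections yields $P_K \cap P_{2I+h} = P_K - P_H$.

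I expect the main obstacle to be the step showing $h^2$ acts trivially on $\ran(P_K)$ and hence that $h|_{\ran(P_K)}$ has no eigenvalues other than $\pm 1$ — this is where normality of $K$ in $H$ and $|H:K| = 2$ are essential, and it is what makes the two eigenspace pieces exhaust $\ran(P_K)$. Everything else is bookkeeping: that $\ran(P_H) = \ran(P_K) \cap \ker(h-I)$, that $\ker((2I+h)-I) = \ker(h+I)$, and that orthogonality of the two eigenspaces of the orthogonal operator $h$ makes the sum of projections behave additively. Once the direct-sum decomposition $\ran(P_K) = (\ran(P_K)\cap E_-) \oplus (\ran(P_K)\cap E_+)$ is in hand, the identity $P_K \cap P_{2I+h} = P_K - P_H$ is immediate from the definition of $P_1 \cap P_2$ as the projection onto $\ran(P_1) \cap \ran(P_2)$.
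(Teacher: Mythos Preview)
Your proof is correct and takes a genuinely different route from the paper's. The paper argues via representation theory: it observes that $V=\ran(P_K\cap P_{2I+h})=\{v:Kv=v,\ hv=-v\}$ carries the one-dimensional sign representation $\pi:H\to\{\pm 1\}$ with kernel $K$, and then invokes the isotypic projection formula $\frac{1}{|H|}\sum_{g\in H}\pi(g)g$, which after splitting the sum over the two cosets $K$ and $hK$ collapses to $P_K-P_H$. Your argument is more elementary and self-contained: you identify $\ran(P_{2I+h})$ as the $(-1)$-eigenspace of $h$, use normality of $K$ in $H$ to see that $h$ preserves $\ran(P_K)$, and then use $h^2\in K$ to conclude that $h|_{\ran(P_K)}$ is an involution, so $\ran(P_K)$ splits orthogonally into its $\pm 1$-eigenspaces, which are exactly $\ran(P_H)$ and $\ran(P_K\cap P_{2I+h})$. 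Your route avoids any appeal to character theory and makes explicit the one structural fact doing the work (that $h$ acts as an involution on the $K$-fixed subspace); the paper's route is shorter once the projection formula is in hand and generalizes more obviously to other one-dimensional characters of $H$.
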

\begin{proof}
Since $|H:K|=2$, then $K\trianglelefteq H$, 
and hence $H/K=\{K,hK\}$ is a bona fide group. 
Thus, there exists an isomorphism $\pi:H/K\mapsto\ZZ_2$, 
where we define $\ZZ_2=\{-1, 1\}$ under multiplication. 
Since we require $\ker(\pi)=K$, then we have
\[ \pi(h) = 
\begin{cases}
1, & \mbox{ if } h\in K \\
-1, & \mbox{ otherwise.}
\end{cases} \]
Now let
\[ V = \ran(P_K\cap P_{2I+h}) = \{v\in\RR^m: Kv=v, hKv=-v\}. \]
We thus see that $(\pi, V)$ is a representation of $H$, 
and since $\pi$ is scalar-valued, then $(\pi, V)$ is a direct sum of copies of a single complex-irreducible representation (irrep) of $H$. 
Noting that $\pi$ is its own complex-irreducible character, we have the orthogonal projection
\begin{align*}
P_K\cap P_{2I+h} 
&= \frac{1}{|H|}\sum_{h\in H}\pi(h)h \\
&= \frac{1}{|H|}\sum_{g\in K}\pi(g)g + \frac{1}{|H|}\sum_{g\in hK}\pi(g)g \\
&= \frac{1}{|H|}\sum_{g\in K}g - \frac{1}{|H|}\sum_{g\in hK}g \\
&= \frac{1}{|H|}\sum_{g\in K}g - \frac{1}{|H|}\left(\sum_{g\in H}g - \sum_{g\in K}g\right) \\
&= \frac{2}{|H|}\sum_{g\in K}g - \frac{1}{|H|}\sum_{g\in H}g \\
&= \frac{1}{|K|}\sum_{g\in K}g - \frac{1}{|H|}\sum_{g\in H}g \\
&= P_K - P_H.
\end{align*}
\end{proof}

The second lemma, appearing below, will be used to characterize the condition $[W_*\mid b_*]\in\Theta_{n_*}$ appearing in Lemma~\ref{lemma:canonical}.

\begin{lemma} \label{lemma:distinct}
Let $J\leq G$ and $\{g_1,\ldots,g_n\}$ a transversal of $G/J$ with $g_1\in J$. 
Let $V\leq\ran(P_J)$ be a vector subspace, 
and let $P_V$ be the orthogonal projection operator onto $V$. 
Then there exists $w\in V$ such that $g_1w,\ldots,g_nw$ are distinct vectors if and only if $\st_G(P_V) = J$.
\end{lemma}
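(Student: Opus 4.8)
The plan is to recast the distinctness of $g_1w,\dots,g_nw$ as a condition on which elements of $G$ fix the subspace $V$ pointwise, and then to recognise that set of elements as $\st_G(P_V)$.

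First I would record the basic facts. Since each $g\in G$ is orthogonal and $P_V$ is the orthogonal projection onto $V$, the equation $gP_V=P_V$ is equivalent to $gv=v$ for all $v\in V$; hence $\st_G(P_V)$ is precisely the pointwise stabiliser of $V$ in $G$. Because $V\leq\ran(P_J)$ and $\ran(P_J)$ is the fixed subspace of $J$, every element of $J$ fixes $V$ pointwise, so $J\leq\st_G(P_V)$ always. Also, as $\{g_1,\dots,g_n\}$ is a transversal of $G/J$, the elements $g_j^{-1}g_i$ with $i\neq j$ all lie outside $J$, because $g_j^{-1}g_i\in J$ is equivalent to $g_iJ=g_jJ$. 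I would then set $S=\{g_j^{-1}g_i:1\le i,j\le n,\ i\neq j\}$, noting $S\subseteq G\setminus J$.

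Next, applying the injection $g_j^{-1}$ shows that $g_1w,\dots,g_nw$ are pairwise distinct if and only if $sw\neq w$ for every $s\in S$, i.e. $w\notin\bigcup_{s\in S}\big(V\cap\ker(s-I)\big)$. Each $V\cap\ker(s-I)$ is a subspace of $V$ that equals $V$ exactly when $s\in\st_G(P_V)$. Invoking the standard fact that a vector space over the infinite field $\RR$ is not a finite union of proper subspaces, I would conclude that a suitable $w$ exists if and only if $V\cap\ker(s-I)\subsetneq V$ for every $s\in S$, equivalently $S\cap\st_G(P_V)=\emptyset$. Finally I would check $S\cap\st_G(P_V)=\emptyset\iff\st_G(P_V)=J$: the implication $(\Leftarrow)$ is immediate since $S\cap J=\emptyset$; for $(\Rightarrow)$ I would argue contrapositively---if $\st_G(P_V)\supsetneq J$, then, being a subgroup containing $J$, it is a union of at least two left cosets of $J$, one of which is $g_1J=J$ (as $g_1\in J$); choosing $k\neq 1$ with $g_k\in\st_G(P_V)$ gives $g_1^{-1}g_k\in S\cap\st_G(P_V)$.

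The argument is short, and the only non-elementary ingredient is the finite-union-of-subspaces fact. The step needing the most care is the reduction: making sure the transversal property is used correctly so that $S\subseteq G\setminus J$ (for the easy direction) and so that an explicit member of $S$ can be exhibited inside any stabiliser strictly larger than $J$ (for the hard direction).
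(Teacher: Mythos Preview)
Your proof is correct and shares the same core idea as the paper's: both hinge on the fact that a real vector space is not a finite union of proper subspaces. The executions differ in detail. The paper first simplifies ``$g_1w,\ldots,g_nw$ pairwise distinct'' to ``$g_iw\neq w$ for all $i\ge 2$'' (using $g_1\in J$ and $Jw=w$) and then passes through the images $g_iV$ together with a dimension argument to reach the condition $g_iV\neq V$ for $i\ge 2$. You instead work with the full set $S=\{g_j^{-1}g_i:i\neq j\}$ and the fixed-point subspaces $V\cap\ker(s-I)$, which are already the objects that detect membership in $\st_G(P_V)$. Your route is a bit more direct: it ties the finite-union step straight to the pointwise-stabiliser interpretation of $\st_G(P_V)$, whereas the paper takes a short detour through the setwise condition $g_iV=V$ before converting to $g_iP_V=P_V$. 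Both arguments ultimately use the same coset observation (that $\st_G(P_V)\supseteq J$ and is a union of left $J$-cosets) to finish.
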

\begin{proof}
First we note that if $w\in V$, then $g_1w,\ldots,g_nw$ are distinct iff $g_i w\neq w\forall i\in\{2,\ldots,n\}$; 
to see this, we prove the equivalent statement that $g_1w,\ldots,g_nw$ are not distinct iff $g_iw=w$ for some $i\in\{2,\ldots,n\}$. 
For the reverse implication, $g_iw=w$ is equivalently $g_i w=g_1w$, since $g_1\in J$ and $w\in\ran(P_J)$; 
$g_1w$ and $g_iw$ are thus not distinct. 
For the forward implication, suppose $g_iw=g_j w$ for some distinct $i,j\in\{1,\ldots,n\}$. 
Then there exists $k\in\{2,\ldots,n\}$ and $g\in J$ such that $g_kg = g_j^{-1}g_i$. 
We then have
\begin{align*}
g_iw &= g_jw \\
g_j^{-1}g_iw &= w \\
g_kgw &= w \\
g_kw &= w.
\end{align*}

We now prove the stated lemma. 
Define the vector subspaces
\[ V_i = \{v\in V: g_iv = v\}\forall i\in\{2,\ldots,n\}. \]
We have
\begin{align*}
\exists w\in V\mid g_iw\neq w\forall i\in\{2,\ldots,n\} 
&\Leftrightarrow 
\exists w\in V\mid w\notin V_i\forall i\in\{2,\ldots,n\} \\
&\Leftrightarrow 
\exists w\in V\setminus\bigcup_{i=2}^n V_i \\
&\Leftrightarrow 
V_i<V\forall i\in\{2,\ldots,n\} \\
&\Leftrightarrow 
g_i\notin\st(P_V)\forall i\in\{2,\ldots,n\} \\
&\Leftrightarrow 
\st(P_V)=J.
\end{align*}
\end{proof}

We now prove Thm.~\ref{thm:main}.

\begin{proof}[Proof of Thm.~\ref{thm:main}] 
\textbf{(b) } %
Suppose $\rho_{HK}^{\PZ}\in\ran(F)$. 
We first prove the forward implication. 
Suppose $f\in F^{-1}(\rho_{HK}^{\PZ})$. 
Then the canonical parameters of $f$ satisfy Eqs.~\ref{eq:gsnn2rho:W}-\ref{eq:gsnn2rho:c}, 
where the signed perm-rep $\rho$ in Lemma~\ref{lemma:gsnn2rho} satisfies $\rho\in \rho_{HK}^{\PZ}$. 
By an appropriate choice of fundamental domain $\Omega_{n_*}$, we can assume without loss of generality that $\rho = \rho_{HK}$. 
We proceed to prove the claimed expressions for the canonical parameters of $f$.

\textbf{Expression for $a_*$: } %
Regardless of its type, $\rho$ is transitive on $\{e_1,\ldots,e_{n_*}\}$, and thus so is $\pi$. 
Hence, by Eq.~\ref{eq:gsnn2rho:a}, $a_*$ is a constant vector. 
That $a\neq 0$ follows from the definition of the canonical parameter $a_*$ in Lemma~\ref{lemma:canonical}.

\textbf{Expression for $b_*$: } %
If $\rho$ is type 1, then it is an irreducible unsigned perm-rep and is transitive on $\{e_1,\ldots,e_{n_*}\}$. 
Thus, by Eq.~\ref{eq:gsnn2rho:b}, $b_*$ is a constant vector. 
On the other hand, if $\rho$ is type 2, then it is transitive on $\{\pm e_1,\ldots,\pm e_{n_*}\}$. 
In particular, for every $i=1,\ldots,n_*$, there exists $g\in G$ such that $\rho(g)e_i = -e_i$. 
Hence, $b_{*i} = -b_{*i}$ for every $i$, so that $b_*=0$.

\textbf{Expression for $W_*$: } %
By Prop.~\myref{prop:rho_HKz}{a}, the subgroup $K$ satisfies
\[ K = \{g\in G: \rho(g)e_1 = e_1\}. \]
Thus, by Eq~\ref{eq:gsnn2rho:W}, the first row $w^{\top}$ of $W_*$ satisfies $w^{\top} = w^{\top}g\forall g\in K$, 
or equivalently $gw = w\forall g\in K$. 
Thus, $w\in\ran(P_K)$.

In addition, if $\rho$ is type 2, then by Prop.~\ref{prop:rho_HKz}{a}, we have $hw = -w\forall h\in H\setminus K$. 
Given any choice of $h\in H\setminus K$, we have $hK = H\setminus K$. 
We thus have
\begin{align*}
hKw &= -w \\
hw &= -w \\
(2I+h)w &= w.
\end{align*}
Combining this with $w\in\ran(P_K)$, we have $w\in\ran(P_K\cap P_{2I+h})$. 
By Lemma~\ref{lemma:2I+h}, we obtain $w\in\ran(P_K-P_H)$. 
Combining the results for both types 1 and 2, we establish $w\in\ran(P_K-\tau P_H)$. 
That $\Vert w\Vert=1$ follows from the definition of the canonical parameter $W_*$ in Lemma~\ref{lemma:canonical}.

Now let $w_1^{\top},\ldots,w_{n_*}^{\top}$ be the rows of $W_*$. 
Since $G$ and $\rho(G)$ are both orthogonal, then $\rho(g_i^\top)e_i = e_1$, 
and hence the first row of $\rho(g_i^\top)W_*$ is $w_i^\top$. 
By Eq.~\ref{eq:gsnn2rho:W}, the first row of $W_*g_i^{\top}$ is $w_i^\top$ as well; 
thus, since $w_1=w$, then $w^\top g_i^\top = w_i^\top$, 
or equivalently $w_i = g_i w$.

\textbf{Expression for $c_*$: } %
We rewrite Eq.~\ref{eq:gsnn2rho:c} as
\[ (I-g)c_* = -\frac{a}{2}(I-g)W_*\vec{1}\forall g\in G, \]
where we have used Eq.~\ref{eq:main:a}. 
This is equivalently expressed as
\[ (I-P_G)c_* = -\frac{a}{2}(I-P_G)W_*\vec{1}. \]
We focus on the term
\[ (I-P_G)W_*\vec{1} = (I-P_G)(g_1+\ldots +g_n)w. \]
Since $P_G$ is an average over all $g\in G$, then $P_G g_i=P_G\forall i\in\{1,\ldots,n\}$. 
We thus have
\[ (I-P_G)W_*\vec{1} = (g_1+\ldots +g_n)w - n P_G w. \]
If $\rho$ is type 1 so that $w\in\ran(P_K)=\ran(P_H)$, then
\begin{align*}
(g_1+\ldots +g_n)w 
&= (g_1+\ldots +g_n)P_H w \\
&= n P_G w,
\end{align*}
so that $(I-P_G)W_*\vec{1} = 0$. 
On the other hand, if $\rho$ is type 2, then $hw=-w\forall h\in H\setminus K$; 
thus, $w$ cannot be fixed under all of $G$, so that $P_G w=0$ and hence
\[ (I-P_G)W_*\vec{1} = (g_1+\ldots +g_n)w = W_*\vec{1}. \]
Combining the results for both types 1 and 2, we have $(I-P_G)W_*\vec{1} = \tau W_1\vec{1}$ and thus
\[ (I-P_G)c_* = -\frac{1}{2}a\tau W_*\vec{1}. \]
Since we already know the right-hand side is in $\ran(I-P_G)$, then we obtain the expression for $c_*$ as claimed.

For the reverse implication, let $f$ be a $G$-SNN whose canonical parameters satisfy Eqs.~\ref{eq:main:W}-\ref{eq:main:c}. 
Then it is easy to see that the canonical parameters of $f$ also satisfy Eqs.~\ref{eq:gsnn2rho:W}-\ref{eq:gsnn2rho:c}. 
Lemma~\ref{lemma:gsnn2rho} thus implies $F(f) = \rho_{HK}^{\PZ}$.

\textbf{(a) } %
By part (b) of this theorem, $\rho_{HK}^{\PZ}\in\ran(F)$ iff there exists a $G$-SNN $f$ whose canonical parameters satisfy Eqs.~\ref{eq:main:W}-\ref{eq:main:c}. 
Without loss of generality, we assume $a=1$ in Eq.~\ref{eq:main:a} and $c=0$ in Eq.~\ref{eq:main:c}. 
Then $\rho_{HK}^{\PZ}\in\ran(F)$ iff there exists $[W_*\mid b_*]\in\Theta_{n_*}$ such that $W_*$ satisfies Eq.~\ref{eq:main:W} and $b_*$ satisfies Eq.~\ref{eq:main:b}. 
We now have two separate cases depending on the type of $\rho_{HK}$.

\textbf{Case 1: } %
Suppose $\rho_{HK}$ is type 1. 
Without loss of generality, we assume $b\neq 0$ in Eq.~\ref{eq:main:b}. 
Then $[W_*\mid b_*]$ has pairwise nonparallel rows iff $W_*$ has distinct rows. 
By Eq.~\ref{eq:main:W}, $\rho_{HK}^{\PZ}\in\ran(F)$ iff there exists $w\in\ran(P_K)$, such that $g_1w,\ldots,g_nw$ are distinct; 
note this $w$ is necessarily nonzero, and hence without loss of generality, we assume $\Vert w\Vert =1$. 
Noting that $H=K$ since $\rho_{HK}$ is type 1, 
and invoking Lemma~\ref{lemma:distinct} with $J=K$ and $V=\ran(P_K)$, we establish the claim.

\textbf{Case 2: } %
Suppose $\rho_{HK}$ is type 2. 
Then $b=0$ in Eq.~\ref{eq:main:b}, 
and $[W_*\mid b_*]$ has pairwise nonparallel rows iff so does $W_*$. 
Thus, by Eq.~\ref{eq:main:W}, $\rho_{HK}^{\PZ}\in\ran(F)$ iff there exists $w\in\ran(P_K-P_H)$ such that $g_1w,\ldots,g_nw$ are pairwise nonparallel; 
note this $w$ is necessarily nonzero, and hence without loss of generality we can assume $\Vert w\Vert=1$. 
For any $h\in H\setminus K$, we have $-g_iw = g_ihw = g_{-i}w\forall i\in\{1,\ldots,n\}$. 
Moreover, $\{g_{\pm 1},\ldots,g_{\pm n}\}$ is a transversal of $G/K$. 
Invoking Lemma~\ref{lemma:distinct} with $J=K$ and $V=\ran(P_K-P_H)$, we thus establish the claim.
\end{proof}

\section{Examples}
\label{appendix:examples}

\subsection{Irreducible architecture count}
\label{appendix:examples:count}

\begin{table}
\centering
\caption{\label{table:numbers} %
Ratio of the number of irreducible $G$-SNN architectures to the number of irreducible signed perm-reps of each type (1 vs. 2) for every group $G$, $|G|\leq 8$, up to isomorphism. 
The particular representations used for each group are described in the main text.
}
\begin{tabular}{ccc|ccc}
\toprule
$G$ & Type 1 & Type 2 & $G$ & Type 1 & Type 2 \\
\midrule
$C_2$ & 2/2 & 1/1 & $\{e\}$ & 1/1 & 0/0 \\
$C_3$ & 2/2 & 0/0 & $C_2^2$ & 4/5 & 3/6 \\
$C_4$ & 3/3 & 2/2 & $C_2^3$ & 8/16 & 7/35 \\
$C_5$ & 2/2 & 0/0 & $C_2\times C_4$ & 6/8 & 5/11 \\
$C_6$ & 4/4 & 2/2 & $D_3$ & 3/4 & 1/2 \\
$C_7$ & 2/2 & 0/0 & $D_4$ & 5/8 & 7/13 \\
$C_8$ & 4/4 & 3/3 & $Q_8$ & 6/6 & 7/7 \\
\bottomrule
\end{tabular}
\end{table}

Using our code implementation, we enumerated all irreducible $G$-SNN architectures for every group $G$, $|G|\leq 8$, up to isomorphism. 
For each group, we consider only one particular permutation representation defined as follows: 
First, let $[i_1,\ldots,i_n]$ denote the permutation on the orthonormal basis $\{e_1,\ldots,e_n\}$ where $e_j\rightarrow e_{i_j}$. 
We then represent the cyclic group $C_n$ by the set of cyclic permutations generated by $[2,\ldots,n,1]$, 
and we represent the dihedral group $D_n$ as the group generated by $C_n$ together with the reversing permutation $[n,n-1,\ldots,1]$. 
We represent the direct product of groups by the direct sum of the factor groups; 
e.g., if $G_1$ acts on $[1,\ldots,n_1]$ and $G_2$ acts on $[1,\ldots,n_2]$, then $G_1\times G_2$ acts on $[1,\ldots,n_1+n_2]$ with $G_1$ acting on the first $n_1$ elements and $G_2$ acting on the last $n_2$ elements. 
Finally, we represent the quaternian group $Q_8$ in terms of the following generators:
\begin{align*}
i &= [3, 4, 2, 1, 7, 8, 6, 5] \\
j &= [5, 6, 8, 7, 2, 1, 3, 4] \\
k &= [7, 8, 5, 6, 4, 3, 2, 1].
\end{align*}

For each group $G$, we report the ratio of the number of irreducible $G$-SNN architectures of each type to the number of irreducible signed perm reps of the respective type (Table~\ref{table:numbers}). 
We see that there are generally fewer type 2 architectures---which are the topologically nontrivial ones---than type 1 architectures, although the number of type 2 architectures is not negligible. 
We also observe that---especially for the direct products of groups---there is a large number of irreducible signed perm reps that do not satisfy the condition in Thm.~\myref{thm:main}{a}; 
this is likely because in the rejected architectures, some of the weight vectors are constrained such that the architecture is equivalent to a smaller architecture already enumerated. 
This trend also motivates the need for more intuition about the condition in Thm.~\myref{thm:main}{a}.

\subsection{The dihedral permutation group}
\label{appendix:examples:perm}

\begin{table}
\centering
\caption{\label{table:D6} %
Subgroups $K_j\leq H_i\leq G$ of the dihedral permutation group $G=D_6$ such that the irreducible signed perm rep $\rho_{H_i K_j}$ admits a corresponding irreducible $G$-SNN architecture named ``i.j'' in Figs.~\ref{fig:D6_perm:1}-\ref{fig:D6_perm:3}. 
In each row, $|H_i:K_0|=1$ and $|H_i:K_j|=2$ for $j\geq 1$. 
The generators $r$ and $t$ are defined in Eqs.~\ref{eq:r}-\ref{eq:t}.
}
\begin{tabular}{ccccc}
\toprule
\quad & $K_0$ & $K_1$ & $K_2$ & $K_3$ \\
\midrule
$H_0$ & $\brak{e}$ & & & \\
$H_1$ & $\brak{r^3}$ & $\brak{e}$ & & \\
$H_2$ & $\brak{t}$ & $\brak{e}$ & & \\
$H_3$ & $\brak{r^3t}$ & $\brak{e}$ & & \\
$H_4$ & $\brak{r^3, t}$ & $\brak{r^3}$ & $\brak{t}$ & $\brak{r^3t}$ \\
$H_5$ & $\brak{r^2, rt}$ & & & \\
$H_6$ & $D_6$ & $\brak{r^2, t}$ & & \\
\bottomrule
\end{tabular}
\end{table}

\begin{figure}
\centering
\begin{minipage}{\textwidth}
\begin{minipage}{0.5\textwidth}
\centering
Architecture 0.0

\includegraphics[width=0.5\textwidth]{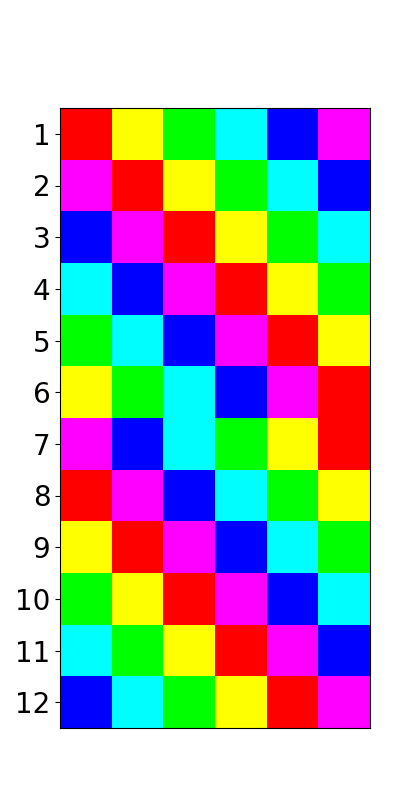}%
\includegraphics[width=0.5\textwidth]{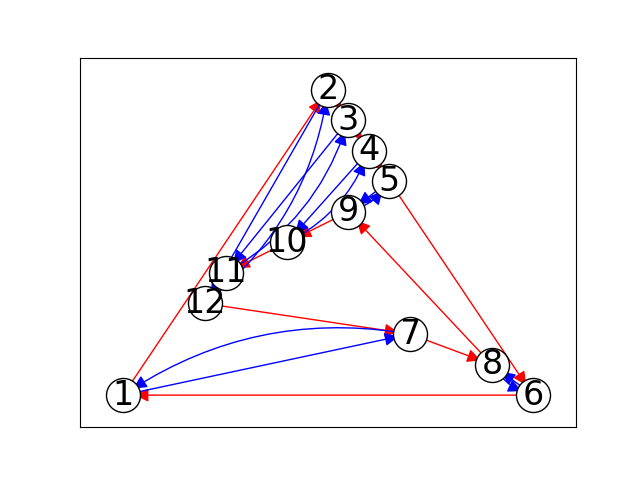}
\end{minipage}%
\begin{minipage}{0.5\textwidth}
\centering
Architecture 5.0

\includegraphics[width=0.5\textwidth]{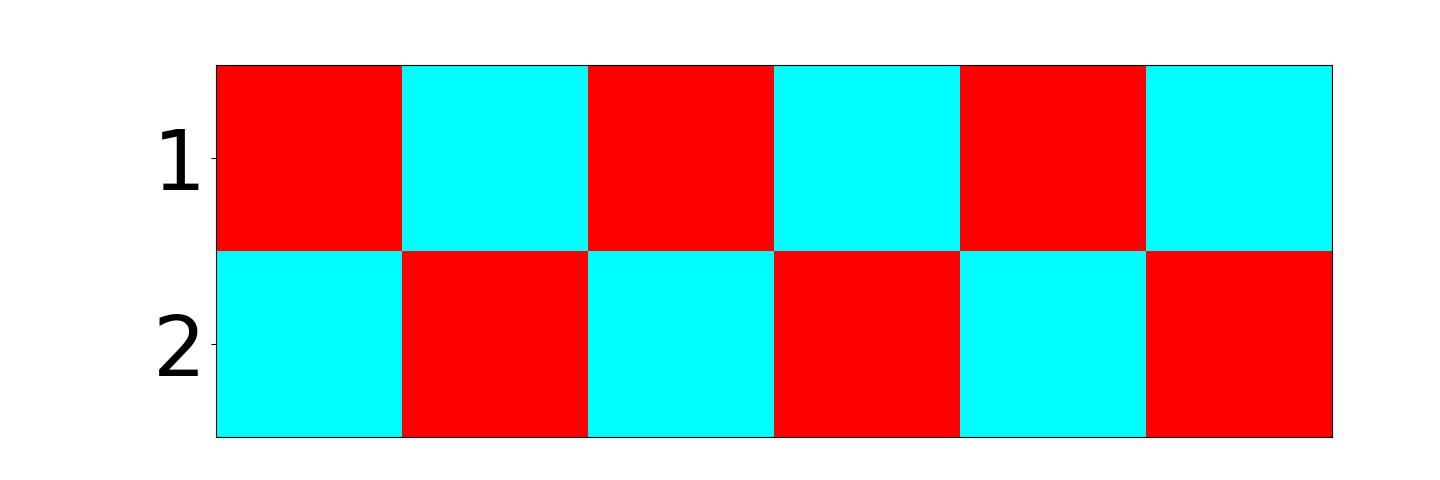}%
\includegraphics[width=0.5\textwidth]{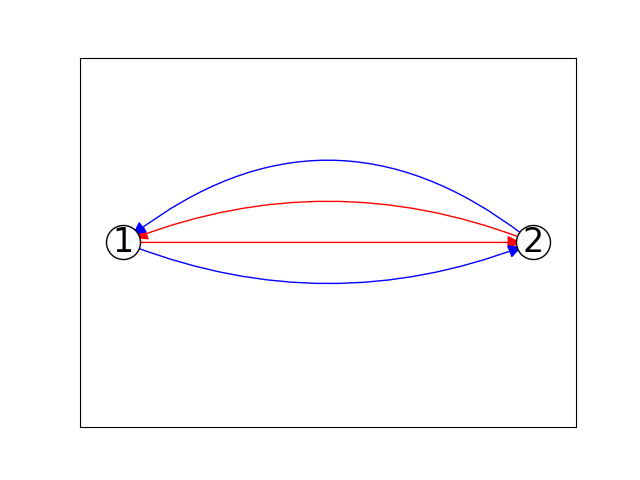}
\end{minipage}
\end{minipage}
\caption{\label{fig:D6_perm:1} %
Constraint patterns of the weight matrices and illustrations of the cohomology classes of two irreducible $G$-SNN architectures for the dihedral permutation group $G=D_6$. 
These are the only two architectures with no partnering type 2 architectures. 
Interpretation is the same as in Fig.~\ref{fig:C6_perm}. 
Red (resp. blue) arcs represent the action of the generator $r$ (resp. $t$) of $D_6$ (see Eqs.~\ref{eq:r}-\ref{eq:t}). 
See Table~\ref{table:D6} to interpret the names ``architecture i.j''.
}
\end{figure}

\begin{figure}
\centering
\begin{minipage}{\textwidth}
\begin{minipage}{0.5\textwidth}
\centering
Architecture 1.0

\includegraphics[width=0.5\textwidth]{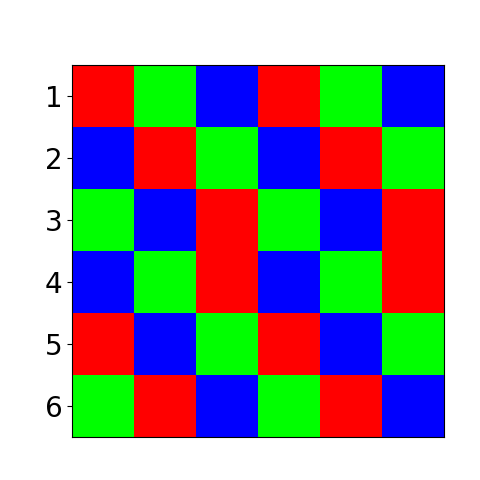}%
\includegraphics[width=0.5\textwidth]{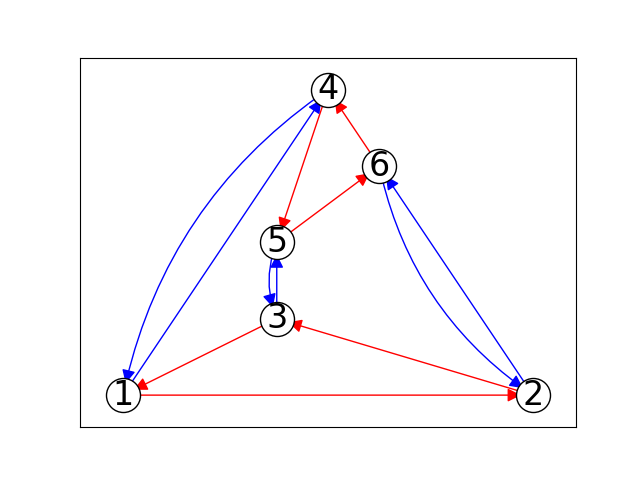}

Architecture 2.0

\includegraphics[width=0.5\textwidth]{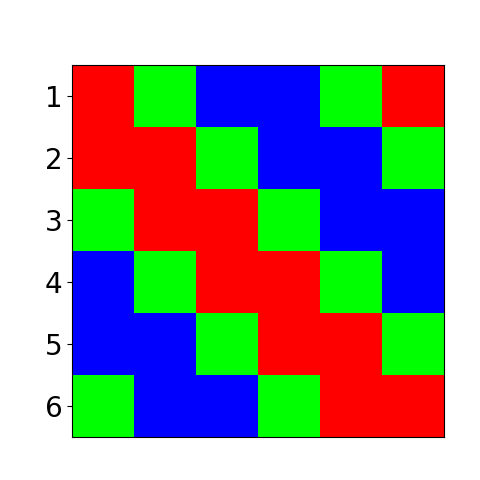}%
\includegraphics[width=0.5\textwidth]{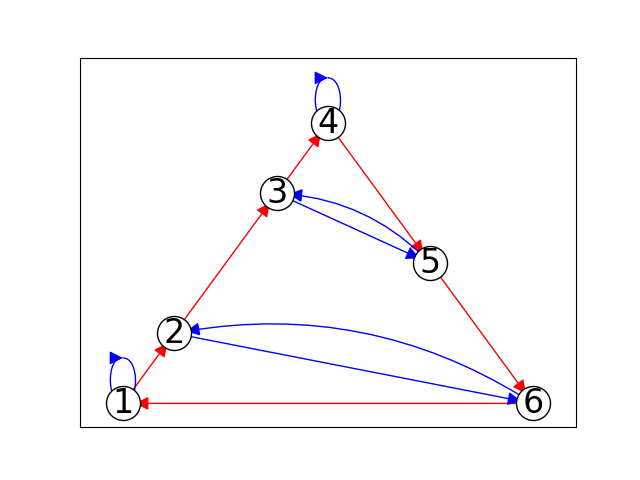}

Architecture 3.0

\includegraphics[width=0.5\textwidth]{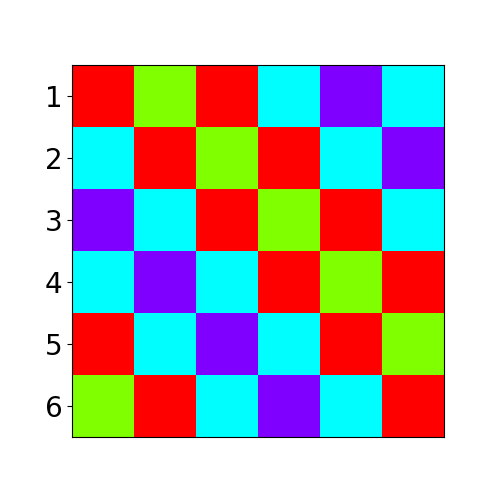}%
\includegraphics[width=0.5\textwidth]{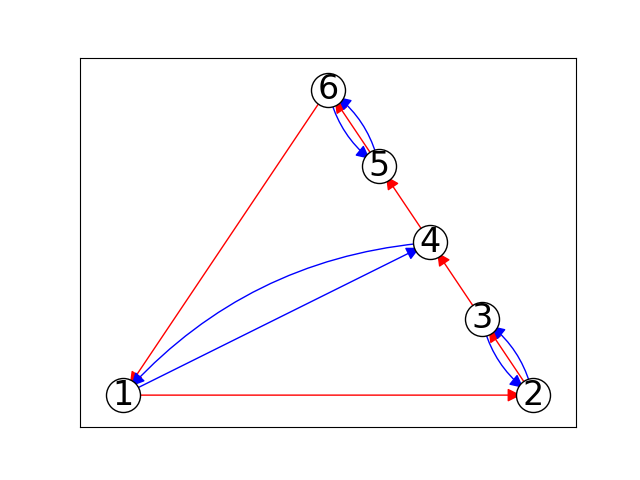}
\end{minipage}%
\begin{minipage}{0.5\textwidth}
\centering
Architecture 1.1

\includegraphics[width=0.5\textwidth]{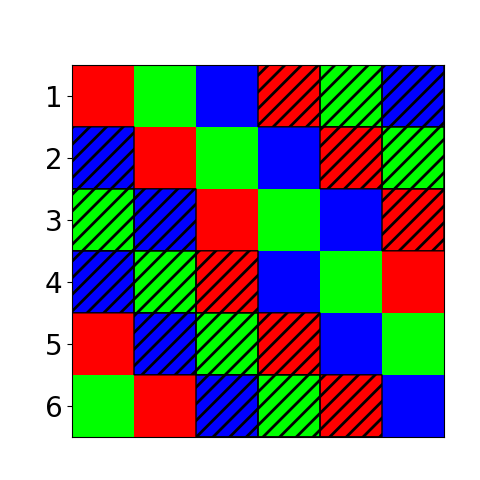}%
\includegraphics[width=0.5\textwidth]{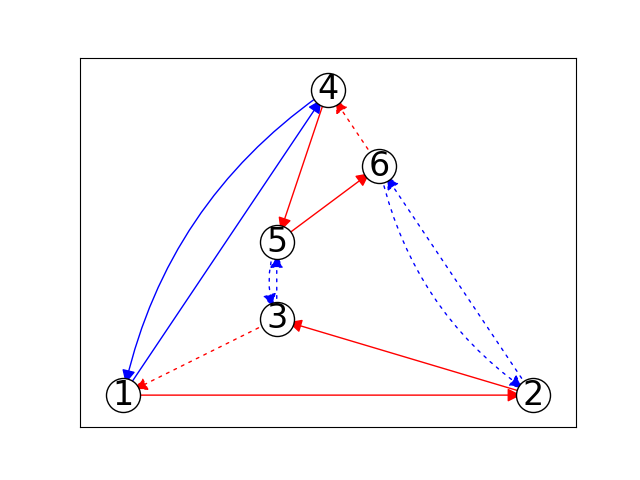}

Architecture 2.1

\includegraphics[width=0.5\textwidth]{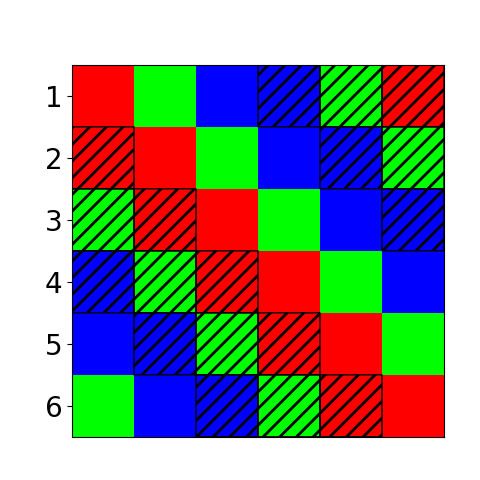}%
\includegraphics[width=0.5\textwidth]{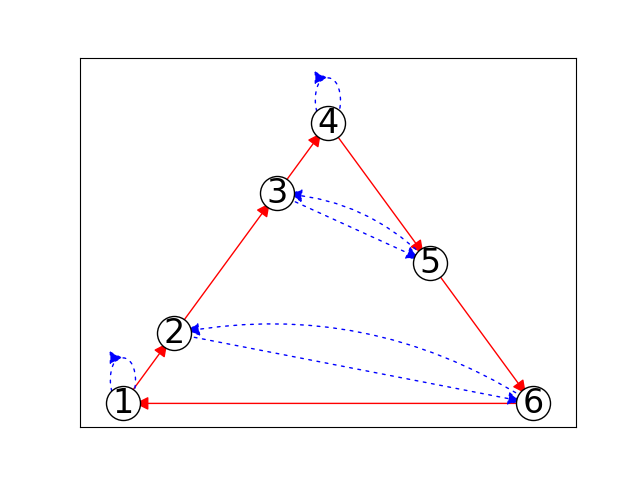}

Architecture 3.1

\includegraphics[width=0.5\textwidth]{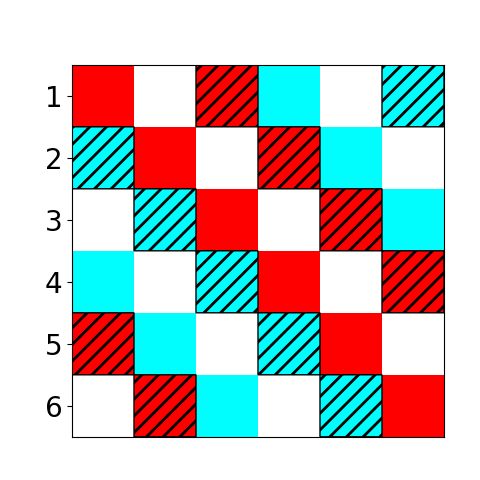}%
\includegraphics[width=0.5\textwidth]{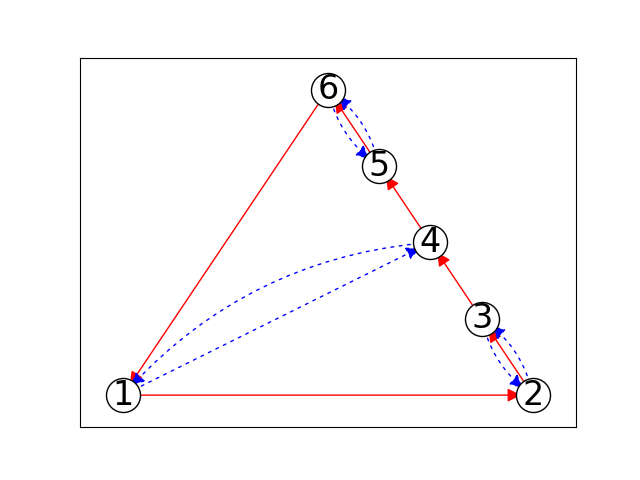}
\end{minipage}
\end{minipage}
\caption{\label{fig:D6_perm:2} %
Constraint patterns of the weight matrices and illustrations of the cohomology classes of the six-hidden-neuron irreducible $G$-SNN architectures for the dihedral permutation group $G=D_6$. 
Interpretation is the same as in Fig.~\ref{fig:C6_perm}. 
Weights colored white are constrained to equal zero. 
See Table~\ref{table:D6} to interpret the names ``architecture i.j''.
}
\end{figure}

\begin{figure}
\centering
\begin{minipage}{\textwidth}
\begin{minipage}{0.5\textwidth}
\centering
Architecture 4.0

\includegraphics[width=0.5\textwidth]{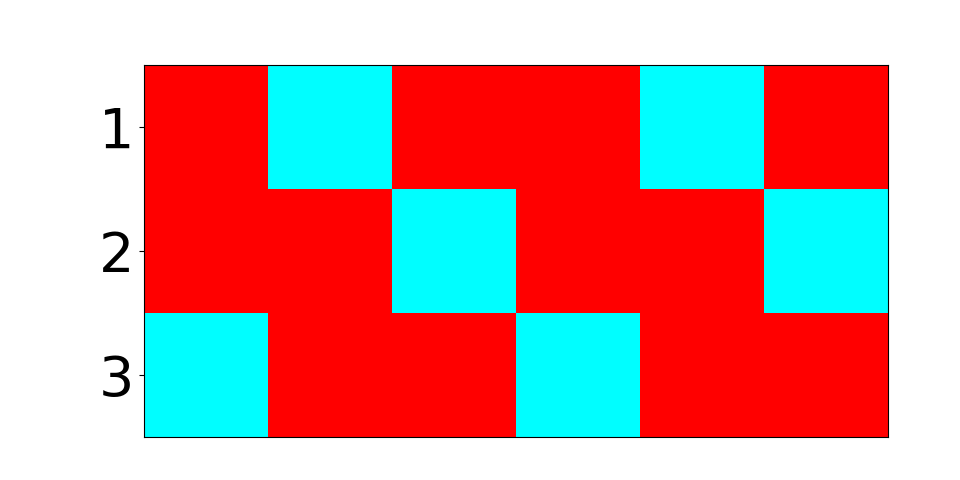}%
\includegraphics[width=0.5\textwidth]{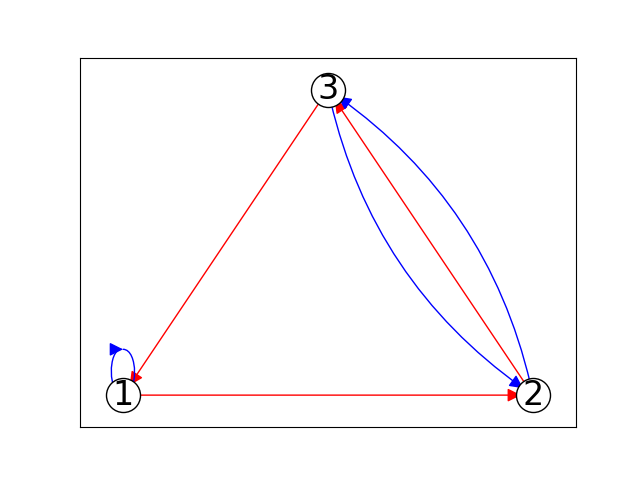}

Architecture 4.2

\includegraphics[width=0.5\textwidth]{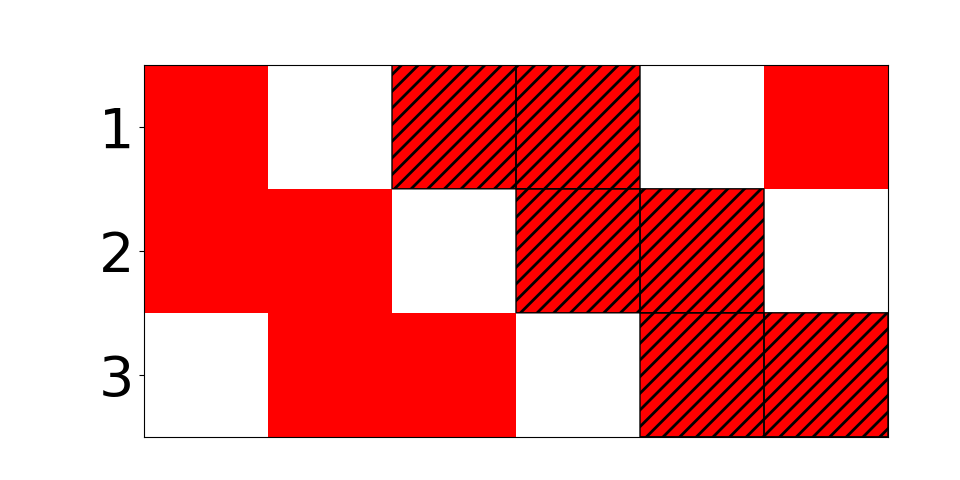}%
\includegraphics[width=0.5\textwidth]{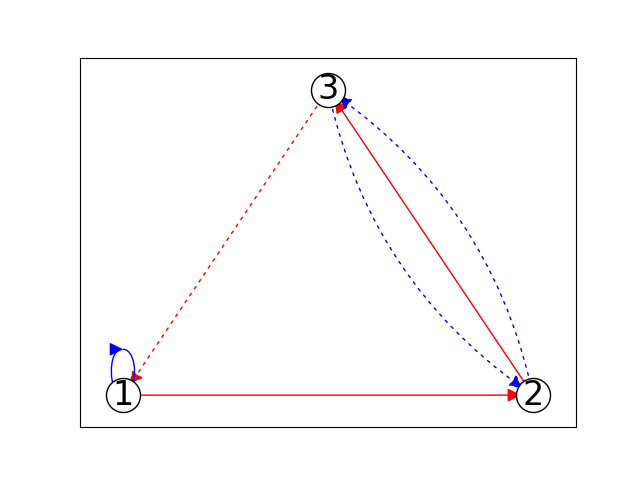}

Architecture 6.0

\includegraphics[width=0.5\textwidth]{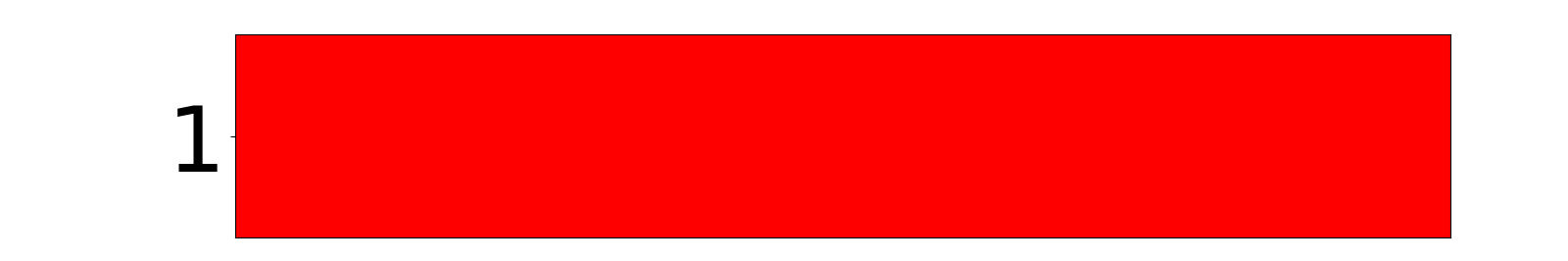}%
\includegraphics[width=0.5\textwidth]{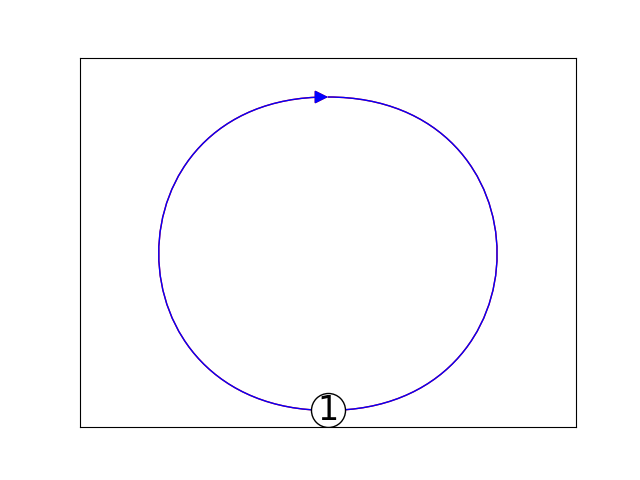}
\end{minipage}%
\begin{minipage}{0.5\textwidth}
\centering
Architecture 4.1

\includegraphics[width=0.5\textwidth]{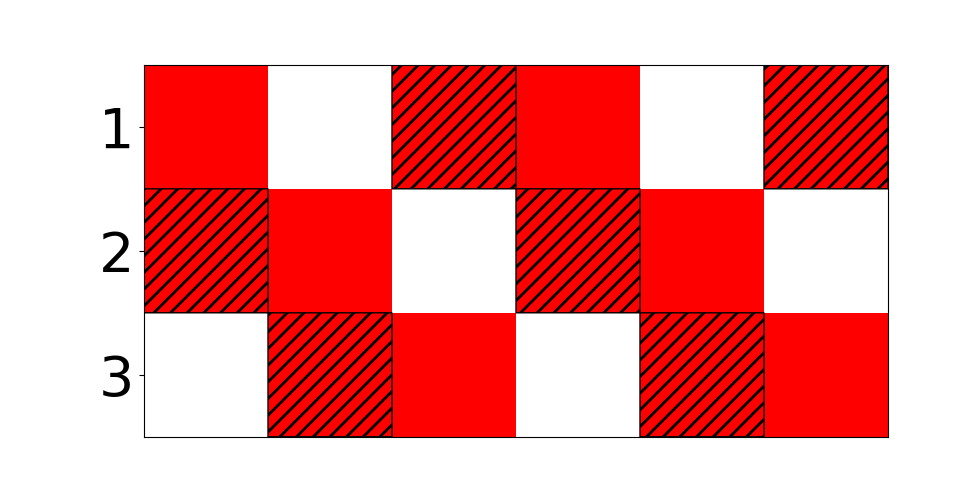}%
\includegraphics[width=0.5\textwidth]{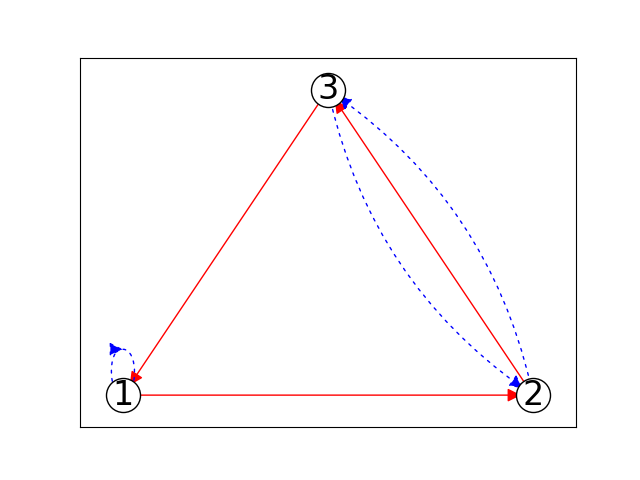}

Architecture 4.3

\includegraphics[width=0.5\textwidth]{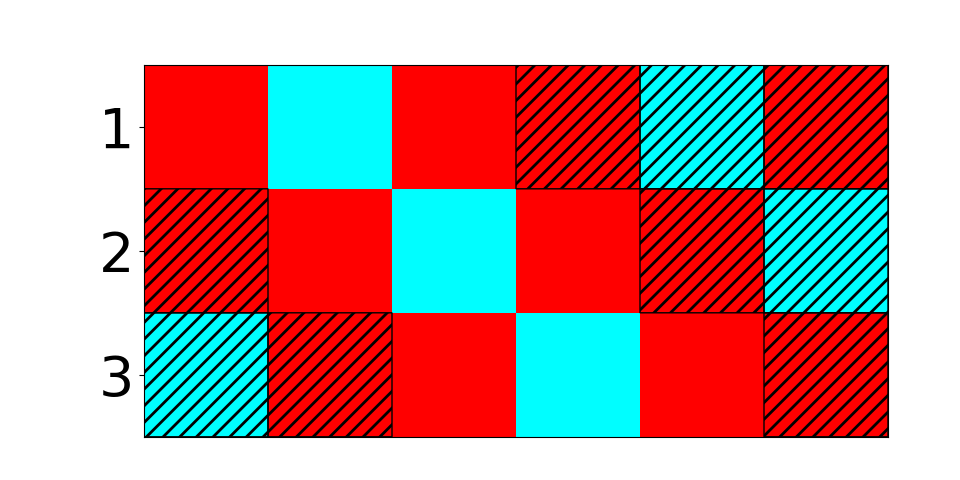}%
\includegraphics[width=0.5\textwidth]{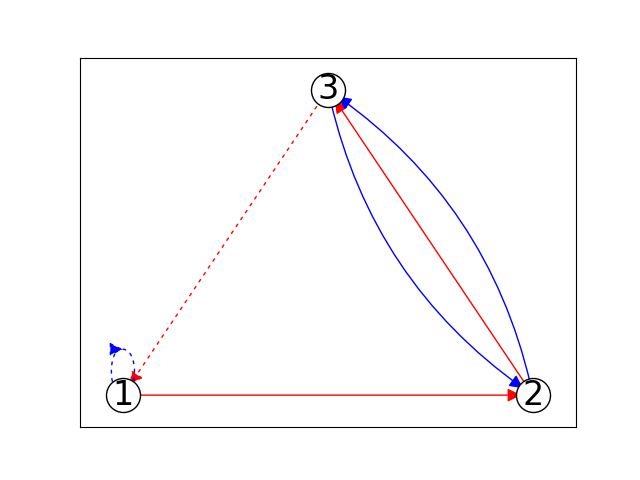}

Architecture 6.1

\includegraphics[width=0.5\textwidth]{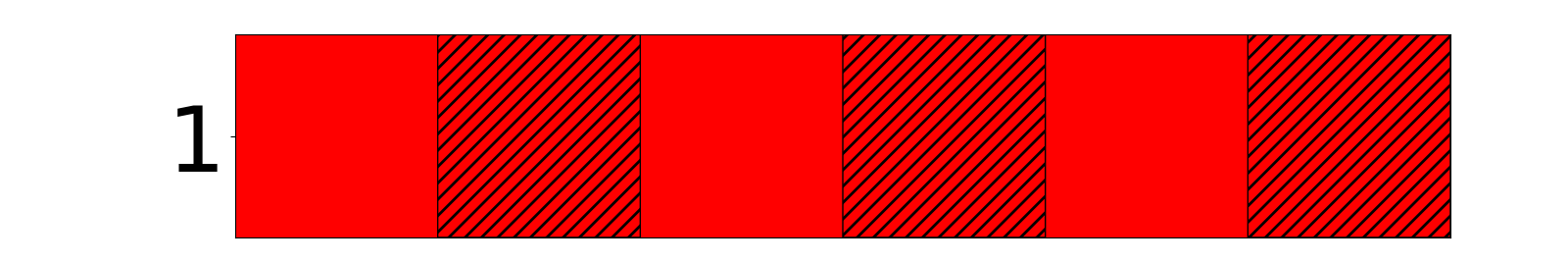}%
\includegraphics[width=0.5\textwidth]{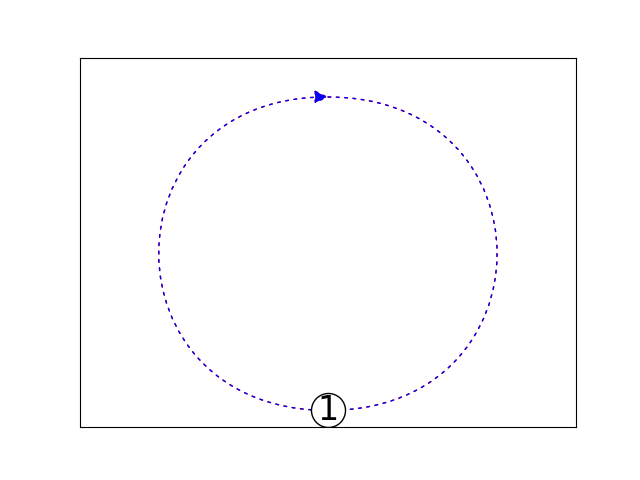}
\end{minipage}
\end{minipage}
\caption{\label{fig:D6_perm:3} %
Constraint patterns of the weight matrices and illustrations of the cohomology classes of the three-hidden-neuron and single-hidden-neuron irreducible $G$-SNN architectures for the dihedral permutation group $G=D_6$.
Interpretation is the same as in Fig.~\ref{fig:C6_perm}. 
See Table~\ref{table:D6} to interpret the names ``architecture i.j''.
}
\end{figure}

Consider the dihedral group $G=D_6$ of permutations generated by
\begin{align}
r &= [2, 3, 4, 5, 6, 1] \label{eq:r} \\
t &= [6, 5, 4, 3, 2, 1]. \label{eq:t}
\end{align}
There are 14 irreducible $G$-SNN architectures-- 7 of each type. 
We visualize their canonical weight matrices and corresponding cohomology classes in Figs.~\ref{fig:D6_perm:1}-\ref{fig:D6_perm:3}. 
Each architecture is named ``i.j'' where i and j index the subgroups $H$ and $K$ that are used to construct the architecture (see Thm.~\ref{thm:main}); 
Table~\ref{table:D6} lists these subgroups for each architecture.

In contrast to the cyclic permutation group (Sec.~\ref{sec:examples:perm}), the cohomology class illustrations for $G=D_6$ have arcs of two colors; 
red (resp. blue) arcs represent the action of the generator $r$ (resp. $t$). 
The existence of any loops with an odd number of dashed arcs indicates a nontrivial topology. 
For example, the four architectures 4.j with three hidden neurons correspond to the classes in $\mathcal{H}^1(G, M_{H_4})\cong C_2\times C_2$.

An important remark is that while there are only two architectures 6.j corresponding to the subgroup $H_6$, the corresponding cohomology group is $\mathcal{H}^1(G, M_{H_6})\cong C_2\times C_2$. 
Thus, there are two cohomology classes for which the corresponding signed perm reps failed the condition in Thm.~\myref{thm:main}{a}. 
It is thus not necessary for an irreducible architecture to exist for every cohomology class.

\begin{figure}
\centering
\includegraphics[width=0.8\textwidth]{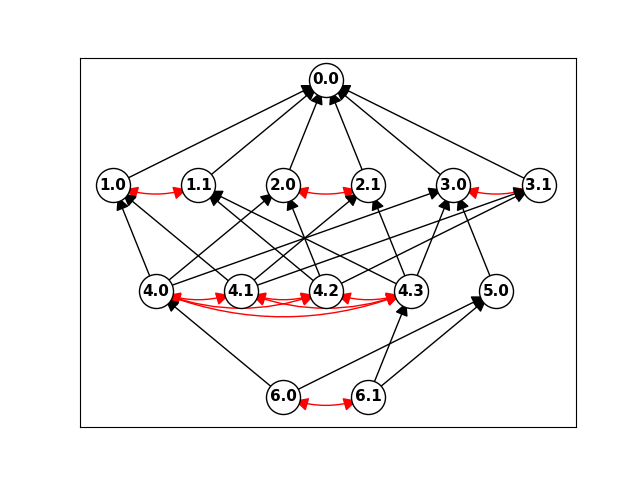}
\caption{\label{fig:D6_morphisms} %
Network morphisms between irreducible $G$-SNN architectures for the dihedral permutation group $G=D_6$. 
Every direct path in black represents an asymptotic inclusion. 
Red doubled-arrowed arcs represent the feasibility of topological tunneling.
}
\end{figure}

We also draw the network morphisms given by asymptotic inclusions between the irreducible architectures, as well as the shortcuts due to topological tunneling (Fig.~\ref{fig:D6_morphisms}); 
see Sec.~\ref{sec:remarks} for exposition on these concepts. 
As with Fig.~\ref{fig:C6_morphisms}, we determined the asymptotic inclusions manually by looking at the first rows of the weight matrices depicted in Figs.~\ref{fig:D6_perm:1}-\ref{fig:D6_perm:3} and observing how they nest. 
We obtain a $4$-partite topology on the architecture space, plus some topological ``tunnels'' between architectures belonging to a common cohomology ring.

\subsection{The dihedral rotation group}
\label{appendix:examples:rot}

\begin{figure}
\centering
\begin{tabular}{ccc}
\includegraphics[width=0.33\textwidth]{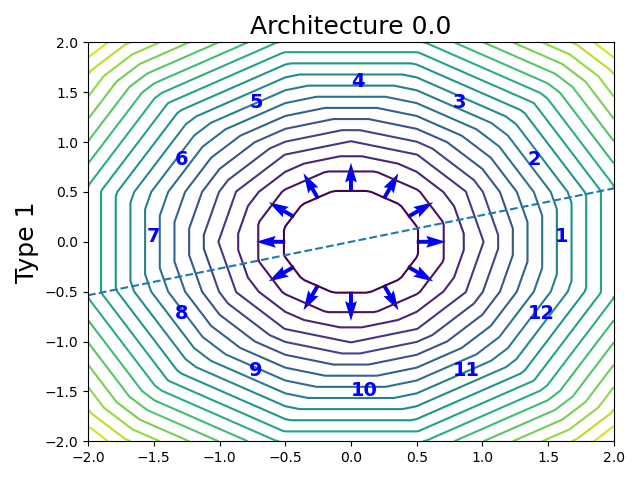} & %
\includegraphics[width=0.33\textwidth]{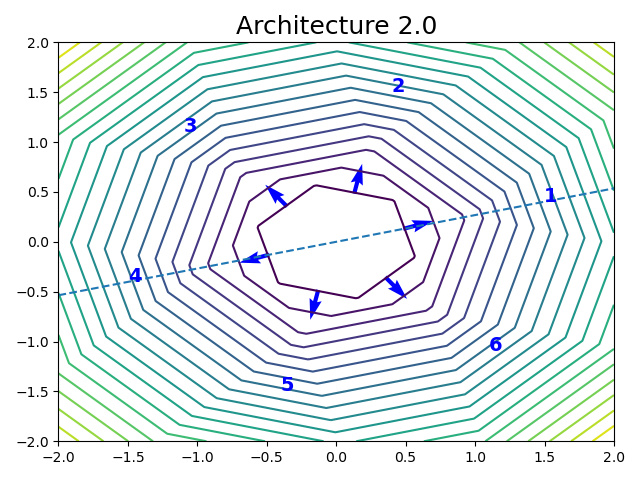} & %
\includegraphics[width=0.33\textwidth]{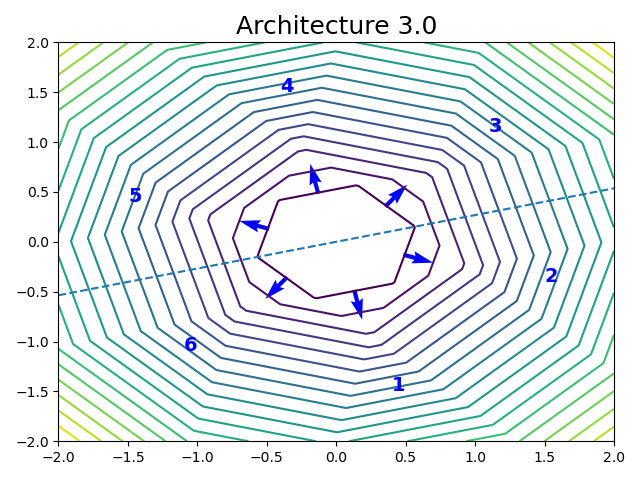} \\
\includegraphics[width=0.33\textwidth]{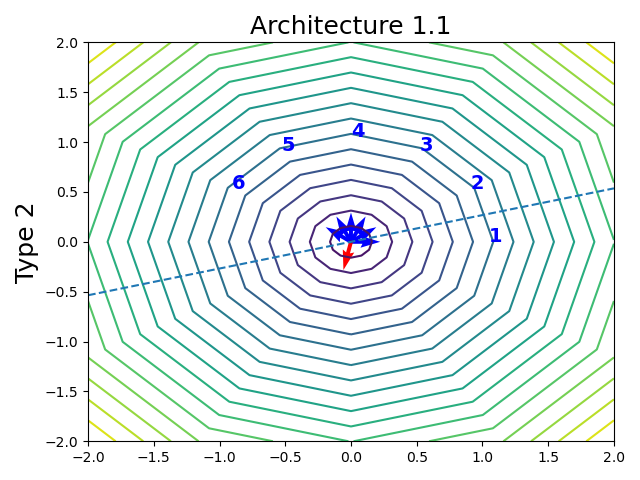} & %
\includegraphics[width=0.33\textwidth]{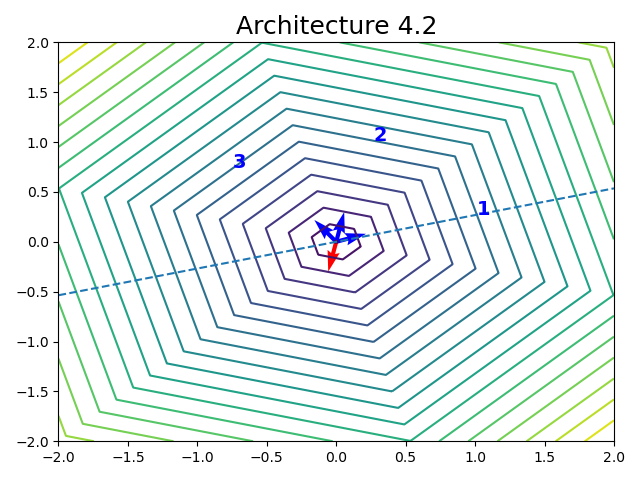} & %
\includegraphics[width=0.33\textwidth]{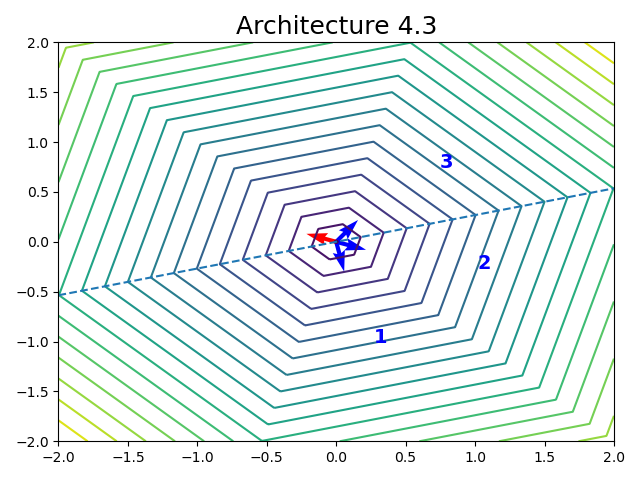}
\end{tabular}
\caption{\label{fig:D6_rot} %
Contour plots of the six irreducible $G$-SNN architectures for the 2D orthogonal representation of $G=D_6$. 
The architecture names are still based on Table~\ref{table:D6}, but the generator $r$ is now a $60^\circ$ rotation and $t$ is a reflection about the dashed black line. 
In architectures 0.0 and 1.1, weight vector 1 is unconstrained and was chosen arbitrarily for visualization. 
in the other four architectures, however, weight vector 1 is constrained to a 1D subspace as depicted. 
Interpretation of these plots is the same as in Fig.~\ref{fig:C6_rot}.
}
\end{figure}

As our final example, we consider a 2D orthogonal representation of the dihedral group $G=D_6$. 
In this representation, the generator $r$ is a $60^\circ$ counterclockwise rotation, and the generator $t$ is a reflection about the line $y=\tan(15^\circ)x$; 
we choose this line of reflection solely because it makes the example interesting. 
There are six irreducible $G$-SNN architectures for this group---three of each type---and we visualize there contour plots (Fig.~\ref{fig:D6_rot}). 
The level curves are clearly invariant under $60^\circ$ rotations and are symmetric about $y=\tan(15^\circ)x$. 
The architecture names are still based on Table~\ref{table:D6}, but the generators $r$ and $t$ are now 2D orthogonal transformations instead of permutations. 
Note that for architectures 0.0 and 1.1, the corresponding subgroup $K$ appearing in Thm.~\ref{thm:main} is $K=\brak{e}$; 
for architectures 2.0 and 4.2, $K=\brak{t}$; 
and for architectures 3.0 and 4.3, $K=\brak{r^3t}$, 
whence the columns in Fig.~\ref{fig:D6_rot}.

Each type 2 architecture in the second row of Fig.~\ref{fig:D6_rot} is asymptotically included in the type 1 architecture depicted above it. 
This is the same situation as with the cyclic rotation group in Sec.~\ref{sec:examples:rot}; 
each type 1 architecture approaches the corresponding type 2 architecture below it as its bias parameter $b_*$ tends to zero. 
In contrast to the cyclic rotation group, however, there are other architectures that are effectively confined from one another. 
The weight vectors of architecture 2.0 are orthogonal to those of architecture 3.0, 
and hence one architecture can reach the other only if it passes through a degenerate network with zero-valued weight vectors. 
By the same token, architectures 4.2 and 4.3 are topologically confined from one another, just as Prop.~\ref{prop:orthogonal} states.

Finally, we note that while we have architecture 1.1, there is no architecture 1.0; 
similarly, we have architectures 4.2 and 4.3 but not 4.0 and 4.1. 
This example thus demonstrates that the cohomology classes in a given cohomology ring for which the corresponding $G$-SNN architecture exists (i.e., satisfies the condition in Thm.~\myref{thm:main}{a}) need not form a subgroup, 
and this raises the question: 
Are there any discernible patterns in the set of irreducible $G$-SNN architectures (which satisfy Thm.~\myref{thm:main}{a}) as we vary $G$? 
We will investigate this further as part of future work.

\section{Remarks}
\label{appendix:remarks}

\subsection{Asymptotic inclusion}
\label{appendix:ai}

Let $\SNNirr(G)$ be the space of all irreducible $G$-SNNs equipped with the topology of uniform convergence on compact sets. 
Let $S^{m-1}_+$ denote a hemisphere of the $(m-1)$-dimensional unit sphere such that it has no antipodal pairs of points. 
Choose an ordering on $G$ so it has elements $g_1,\ldots,g_{|G|}$ with $g_1=1$, 
and define the map $\phi:\RR^{\neq 0}\times S^{m-1}_+\times \RR\times \ran(P_G)\times\RR\mapsto \SNNirr(G)$ by
\[ [\phi(a, w, b, c, d)](x) = a\vec{1}_{|G|}^{\top}\relu(Wx+b\vec{1}_{|G|}) + c^{\top}x + d\forall x\in \RR^m, \]
where the $|G|\times m$ weight matrix $W$ is defined as
\[ W = \sum_{i=1}^{|G|} e_i (g_i w)^{\top}. \]
We call this the \emph{unraveled parameterization} of $G$-SNNs, and it has the advantage that it has $|G|$ hidden neurons regardless of the associated signed perm-rep. 
We can easily transform it into the canonical parameterization as follows: 
Let $K\leq H\leq G$ be the largest subgroups such that $|H:K|\leq 2$ and $w\in P_K-\tau P_H$ where $\tau=|H:K|-1$; 
replace the parameter $a$ with $a|H|$; 
and finally, use $(a|H|, w, b, c, d)$ and Thm.~\myref{thm:main}{b} to construct the canonical form. 
This is ``rolling up'' the $G$-SNN so that only $|G|/|H|$ of the $|G|$ hidden neurons remain. 
Note that this procedure can be reversed, so that we can move back-and-forth between the unraveled and canonical parameterizations. 
As a consequence, it immediately follows that $\phi$ is a well-defined function, in the sense that it outputs a $G$-SNN that is indeed \emph{irreducible}, and is surjective.

Let $\Omega$ be a fundamental domain in $S^{m-1}_+$ under the action of $G$ 
(note that $S^{m-1}_+\subset \RR^m$), 
and let $\phi\mid_{\Omega}$ denote the restriction of $\phi$ to $\RR^{\neq 0}\times\Omega\times\RR\times\ran(P_G)\times\RR$. 
Then $\phi\mid_{\Omega}$ is injective as well and thus a bijection; 
indeed, without this restriction, $w$ and $gw$ for any $g\in G$ would both generate the same weight matrix $W$ in the unraveled parameterization up to the order of its rows, 
and the restriction to a fundamental domain breaks this redundancy. 

The following lemma will help us prove Thm.~\ref{thm:ai}.

\begin{lemma} \label{lemma:ai}
We have:
\begin{enumerate}[label={(\alph*)}]
\item \label{lemma:ai:a}
$\phi$ is a continuous function. 
\item \label{lemma:ai:b}
Let $\{f_n\in\SNNirr(G)\}_{n=1}^{\infty}$ be a sequence such that $f_n\rightarrow f\in\SNNirr(G)$ in the topology of $\SNNirr(G)$. 
Let $(a_n, w_n, b_n, c_n, d_n)=\left(\phi\mid_{\Omega}\right)^{-1}(f_n)$ for each $n$ and similar for $f$. 
Then there exists $g\in G$ such that $(gw_n, b_n)\rightarrow \pm (w, b)$.
\end{enumerate}
\end{lemma}
\begin{proof}
\textbf{(a) } %
Let $\{\theta_n\in\dom(\phi)\}_{n=1}^{\infty}$ be a convergent sequence with limit $\theta\in\dom(\phi)$. 
For each $n$, let $f_n = \phi(\theta_n)$, and let $f=\phi(\theta)$. 
For each $x\in\RR^m$, the function $\phi(\cdot)(x):\dom(\phi)\mapsto\RR$ is continuous 
and hence $f_n\rightarrow f$ pointwise over the entire domain $\RR^m$.

Now since $G$-SNNs are piecewise-linear functions and $\{\theta_n\}_{n=1}^{\infty}$ has a finite limit $\theta$, 
then clearly the derivatives of the $f_n$ are uniformly bounded, 
so that in particular $\{f_n\}_{n=1}^{\infty}$ is a sequence of  equicontinuous functions. 
By the Arzel\`{a}-Ascoli Theorem, $f_n\rightarrow f$ uniformly on every compact set; 
i.e., $f_n\rightarrow f$ in the topology of $\SNNirr(G)$, 
thus establishing the continuity of $\phi$.

\textbf{(b) } %
Since $a\neq 0$, then $f$ is nonlinear. 
For $f_n$ to converge to a nonlinear function, at least one hyperplane on which $f_n$ is non-differentiable must converge to a hyperplane on which $f$ is non-differentiable. 
Thus, there exist $g_1,g_2\in G$ such that $(g_1w_n, b_n)\rightarrow \pm (g_2w, b)$. 
Equivalently, there exists $g\in G$ such that $(gw_n, b_n)\rightarrow \pm (w, b)$.
\end{proof}

We now prove Thm.~\ref{thm:ai}.

\begin{proof}[Proof of Thm.~\ref{thm:ai}] 
For the forward implication, suppose $f^{\PZ}_2\hookrightarrow f^{\PZ}_1$. 
Let $w\in\ran(P_{K_2}-\tau_2 P_{H_2})$ where $\tau_2=|H_2:K_2|-1$. 
Then there exists $g_2\in G$ such that $\pm g_2w\in\ran(P_{K_2}-\tau_2 P_{H_2})\cap\Omega$; 
without loss of generality, we assume $g_2 w\in\ran(P_{K_2}-\tau_2 P_{H_2})\cap\Omega$. 
Now, there exists $f\in f_2^{\PZ}$ with top weight vector $g_2 w$. 
Since $f_2^{\PZ}\hookrightarrow f_1^{\PZ}$, then there exists $\{f_n\in f_1^{\PZ}\}_{n=1}^{\infty}$ such that $f_n\rightarrow f$ in the topology of $\SNNirr(G)$. 
By Lemma~\myref{lemma:ai}{b}, there exists $g_1\in G$ such that $(g_1w_n, b_n)\rightarrow \pm (g_2w, b)$, 
where $w_n$ and $b_n$ are the weight and bias parameters of $f_n$ and $b$ is the bias parameter of $f$ respectively. 
Thus, there exists $g\in G$ such that $\pm gw_n\rightarrow w$. 
Since $w_n\in\ran(P_{K_1}-\tau_1 P_{H_1})$ where $\tau_1=|H_1:K_1|-1$, then $\pm gw_n\in\ran(gP_{H_1}g^{-1}-\tau_1 gP_{H_1}g^{-1})$. 
Letting $(H, K) = g(H_1, K_1)g^{-1}$, we have $w_n\in\ran(P_K-\tau P_H)$ where $\tau=|H:K|-1$. 
We thus establish that $\ran(P_{H_2}-\tau_2 P_{H_2})\subseteq\ran(P_K-\tau P_H)$.

The space $\ran(P_{K_2}-\tau_2 P_{H_2})$ is thus in particular fixed pointwise by every element of $K$, so that $K\leq \st_G(P_{K_2}-\tau_2 P_{H_2})$. 
By Thm.~\myref{thm:main}{a}, however, $\st_G(P_{K_2}-\tau_2 P_{H_2}) = K_2$, so that $K\leq K_2$. 
In the case $f_1^{\PZ}$ is type 1, we have $H=K\leq K_2\leq H_2$ and thus $H\cap K_2=K$, and hence we are done. 
Suppose instead $f_1^{\PZ}$ is type 2. 
Then $f_2^{\PZ}$ must be type 2 as well; 
if it were type 1, then we could set its bias parameter $b$ to be nonzero, and $f_1^{\PZ}$ would be unable to reach it asymptotically as its own bias is constrained to $b=0$. 
With both $f_1^{\PZ}$ and $f_2^{\PZ}$ type 2, we have $\ran(P_{K_2}-P_{H_2})\subseteq\ran(P_K-P_H)$. 
In particular, for any $h\in H\setminus K$, we must have
\[ h(P_{K_2}-P_{H_2}) = -(P_{K_2}-P_{H_2}). \]
However, for the rows of the canonical weight matrix of any $f\in f_2^{\PZ}$ to be pairwise nonparallel, we must have $g(P_{K_2}-P_{H_2}) = -(P_{K_2}-P_{H_2})$ implies $g\in H_2$. 
Hence, $H\setminus K\subseteq H_2$. 
Combining this with $K\leq K_2<H_2$, we obtain $H\leq H_2$. 
Finally, since $\ran(P_{K_2}-P_{H_2})$ must be fixed under each element of $K_2$ but \emph{not} fixed under each element of $H\setminus K$, then we must have $(H\setminus K)\cap K_2=\emptyset$, from which we conclude $H\cap K_2=K$.

For the reverse implication, suppose there exists $(H, K)\in (H_1, K_1)^G$ such that $H\leq H_2$, $K\leq K_2$, and $H\cap K_2=K$. 
Without loss of generality, let $(H, K) = (H_1, K_1)$. 
Let $f\in f_2^{\PZ}$ an $(a, w, b, c, d)=\left(\phi\mid_{\Omega}\right)^{-1}(f)$. 
Define the sequence $\{f_n\in f_1^{\PZ}\}_{n=1}^{\infty}$ and $(a_n, w_n, b_n, c_n, d_n)=\left(\phi\mid_{\Omega}\right)^{-1}(f_n)$, 
where we set $a_n=a$, $c_n=c$, and $d_n=d$ for all $n$. 
We want to show the existence of $w_n$ and $b_n$ such that $w_n\rightarrow w$ and $b_n\rightarrow b$; 
Lemma~\myref{lemma:ai}{a} will then give us the desired result.

Suppose $f_2^{\PZ}$ is type 1. 
Since $K_1\leq K_2=H_2$ and $H_1\leq H_2$, then $K_1\leq H_1\leq K_2$ and hence $H_1\cap K_2=H_1$. 
On the other hand, since $H_1\cap K_2=K_1$, then $H_1=K_1$ so that $f_1^{\PZ}$ is type 1 as well. 
In this case, we set $b_n=b$ for all $n$, and we have $w_n\in\ran(P_{K_1})$ and $w\in\ran(P_{K_2})\subseteq\ran(P_{K_1})$, 
thus establishing the existence of a sequence $w_n\rightarrow w$. 
On the other hand, suppose $f_2^{\PZ}$ is type 2. 
Then $b=0$, and we set $b_n=0$ for all $n$ if $f_1^{\PZ}$ is type 2 or $b_n\rightarrow 0$ if $f_1^{\PZ}$ is type 1. 
From the hypotheses, it is easy to verify that
\[ \ran(P_{K_2}-P_{H_2})\subseteq \ran(P_{K_1}-P_{H_1})\subseteq \ran(P_{K_1}). \]
It follows that regardless of the type of $f_1^{\PZ}$, a sequence $w_n\rightarrow w$ exists, thereby establishing the claim.
\end{proof}

\subsection{Topological confinement}
\label{appendix:topological}

We prove Prop.~\ref{prop:orthogonal}, which states that non-cohomologous irreducible $G$-SNN architectures are in a sense orthogonal.

\begin{proof}[Proof of Prop.~\ref{prop:orthogonal}] %
For $i=1,2$, let $\tau_i = |H:K_i|$. 
Then by Eq.~\ref{eq:main:W}, we have the constraints $w_i\in\ran(P_{K_i}-\tau_iP_H)$. 
If one of the $K_i$ equals $H$, say $K_1=H$, then in particular we have the constraints $P_Hw_1 = w_1$ and $(P_{K_2}-P_H)w_2 = w_2$. 
We thus have
\begin{align*}
w_1^{\top}w_2 
&= w_1^{\top}P_H^{\top}(P_{K_2}-P_H)w_2 \\
&= w_1^{\top}P_H(P_{K_2}-P_H)w_2 \\
&= w_1^{\top}(P_HP_{K_2}-P_H)w_2 \\
&= 0,
\end{align*}
where the last step holds because $K_2\leq H$ and hence $P_HP_{K_2}=P_H$.

On the other hand, suppose $K_1$ and $K_2$ are both proper subgroups of $H$. 
Since $P_{K_i}w_i = w_i$, then
\[ w_1^{\top}w_2 = w_1^{\top}P_{K_1}^{\top}P_{K_2}w_2. \]
Let $K = K_1\cap K_2$. 
It is well-known that because $K_1$ and $K_2$ are distinct index-2 subgroups of $H$, then $K$ is an index-4 subgroup of $H$ and
\[ H/K = \{K, k_1K, k_2K, hK\}, \]
where $k_i\in K_i\setminus K$ for $i=1,2$ and $h\in H\setminus (K_1\cup K_2)$. 
We thus have
\begin{align*}
P_{K_1}^{\top}P_{K_2} 
&= \left(\frac{1}{|K_1|}\sum_{k\in K_1} k\right)^{\top}\left(\frac{1}{|K_2|}\sum_{k\in K_2}k\right) \\
&= \frac{1}{|K_1|^2}\left(\sum_{k\in K}k + k_1\sum_{k\in K}k\right)^{\top}\left(\sum_{k\in K}k + k_2\sum_{k\in K}k\right) \\
&= \frac{|K|^2}{|K_1|^2}(P_K+k_1P_K)^{\top}(P_K+k_2P_K) \\
&= \frac{1}{4}P_K^{\top}(I + k_1^{\top})(I + k_2)P_K
&= \frac{1}{4}P_K^{\top}(I + k_1^{\top} + k_2 + k_1^{\top}k_2)P_K.
\end{align*}
Since $P_Kw_i = w_i$ for both $i=1,2$, then we have
\begin{align*}
w_1^{\top}w_2 
&= \frac{1}{4}w_1^{\top}P_K^{\top}(I + k_1^{\top} + k_2 + k_1^{\top}k_2)P_Kw_2 \\
&= \frac{1}{4}w_1^{\top}(I + k_1^{\top} + k_2 + k_1^{\top}k_2)w_2.
\end{align*}
Since $k_1\in K_1$, then $k_1w_1 = w_1$, and hence $w_1^{\top}k_1^{\top}w_2 = w_1^{\top}w_2$. 
On the other hand, since $k_1^{\top}\in H\setminus K_2$, then $k_1^{\top}w_2 = -w_2$ so that $w_1^{\top}k_1^{\top}w_2 = -w_1^{\top}w_2$, 
thus implying $w_1^{\top}k_1^{\top}w_2 = 0$. 
We can similarly show that $w_1^{\top}k_2w_2 = 0$. 
This leaves
\begin{align*}
w_1^{\top}w_2 
&= \frac{1}{4}w_1^{\top}(I + k_1^{\top}k_2)w_2 \\
&= \frac{1}{4}(w_1^{\top}w_2 + w_1^{\top}w_2) \\
&= \frac{1}{2}w_1^{\top}w_2,
\end{align*}
implying $w_1^{\top}w_2 = 0$.
\end{proof}

\end{document}